\definecolor{Gray}{gray}{0.9}
\definecolor{codeblue}{rgb}{0.25,0.5,0.5}
\theoremstyle{plain}
\newtheorem{theorem}{Theorem}[section]
\newtheorem{proposition}[theorem]{Proposition}
\newtheorem{lemma}[theorem]{Lemma}
\newtheorem{corollary}[theorem]{Corollary}
\theoremstyle{definition}
\newtheorem{definition}[theorem]{Definition}
\newtheorem{assumption}[theorem]{Assumption}
\theoremstyle{remark}
\icmltitlerunning{Weakly-Supervised Contrastive Learning for Imprecise Class Labels}
\begin{document}

\twocolumn[
\icmltitle{Weakly-Supervised Contrastive Learning for Imprecise Class Labels}



\icmlsetsymbol{equal}{*}

\begin{icmlauthorlist}
\icmlauthor{Zi-Hao Zhou}{equal,y1,y2}
\icmlauthor{Jun-Jie Wang}{equal,y1,y2}
\icmlauthor{Tong Wei}{y1,y2}
\icmlauthor{Min-Ling Zhang}{y1,y2}
\end{icmlauthorlist}

\icmlaffiliation{y1}{School of Computer Science and Engineering, Southeast
University, Nanjing, China}
\icmlaffiliation{y2}{Key Laboratory of Computer Network and Information Integration (Southeast University), Ministry of Education, China}

\icmlcorrespondingauthor{Tong Wei}{weit@seu.edu.cn}

\icmlkeywords{Machine Learning, ICML}

\vskip 0.3in
]



\printAffiliationsAndNotice{\icmlEqualContribution} 

\begin{abstract}
Contrastive learning has achieved remarkable success in learning effective representations, with supervised contrastive learning often outperforming self-supervised approaches. However, in real-world scenarios, data annotations are often ambiguous or inaccurate, meaning that class labels may not reliably indicate whether two examples belong to the same class. This limitation restricts the applicability of supervised contrastive learning. To address this challenge, we introduce the concept of ``continuous semantic similarity'' to define positive and negative pairs. Instead of directly relying on imprecise class labels, we measure the semantic similarity between example pairs, which quantifies how closely they belong to the same category by iteratively refining weak supervisory signals. Based on this concept, we propose a graph-theoretic framework for weakly-supervised contrastive learning, where semantic similarity serves as the graph weights. Our framework is highly versatile and can be applied to many weakly-supervised learning scenarios. We demonstrate its effectiveness through experiments in two common settings, i.e., noisy label and partial label learning, where existing methods can be easily integrated to significantly improve performance. Theoretically, we establish an error bound for our approach, showing that it can approximate supervised contrastive learning under mild conditions. The implementation code is available at \href{https://github.com/Speechless-10308/WSC}{https://github.com/Speechless-10308/WSC}.

\end{abstract}

\section{Introduction}

\label{submission}

In recent years, there has been a resurgence of research in contrastive learning, which has led to significant advancements in the field of representation learning \cite{he2020momentum, oord2018representation, Chen2020ASF, caron2020unsupervised, zbontar2021barlow, khosla2020supervised}. These works share a common underlying principle: they aim to pull an anchor and its corresponding ``positive" examples closer together in the embedding space, while simultaneously pushing the anchor away from a set of ``negative" examples. 
In self-supervised contrastive learning, positive examples are generated through various data augmentation techniques applied to the original samples, thereby creating different views or representations of the same data instance. Negative examples, on the other hand, are randomly selected from different samples, as illustrated in Figure \ref{subfig:self}. In contrast, fully-supervised contrastive learning utilizes additional supervisory information by treating samples from the same class as positive pairs, thereby constructing multiple positive examples \cite{khosla2020supervised}, as shown in Figure \ref{subfig:sup}.
However, real-world supervisory information is often inaccurate and ambiguous \cite{Lin2023ROBOT, pmlr-v139-zhang21n, wang2022pico, Zhang2022ExploitingCA, guo2020safe, guo2025robust}, manifesting as weakly-supervised information such as noisy labels \cite{noisy-e1, noisy-e2} and partial labels \cite{partial-e1, partial-e2, dong2023can}, which cannot directly indicate class membership between samples.
As a result, traditional methods fail to effectively utilize weak supervisory information, significantly limiting the applicability of supervised contrastive learning in real-world tasks.


To address the limitations of existing methods, we introduce a more general concept of positive and negative examples: continous semantic similarity. Samples with a higher likelihood of belonging to the same category are assigned higher semantic similarity, where values of 1 and 0 correspond to positive and negative pairs in traditional contrastive learning, respectively. This can be viewed as a continuous extension of the conventional positive-negative example framework.
Furthermore, as illustrated in Figure \ref{subfig:weak}, we extend the contrastive learning objective to align the feature similarity of two samples with their corresponding semantic similarity.

\begin{figure*}
    \centering
        \centering
        \subfigure[Self-supervised Contrastive] 
        {
            \begin{minipage}[b]{.3\linewidth}
                \centering
                \includegraphics[width=1.0\linewidth]{./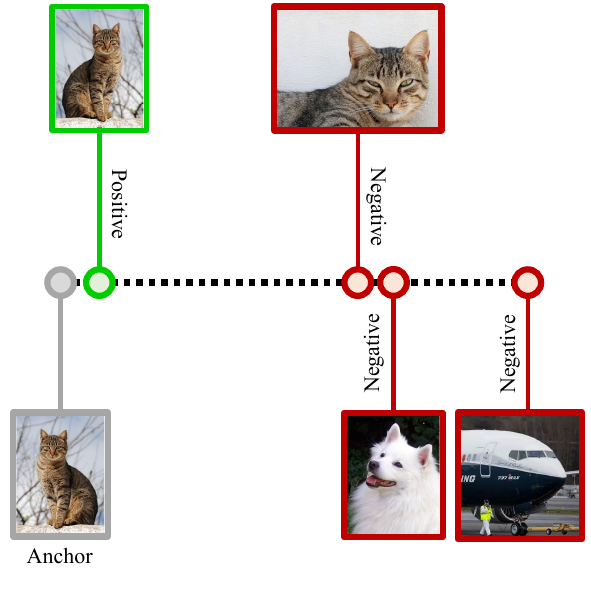}
                \label{subfig:self}
            \end{minipage}
        }
        \subfigure[Supervised Contrastive]
        {
            \begin{minipage}[b]{.3\linewidth}
                \centering
                \includegraphics[width=1.0\linewidth]{./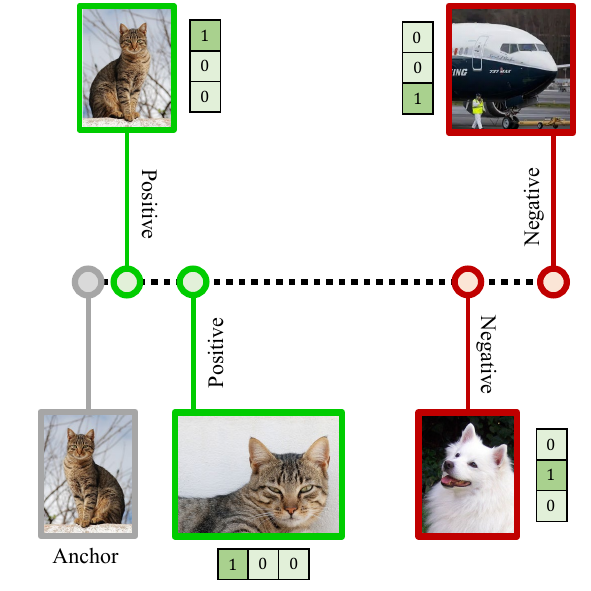}
                \label{subfig:sup}
            \end{minipage}
        }
        \subfigure[Weakly-supervised Contrastive]
        {
            \begin{minipage}[b]{.3\linewidth}
                \centering
                \includegraphics[width=1.0\linewidth]{./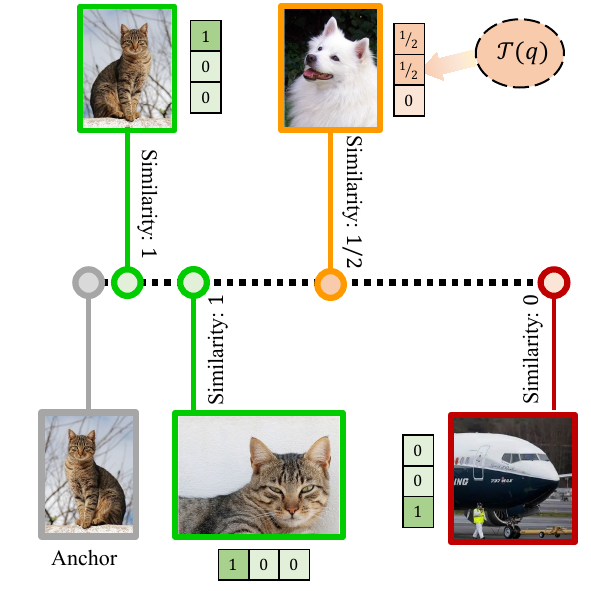}
                \label{subfig:weak}
            \end{minipage}
        }
    \caption{(a) Self-supervised contrastive learning constructs positive example pairs by using different views of the same image and constructs negative example pairs by using different images. (b) Supervised contrastive learning further regards different images of the same class as additional positive example pairs. (c) Our proposed weakly-supervised contrastive learning abandons the concepts of discrete positive and negative examples.}  
    \label{fig:supervised-learning}
    
\end{figure*}

Specifically, we formulate a graph-theoretic framework for weakly-supervised representation learning, where vertices represent augmented data points and edges are defined by their corresponding semantic similarity.
The semantic similarity is constructed from two perspectives. First, data points derived from different views of the same instance are assigned a relatively high semantic similarity. Second, we leverage the corresponding weakly-supervised information to approximate whether two samples belong to the same class, thereby constructing the semantic similarity between data points from different instances after augmentation.

By modeling semantic similarities using the provided weakly-supervised information, our method is highly versatile and applicable to various weakly-supervised learning scenarios, including noisy label and partial label settings. The contrastive loss derived from our framework is straightforward to implement, stable during training, and can be seamlessly integrated as a plug-and-play component, consistently enhancing the performance of existing methods in weakly-supervised learning.

Additionally, we provide a comprehensive theoretical analysis of the method's performance, deriving the downstream error bound for feature learning using our contrastive loss under various weakly-supervised settings. The analysis shows that, under mild conditions, our framework can approximate the performance achievable with fully-supervised information, thus theoretically ensuring its effectiveness. 

In summary, our main contributions are summarized below:

\begin{itemize}
\item We propose a unified framework for contrastive learning that leverages weakly-supervised information, including noisy and partial labels. By introducing the concept of semantic similarity as a generalization of positive and negative examples, our framework offers new insights into how contrastive learning can effectively utilize ambiguous supervisory information.
\item 
The loss derived from our framework is highly versatile and can be seamlessly integrated into a wide range of weakly-supervised learning settings, consistently leading to performance improvements. Notably, our approach shows substantial gains in challenging settings with high noise and partial label rates, achieving improvements of \textbf{6.8\%} and \textbf{7.8\%}, respectively, underscoring its robustness and effectiveness in addressing real-world weak supervision challenges.
\item Our theoretical analysis offers a rigorous performance guarantee for the proposed framework, showing that it can approximate supervised contrastive learning under mild conditions and elucidating how and to what extent weakly-supervised information can enhance the effectiveness of representation learning.
\end{itemize}

\def \nx{\tilde{x}}
\def \ny{\tilde{y}}
\def \nyy{\tilde{y}^{\prime}}
\def \nxx{\tilde{x}^{\prime}}
\def \nq{\tilde{q}}
\def \nqq{\tilde{q}^{\prime}}
\def \A{\mathcal{A}}
\def \P{\mathbb{P}}
\def \E{\mathbb{E}}
\def \I{\mathbb{I}}
\def \Tr{\textrm{Tr}}
\def \pxq{\P(\mathcal{X}, \mathcal{Q})}
\def \L{\mathcal{L}}
\def \TE{\widehat{\boldsymbol{T}}}

\section{Weakly-Supervised Contrastive Learning}
\label{section2}
We describe the setup and goal of weakly-supervised learning. Let $\mathcal{X}$ denote instance space and $\mathcal{Y} = \{y^{1}, y^{2}, \dots, y^{c}\}$ denote label space. $\mathcal{D}$ denotes the joint distribution $\left(\mathcal{X}, \mathcal{Y}\right)$. Furthermore, given $\mathcal{Q} = \{q^{1}, q^{2} , \dots, q^{v}\}$ denotes weakly-supervised information space. 
We denote $\mathcal{D}_{\mathcal{Q}}$ as the joint distribution $(\mathcal{X}, \mathcal{Q})$ which generated form $\mathcal{D}$ with process $\mathcal{D}_{\mathcal{Q}} (x) = \mathcal{D}(x), \mathcal{D}_{\mathcal{Q}}(\boldsymbol{q} \mid x) = \boldsymbol{T} (x) \mathcal{D}(\boldsymbol{y} \mid x)$, where $\mathcal{D}_{\mathcal{Q}}(\boldsymbol{q} \mid x)$ and $\mathcal{D}(\boldsymbol{y} \mid x)$ denote $\left[\mathcal{D}_{\mathcal{Q}}(q^{1} \mid x), \dots, \mathcal{D}_{\mathcal{Q}}(q^{v} \mid x)\right]^{T}$ and $\left[\mathcal{D}(y^{1}  \mid x), \dots, \mathcal{D} (y^{c} \mid x)\right]^{T}$, respectively and $\boldsymbol{T}(x) \in \mathbb{R}^{v \times c}$ is the transition matrix with $\boldsymbol{T}(x)_{i, j} = \mathcal{D}_{\mathcal{Q}}(q^{i} \mid x, y^{j}) $.
In this paper, we consider two popular weakly-supervised settings with different data generation processes, namely instance-independent and instance-dependent. The former satisfies the condition of $\boldsymbol{T}(x)$ is independent of the samples, whereas the latter does not.
The objective of learning with weak supervision is to derive a multi-class predictive model $h$ from weakly-supervised training set $D = \{\left(x_{i}, q_{i}\right)\}_{i = 1}^{m} \sim \mathcal{D}_{\mathcal{Q}}^{m}$. For brevity, we denote $\mathbb{P}$ as probability of joint distribution $(\mathcal{X}, \mathcal{Y}, \mathcal{Q})$ induced by $\mathcal{D}$ and $ \mathcal{D}_{\mathcal{Q}}$ in the reminder of this paper.
\subsection {A Graph-Theoretic View}
To investigate representation learning in weakly-supervised settings, we adopt a novel view of contrastive learning \cite{HaoChen_Wei_Gaidon_Ma_2021}, framing it as a graph spectral clustering problem where data points serve as vertices and classes correspond to connected sub-graphs.
Preliminarily, we consider representation learning under self-supervised condition by modeling an augmentation graph $\mathcal{G}$ where the vertices represent all augmented samples, and the edge weights are constructed based on self-supervised connectivity \cite{Chen2020ASF, HaoChen_Wei_Gaidon_Ma_2021}:
\begin{equation}
    w_{x, x'}^{u} = \E_{\tilde{x} \sim \mathbb{P}(\mathcal{X})}[\mathcal{A}(x \mid \tilde{x}) \A(x' \mid \nx)], 
\end{equation}
where $\A(\cdot \mid \nx)$ denotes distribution of augmentations such as Gaussian blur, color distortion and random cropping from  natural data $\nx \in \mathcal{X}$.

The optimal features are obtained through graph spectral clustering of the given graph, where dense intra-class connectivity plays a key role in ensuring good clustering properties \cite{HaoChen_Wei_Gaidon_Ma_2021, yiyou}. However, when the graph is constructed purely based on self-supervised connectivity, the assumption of dense intra-class connections implies that most distinct data points within the same class should share same augmented samples, a condition that may not hold in practice.

Therefore, it is natural to consider incorporating additional intra-class connections by leveraging label information to further improve the quality of the graph, as follows:
\begin{equation}
\begin{aligned}
\label{lc}
w_{x, x'}^{l} \triangleq \ & \E_{(\nx, \ny), (\nxx, \nyy) \sim \P(\mathcal{X},\mathcal{Y})^{2}} \\& \left[\I\left[\ny = \nyy \right] \mathcal{A}(x \mid \tilde{x}) \A(x' \mid \nxx)\right], 
\end{aligned}
\end{equation}
where $\I\left[\cdot \right]$ denotes the indicator function.  

With the additional graph connectivity defined above, we can construct improved augmentation graph with perturbation edge weight:
\begin{equation}
\label{perturbationG}
w_{x, x'} =\underbrace{\alpha w_{x, x'}^{u}}_{\textrm{Self-supervised}} + \underbrace{\beta w_{x, x'}^{l}}_{\textrm{Supervised}.}
\end{equation}

However, in weakly-supervised learning settings, the aforementioned edge weights cannot be directly derived due to the absence of true labels. In the next subsection, we propose a framework that leverages weakly-supervised information to establish additional connectivity, thereby enabling weakly-supervised representation learning.


\subsection {Representation and Semantic Similarity Matching}
The main idea of our framework is to approximate $w_{x, x'}^{l}$ in Equation \ref{perturbationG} using weakly-supervised information. The following proposition demonstrates the existence of such an approximation.
\begin{proposition}
\label{w-lapp}
For any $\boldsymbol{S}:\mathcal{X} \to \mathbb{R}^{c \times v}$ that satisfies the condition: $\P(\boldsymbol{y} \mid x) = \boldsymbol{S} (x)\P(\boldsymbol{q} \mid x)$ holds almost everywhere in $\mathcal{X}$, the following equation holds:
\begin{equation}
\begin{aligned}
\label{wl}
    w_{x, x'}^{wl}& (\boldsymbol{S})  \triangleq \  \E_{(\nx, \nq), (\nxx, \nqq) \sim \P(\mathcal{X}, \mathcal{Q})^{2}}
    \\& \left[S\left((\nx, \nq), (\nxx, \nqq)\right)  \mathcal{A}(x \mid \tilde{x}) \A(x' \mid \nxx) \right] = w_{x, x'}^{l}, 
\end{aligned}
\end{equation}
where $S\left((\nx, \nq), (\nxx, \nqq)\right) = \boldsymbol{S}(\nx)_{:, \nq} ^{T} \boldsymbol{S} (\nxx)_{:, \nqq}$.
\end{proposition}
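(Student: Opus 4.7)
The plan is to compute both sides of Equation \ref{wl} by iterated expectation, first conditioning on the pair of natural data $(\nx, \nxx)$ so that everything reduces to evaluating the label-dependent factor (either $\I[\ny = \nyy]$ or $S((\nx,\nq),(\nxx,\nqq))$) against the conditional distributions $\P(\boldsymbol{y}\mid\nx)$ and $\P(\boldsymbol{q}\mid\nx)$. Once both sides are brought to this common form, the hypothesis $\P(\boldsymbol{y}\mid x) = \boldsymbol{S}(x)\P(\boldsymbol{q}\mid x)$ closes the identification.

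First I would unfold $w^{l}_{x,x'}$. Using the fact that $(\nx,\ny)$ and $(\nxx,\nyy)$ are i.i.d.\ and summing $\I[\ny=\nyy]$ over the $c$ classes,
\[
\E\bigl[\I[\ny=\nyy]\mid \nx, \nxx\bigr] = \sum_{j=1}^{c}\P(y^j\mid \nx)\P(y^j\mid \nxx) = \P(\boldsymbol{y}\mid\nx)^T \P(\boldsymbol{y}\mid\nxx).
\]
Multiplying by the augmentation kernels and integrating over $\nx,\nxx$ yields the compact form $w^{l}_{x,x'} = \E_{\nx,\nxx}\bigl[\A(x\mid\nx)\A(x'\mid\nxx)\,\P(\boldsymbol{y}\mid\nx)^T\P(\boldsymbol{y}\mid\nxx)\bigr]$.

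Next I would expand $w^{wl}_{x,x'}(\boldsymbol{S})$ in the same fashion. Conditioning on $\nx,\nxx$ and summing the bilinear expression $\boldsymbol{S}(\nx)_{:,\nq}^T\boldsymbol{S}(\nxx)_{:,\nqq}$ against the conditional distributions of the weak labels gives
\[
\sum_{i,k}\P(q^i\mid\nx)\P(q^k\mid\nxx)\,\boldsymbol{S}(\nx)_{:,i}^T\boldsymbol{S}(\nxx)_{:,k}=\bigl(\boldsymbol{S}(\nx)\P(\boldsymbol{q}\mid\nx)\bigr)^T\bigl(\boldsymbol{S}(\nxx)\P(\boldsymbol{q}\mid\nxx)\bigr).
\]
Applying the hypothesis on $\boldsymbol{S}$, each factor equals $\P(\boldsymbol{y}\mid\nx)$ (resp.\ $\P(\boldsymbol{y}\mid\nxx)$), so after multiplying by $\A(x\mid\nx)\A(x'\mid\nxx)$ and taking $\E_{\nx,\nxx}$ the resulting expression matches the one above for $w^{l}_{x,x'}$ exactly.

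The main obstacle is really just keeping the vector/matrix bookkeeping correct, in particular recognizing that the double sum over the weak-label indices factors as an inner product of the two $c$-vectors $\boldsymbol{S}(\nx)\P(\boldsymbol{q}\mid\nx)$ and $\boldsymbol{S}(\nxx)\P(\boldsymbol{q}\mid\nxx)$. One subtlety worth flagging is the ``almost everywhere'' qualifier in the hypothesis: it passes unharmed through the outer $\E_{\nx,\nxx}$ because the exceptional set has zero $\P(\mathcal{X})$-measure, and hence also zero product measure. Aside from that, no machinery beyond Fubini and the linearity of expectation is required.
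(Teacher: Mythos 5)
Your proposal is correct and follows essentially the same route as the paper's proof: expand both $w^{l}_{x,x'}$ and $w^{wl}_{x,x'}(\boldsymbol{S})$ by conditioning on the pair of natural samples, collapse the label (resp. weak-label) sums into the inner products $\P(\boldsymbol{y}\mid\nx)^{T}\P(\boldsymbol{y}\mid\nxx)$ and $\bigl(\boldsymbol{S}(\nx)\P(\boldsymbol{q}\mid\nx)\bigr)^{T}\bigl(\boldsymbol{S}(\nxx)\P(\boldsymbol{q}\mid\nxx)\bigr)$, and apply the hypothesis almost everywhere. Your remark that the null set passes harmlessly through the product measure is a fine additional detail the paper leaves implicit.
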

\textbf{Remark:} Proposition \ref{w-lapp} reformulates the goal of approximating supervised connections \( w_{x, x'}^{l} \) using weakly-supervised information into constructing an \( \boldsymbol{S} \) that satisfies specific properties. In the next subsection, we present methods for constructing such \( \boldsymbol{S} \) that meets, or approximately meets these properties.
Furthermore, $S\left((\nx, \nq), (\nxx, \nqq)\right)$ in Equation \ref{wl} can be viewed as estimation of semantic similarity between two samples according to the corresponding weakly-supervised information.
In this context, the graph in Equation \ref{lc} is discrete, edge existing only if they belong to the same class, while its approximation in Equation \ref{wl} represents a continuous graph, where the connection strength corresponds to the estimated semantic similarity. The proof of Proposition \ref{w-lapp} is in Appendix \ref{pp21}.

Based on the approximate $w_{x, x'}^{wl}(\boldsymbol{S})$ provided in Proposition \ref{w-lapp}, we now present a unified framework for weakly-supervised representation learning. To this end, we begin by providing a formal definition of the perturbation augmentation graph.
\begin{definition}
\label{pag}
(Perturbation augmentation graph) We refer a graph where the vertices represent all augmented samples and adjacency matrix $\boldsymbol{A}$ constructed by perturbation connectivity as perturbation augmentation graph. Specifically, giving two augmentation data points $x, x'$:
\begin{equation}
\boldsymbol{A}_{x, x'} = \alpha w_{x, x'}^{u} + \beta w_{x, x'}^{wl}(\boldsymbol{S}).
\end{equation}
\end{definition}

Let $\boldsymbol{F}$ be an embedding matrix with $\boldsymbol{F}_{x, :} \in \mathbb{R}^{d}$ is embedding of sample $x$, we consider optimal features derived from the graph spectral clustering of the given such graph in Definition \ref{pag}     \cite{Chung1996SpectralGT, HaoChen_Wei_Gaidon_Ma_2021}:
\begin{equation}
\label{lowra}
\boldsymbol{F}^{*} = \arg \min \left\|\widetilde{\boldsymbol{A}} - \boldsymbol{F} \boldsymbol{F}^{T} \right\|_{F}, 
\end{equation}
where $\widetilde{\boldsymbol{A}}$ is normalized adjacency matrix of $\boldsymbol{A}$ with $\widetilde{\boldsymbol{A}}_{x, x^{\prime}} = \frac{\boldsymbol{A}_{x, x'}}{\sqrt{\boldsymbol{A}_{x}\boldsymbol{A}_{x'} } }, \boldsymbol{A}_{x} = \sum_{x'} \boldsymbol{A_{x, x'}} $.

Now, if we view each row of $\boldsymbol{F}$ as a scaled version of learned feature embedding $f: \mathcal{X} \to \mathbb{R}^{d}$, the above optimal features can be obtained through minimize an end-to-end contrastive learning loss. We formalize this connection in following proposition.
\begin{proposition}
\label{loss-wsc}
We define $\boldsymbol{F}_{x, :} = \sqrt{\boldsymbol{A}_{x}} f_{x}$ and weakly-supervised contrastive loss $\mathcal{L}_{wsc}$ as follow:
\begin{equation}
\begin{aligned}
\label{wsceq}
\mathcal{L}_{wsc}(f) \triangleq & -2 \alpha \mathcal{L}_{1}(f) - 2 \beta \mathcal{L}_{2}(f) \\ & + \alpha ^{2} \mathcal{L}_{3}(f) + \beta^{2} \mathcal{L}_{4}(f) + 2 \alpha \beta \mathcal{L}_{5}(f), 
\end{aligned}
\end{equation}
where
\begin{equation}
 \mathcal{L}_{1}(f) \triangleq \E_{\tilde{x} \sim \P(\mathcal{X}), x,x' \sim \A(\cdot \mid \tilde{x})}\left[f_{x} f_{x'}^{T}\right], 
\end{equation}
\begin{equation}
\begin{aligned}
    \mathcal{L}_{2}(f) \triangleq  \ & \E_{(\tilde{x}, \tilde{q}), (\tilde{x}^{\prime}, \tilde{q}^{\prime}) \sim \pxq^{2},  x \sim \A(\cdot \mid \tilde{x}), x' \sim \A(\cdot \mid \tilde{x}^{\prime})} \\ & \left[S((\tilde{x}, \tilde{q}), (\tilde{x}^{\prime}, \tilde{q}^{\prime}))f_{x} f_{x'}^{T}\right],
\end{aligned}
\end{equation}
\begin{equation}
\mathcal{L}_{3}(f) \triangleq \E_{\tilde{x}, \tilde{x}^{\prime} \sim \P(\mathcal{X})^{2}, x \sim \A(\cdot \mid \tilde{x}), x' \sim \A(\cdot \mid \tilde{x}^{\prime})}\left[\left(f_{x}f_{x'}^{T}\right)^{2}\right],
\end{equation}
\begin{equation}
\begin{aligned} 
\mathcal{L}_{4}(f) \triangleq  \ & \E_{(\tilde{x},\tilde{q}), (\tilde{x}^{\prime},\tilde{q}^{\prime})\sim \pxq^{2}, x \sim \A(\cdot \mid \tilde{x}), x' \sim \A(\cdot \mid \tilde{x}^{\prime})} \\ &
\left[S'(\tilde{x},\tilde{q}) S'(\tilde{x}^{\prime},\tilde{q}^{\prime})\left(f_{x}f_{x'}^{T}\right)^{2}\right],
\end{aligned}
\end{equation}
\begin{equation}
\begin{aligned} 
\mathcal{L}_{5}(f) \triangleq  \ & \E_{(\tilde{x},\tilde{q}) \sim \pxq, \tilde{x}^{\prime} \sim \P(\mathcal{X}),  x \sim \A(\cdot \mid \tilde{x}), x' \sim \A(\cdot \mid \tilde{x}^{\prime})} \\ &
\left[S'(\tilde{x}, \tilde{q}) \left(f_{x}f_{x'}^{T}\right)^{2}\right],
\end{aligned}
\end{equation}
where  $S\left((\nx, \nq), (\nxx, \nqq)\right) = \boldsymbol{S}(\nx)_{:, \nq} ^{T} \boldsymbol{S} (\nxx)_{:, \nqq}, S'(\tilde{x}, \tilde{q}) = \boldsymbol{S} (\nx)_{:, \nq}^{T} \P(\boldsymbol{y})$.

Then the following equation holds:
\begin{equation}
\left\|\widetilde{\boldsymbol{A}} - \boldsymbol{F} \boldsymbol{F}^{T} \right\|_{F} = \mathcal{L}_{wsc}(f) + Const.
\end{equation}

Furthermore, when assuming uniform class distribution $\P(\boldsymbol{y}) = \left[\frac{1}{c}, \dots,\frac{1}{c}\right]^{T}$, $\mathcal{L}_{wsc}(f)$ can be rewritten as:
\begin{equation}
\mathcal{L}_{wsc}(f) \triangleq  -2 \alpha \mathcal{L}_{1}(f) - 2 \beta \mathcal{L}_{2}(f) + \left(\alpha + \frac{\beta}{c}\right)^{2} \mathcal{L}_{3}(f) 
\end{equation}
\end{proposition}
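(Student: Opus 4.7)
The plan is to expand the squared Frobenius norm directly and identify each resulting sum with one of the five $\mathcal{L}_k$ expectations. First I would write
\begin{equation*}
\left\|\widetilde{\boldsymbol{A}} - \boldsymbol{F}\boldsymbol{F}^{T}\right\|_{F}^{2} = \sum_{x, x'} \widetilde{\boldsymbol{A}}_{x, x'}^{2} - 2\sum_{x, x'} \widetilde{\boldsymbol{A}}_{x, x'}\,\boldsymbol{F}_{x, :}\boldsymbol{F}_{x', :}^{T} + \sum_{x, x'}\left(\boldsymbol{F}_{x, :}\boldsymbol{F}_{x', :}^{T}\right)^{2};
\end{equation*}
the first term depends only on the graph and can be absorbed into $Const$. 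Substituting $\boldsymbol{F}_{x, :} = \sqrt{\boldsymbol{A}_{x}}\,f_{x}$ together with $\widetilde{\boldsymbol{A}}_{x, x'} = \boldsymbol{A}_{x, x'}/\sqrt{\boldsymbol{A}_{x}\boldsymbol{A}_{x'}}$ cancels all square roots in the cross term, leaving $-2\sum_{x, x'}\boldsymbol{A}_{x, x'}f_{x}f_{x'}^{T}$. Splitting $\boldsymbol{A}_{x, x'} = \alpha w_{x, x'}^{u} + \beta w_{x, x'}^{wl}(\boldsymbol{S})$ and factoring the inner sums through the augmentation distribution, i.e.\ $\sum_{x}\A(x\mid\nx)f_{x} = \E_{x\sim\A(\cdot\mid\nx)}[f_{x}]$, immediately yields $-2\alpha\mathcal{L}_{1}(f) - 2\beta\mathcal{L}_{2}(f)$ using the definitions in Equations \ref{perturbationG} and \ref{wl}.

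For the quadratic term I would next compute the row sum $\boldsymbol{A}_{x} = \sum_{x'}\boldsymbol{A}_{x, x'}$. The self-supervised piece reduces to $\alpha\,\E_{\nx}[\A(x\mid\nx)]$ using $\sum_{x'}\A(x'\mid\nx) = 1$, and the weakly-supervised piece separates along the independent pairs $(\nx, \nq)$ and $(\nxx, \nqq)$. Applying the hypothesis of Proposition \ref{w-lapp} iteratively gives $\E_{\nq\mid\nx}[\boldsymbol{S}(\nx)_{:, \nq}] = \boldsymbol{S}(\nx)\P(\boldsymbol{q}\mid\nx) = \P(\boldsymbol{y}\mid\nx)$ and hence $\E_{(\nxx,\nqq)}[\boldsymbol{S}(\nxx)_{:, \nqq}] = \P(\boldsymbol{y})$, which collapses the tail factor so that precisely $S'(\nx, \nq) = \boldsymbol{S}(\nx)_{:, \nq}^{T}\P(\boldsymbol{y})$ appears:
\begin{equation*}
\boldsymbol{A}_{x} = \alpha\,\E_{\nx \sim \P(\mathcal{X})}\!\left[\A(x \mid \nx)\right] + \beta\,\E_{(\nx, \nq) \sim \pxq}\!\left[\A(x \mid \nx)\,S'(\nx, \nq)\right].
\end{equation*}
Expanding $\boldsymbol{A}_{x}\boldsymbol{A}_{x'}$ produces four cross products; pairing each with $(f_{x}f_{x'}^{T})^{2}$ and collecting the sums back into expectations (combining the two symmetric off-diagonals by renaming dummy variables) yields $\alpha^{2}\mathcal{L}_{3}(f) + 2\alpha\beta\mathcal{L}_{5}(f) + \beta^{2}\mathcal{L}_{4}(f)$, completing the first identity.

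For the simplified form under uniform $\P(\boldsymbol{y})$ I would adopt the Bayes-optimal choice $\boldsymbol{S}(x)_{j, i} = \P(y^{j}\mid x, q^{i})$, which satisfies the hypothesis of Proposition \ref{w-lapp} and whose columns sum to one. Then
\begin{equation*}
S'(\nx, \nq) = \P(\boldsymbol{y})^{T}\boldsymbol{S}(\nx)_{:, \nq} = \frac{1}{c}\sum_{j}\P(y^{j}\mid\nx, \nq) = \frac{1}{c},
\end{equation*}
which gives $\mathcal{L}_{5}(f) = \frac{1}{c}\mathcal{L}_{3}(f)$ and $\mathcal{L}_{4}(f) = \frac{1}{c^{2}}\mathcal{L}_{3}(f)$, so the quadratic block collapses to the perfect square $(\alpha + \beta/c)^{2}\mathcal{L}_{3}(f)$.

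I expect the main obstacle to be the bookkeeping in the row-sum calculation rather than any deep argument: the five loss terms mix expectations over two different joint distributions ($\P(\mathcal{X})$ and $\pxq$) with two independent augmentation draws, and the crux is showing that marginalizing the adjacency matrix produces exactly $S'(\nx, \nq)$, not some other marginal of $\boldsymbol{S}$, so that $\mathcal{L}_{4}$ and $\mathcal{L}_{5}$ land in the right form. This is also the step where the defining condition of $\boldsymbol{S}$ from Proposition \ref{w-lapp} must be invoked a second time, tying the five-term loss cleanly back to the approximation framework.
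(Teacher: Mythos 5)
Your expansion of the Frobenius norm and your computation of the row sum $\boldsymbol{A}_{x}$ track the paper's proof exactly, including the crucial step of invoking the defining condition $\boldsymbol{S}(\nxx)\P(\boldsymbol{q}\mid\nxx) = \P(\boldsymbol{y}\mid\nxx)$ to collapse the weakly-supervised marginal into $S'(\nx,\nq)$; that part is sound.

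The gap is in the uniform-class simplification. You restrict to the Bayes-optimal choice $\boldsymbol{S}(x)_{j,i} = \P(y^{j}\mid x, q^{i})$, whose columns sum to one, so that $S'(\nx,\nq) = 1/c$ \emph{pointwise}. But Proposition~\ref{loss-wsc} is applied in the paper to other admissible $\boldsymbol{S}$ -- most notably $\boldsymbol{S} = \boldsymbol{T}^{-1}$ (or any $\boldsymbol{S}$ with $\boldsymbol{S}\boldsymbol{T} = \boldsymbol{I}$) in the instance-independent setting of Proposition~\ref{p33}, and those need not be column-stochastic, so $S'(\nx,\nq)$ is generally not a constant. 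Your argument therefore proves the simplified form only for one particular $\boldsymbol{S}$, not for the family of $\boldsymbol{S}$ the framework relies on (and that Algorithm~\ref{alg:bb} assumes). The correct observation, which needs only the hypothesis of Proposition~\ref{w-lapp} plus the uniform prior, is that $(f_{x}f_{x'}^{T})^{2}$ does not depend on $\nq$, $\nqq$, so one may first take the conditional expectation over $\nq$ given $\nx$:
\begin{equation*}
\E_{\nq\mid\nx}\bigl[S'(\nx,\nq)\bigr] = \bigl(\boldsymbol{S}(\nx)\P(\boldsymbol{q}\mid\nx)\bigr)^{T}\P(\boldsymbol{y}) = \P(\boldsymbol{y}\mid\nx)^{T}\P(\boldsymbol{y}) = \tfrac{1}{c}\sum_{y}\P(y\mid\nx) = \tfrac{1}{c},
\end{equation*}
which is constant in $\nx$ and hence gives $\mathcal{L}_{4} = \tfrac{1}{c^{2}}\mathcal{L}_{3}$ and $\mathcal{L}_{5} = \tfrac{1}{c}\mathcal{L}_{3}$ for \emph{every} admissible $\boldsymbol{S}$. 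Replace the Bayes-optimal specialization with this conditional averaging and the proof is complete.
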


\textbf{Remark:} By the Eckart–Young–Mirsky theorem \cite{EYM}, $\boldsymbol{F}^{*}$ in Equation \ref{lowra} consists of the top-$d$ eigenvectors of $\widetilde{\boldsymbol{A}}$. This property enables clustering performance analysis via the connectivity of $\widetilde{\boldsymbol{A}}$. Proposition \ref{loss-wsc} outlines an end-to-end method for training a neural network to approximate these optimal features.
The pseudo code for computing $\mathcal{L}_{wsc}$ in Equation \ref{wsceq} with batch data under a uniform class distribution is given in Algorithm \ref{alg:bb}, with the general version (without the uniform class assumption) provided in Appendix \ref{app_imp}. We can intuitively interpret $\mathcal{L}_{wsc}$ as follows:  $\mathcal{L}_{1}$ and $\L_{2}$ push the features of positive pairs to be
closer, with $\L_{1}$ pulling closer features of two views of the same image, and $\L_{2}$ pulling closer features of different images with different weights which equal to continuous adjacency matrix $\boldsymbol{S}(\boldsymbol{X}) \boldsymbol{S}(\boldsymbol{X})^{T}$ in Algorithm \ref{alg:bb} of the sampled sub-graph. $\L_{3}, \L_{4}, \L_{5}$ are  regularization terms to prevents feature collapse. Finally, $\L_{wsc}$ depends solely on $\boldsymbol{S}$, 
hence it can be adapted to multiple weakly-supervised setting by constructing the corresponding $\boldsymbol{S}$. The proof is provided in Appendix \ref{pp22}.


\begin{algorithm}[tb]
   \caption{The calculation of $\mathcal{L}_{wsc}$ in uniform class distribution case} 
   \label{alg:bb}
\begin{algorithmic}
   \STATE {\bfseries Input:} features of two augmentation views of batch data with corresponding  weakly-supervised information $\boldsymbol{X}_{Q}^{1}, \boldsymbol{X}_{Q}^{2}  \in \mathbb{R}^{B_{Q} \times d}, Q \in \mathcal{Q}^{B_{Q}}$, features of two augmentation view of batch unlabeled data $\boldsymbol{X}_{U}^{1}, \boldsymbol{X}_{U}^{2} \in \mathbb{R}^{B_{U} \times d}$, $\boldsymbol{S}$ in Proposition \ref{w-lapp}, coefficients $\alpha, \beta$
   \STATE {\bfseries Output:} batch data estimated loss $\widehat{\mathcal{L}}_{wsc}$
   \STATE Compute $\boldsymbol{S}(\boldsymbol{X}) \in \mathbb{R}^{c \times B_{Q}}, \boldsymbol{S}(\boldsymbol{X})_{:, x} = \boldsymbol{S}(x)_{:, q}$
   \STATE Compute $\widehat{\mathcal{L}}_{1} = \frac{\Tr\left[\boldsymbol{X}_{Q}^{1} (\boldsymbol{X}_{Q}^{2})^{T}\right] + Tr\left[\boldsymbol{X}_{U}^{1} (\boldsymbol{X}_{U}^{2})^{T}\right]}{B_{Q} + B_{U}}$
   \STATE Compute $\widehat{\mathcal{L}}_{2} = \frac{\left\|\left(\boldsymbol{S}(\boldsymbol{X})^{T} \boldsymbol{S}(\boldsymbol{X})\right) \otimes  \left( \boldsymbol{X}_{Q}^{1} (\boldsymbol{X}_{Q}^{2})^{T} \right) \right\|_{1}}{B_{Q}^{2}}$
   \STATE Take $\boldsymbol{X}^{1} = \left[\boldsymbol{X}^{1}_{Q}; \boldsymbol{X}^{1}_{U}\right], \boldsymbol{X}^{2} = \left[\boldsymbol{X}^{2}_{Q}; \boldsymbol{X}^{2}_{U}\right]$
   \STATE Compute $\widehat{\mathcal{L}}_{3} = \frac{\left \| \boldsymbol{X}^{1} (\boldsymbol{X}^{2})^{T} \right\|_{2}^{2} }{\left(B_{Q} + B_{U}\right)^{2}}$
   \STATE \textbf{Return:} $\widehat{\mathcal{L}}_{wsc} = -2\alpha \widehat{\mathcal{L}}_{1} -2\beta\widehat{\mathcal{L}}_{2} + (\alpha + \beta / c)^{2} \widehat{\mathcal{L}}_{3}$
\end{algorithmic}
\end{algorithm}


\subsection {Instantiating under Various Weakly Information}
In this paper, we consider two typical paradigms of weakly supervision: noisy label learning (NLL), partial label learning (PLL). For noisy label leaning, we have $\mathcal{Q} = \mathcal{Y}$ but mistakes exist $\mathcal{D}_{\mathcal{Q}}(\boldsymbol{T}(X) \neq \boldsymbol{I}_{c \times c}) > 0$. For partial label learning, we have $\mathcal{Q} = 2^{\mathcal{Y}} \setminus \emptyset$ and $\mathcal{D}_{Q}(y \in q \mid x) = 1,\forall x \in \mathcal{X}$. In the following text, we show how to instantiate our framework into these two settings.

Reviewing Proposition \ref{w-lapp}, a good $\boldsymbol{S}$ ought to satisfy or approximately satisfy the condition $\P(\boldsymbol{y} \mid x) = \boldsymbol{S} (x)\P(\boldsymbol{q} \mid x)$. In this subsection, we provide methodologies for constructing it in various settings, thereby completing the instantiation of the framework.
Next, we will separately discuss instance-independent and instance-dependent settings.

\textbf{Instance-Independent Setting.} For this setting, we have instance-independent transition matrix $\boldsymbol{T}$:
there exists a $\boldsymbol{T}$ satisfies $\boldsymbol{T} = \boldsymbol{T}(x)$ holding almost everywhere in $\mathcal{X}$. The following proposition illustrates that $\boldsymbol{S}$ can be constructed by estimating the transition matrix $\boldsymbol{T}$.

\begin{proposition}
\label{p33}
Under instance-independent assumption, we have the sufficient condition for $\boldsymbol{S}(x) = \boldsymbol{S}, \forall x \in \mathcal{X}$ to satisfy condition in Proposition \ref{w-lapp} as follows:
\begin{equation}
\boldsymbol{S} \boldsymbol{T} = \boldsymbol{I}_{c \times c}.
\end{equation}
where $\boldsymbol{I}_{c \times c}$ denote the identity matrix of size $c \times c$.
\begin{proof}
When $\boldsymbol{S}$ satisfies the above condition, we have following equation holds almost everywhere in $\mathcal{X}$:
\begin{equation}
\boldsymbol{S} \P(\boldsymbol{q} \mid x) = \boldsymbol{S} \boldsymbol{T} \P(\boldsymbol{y} \mid x) = \P(\boldsymbol{y} \mid x).
\end{equation}
\end{proof}
\end{proposition}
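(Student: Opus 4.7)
The plan is a direct substitution. The hypothesis from Proposition \ref{w-lapp} requires $\mathbb{P}(\boldsymbol{y} \mid x) = \boldsymbol{S}(x)\mathbb{P}(\boldsymbol{q} \mid x)$ almost everywhere on $\mathcal{X}$. First I would invoke the data-generating process stated at the beginning of Section \ref{section2}, namely $\mathcal{D}_{\mathcal{Q}}(\boldsymbol{q} \mid x) = \boldsymbol{T}(x)\mathcal{D}(\boldsymbol{y} \mid x)$, which in the notation of the proposition reads $\mathbb{P}(\boldsymbol{q} \mid x) = \boldsymbol{T}(x)\mathbb{P}(\boldsymbol{y} \mid x)$. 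Under the instance-independent assumption, $\boldsymbol{T}(x) = \boldsymbol{T}$ almost everywhere, so this simplifies to $\mathbb{P}(\boldsymbol{q} \mid x) = \boldsymbol{T}\mathbb{P}(\boldsymbol{y} \mid x)$.

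Next, taking $\boldsymbol{S}(x) = \boldsymbol{S}$ to be the constant matrix we wish to construct, the required identity becomes $\mathbb{P}(\boldsymbol{y} \mid x) = \boldsymbol{S}\boldsymbol{T}\mathbb{P}(\boldsymbol{y} \mid x)$ almost everywhere. A sufficient way to make this hold for every possible posterior $\mathbb{P}(\boldsymbol{y} \mid x)$ is to demand that the linear map $\boldsymbol{S}\boldsymbol{T}$ already be the identity on $\mathbb{R}^{c}$, i.e.\ $\boldsymbol{S}\boldsymbol{T} = \boldsymbol{I}_{c\times c}$, which is exactly the claimed condition. Writing out the two-line chain of equalities $\boldsymbol{S}\mathbb{P}(\boldsymbol{q}\mid x) = \boldsymbol{S}\boldsymbol{T}\mathbb{P}(\boldsymbol{y}\mid x) = \mathbb{P}(\boldsymbol{y}\mid x)$ closes the argument.

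Because the whole proof reduces to substituting one matrix identity into the data-generation equation, there is no combinatorial or analytic obstacle; the only point worth flagging is that $\boldsymbol{S}\boldsymbol{T} = \boldsymbol{I}_{c\times c}$ is a \emph{sufficient}, not necessary, requirement. In principle it would be enough that $\boldsymbol{S}\boldsymbol{T}$ act as the identity on the convex hull of $\{\mathbb{P}(\boldsymbol{y}\mid x) : x \in \mathcal{X}\}$, but I would not try to strengthen the statement since the simple left-inverse condition is what will be used in the subsequent constructions. It is worth remarking, as a sanity check, that a left inverse of $\boldsymbol{T}\in\mathbb{R}^{v\times c}$ exists exactly when $\boldsymbol{T}$ has full column rank; in the noisy-label case ($v=c$) this is just invertibility of $\boldsymbol{T}$, which aligns with standard identifiability assumptions in that literature.
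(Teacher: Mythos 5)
Your proof is correct and follows essentially the same route as the paper: substitute the instance-independent generation equation $\P(\boldsymbol{q}\mid x) = \boldsymbol{T}\,\P(\boldsymbol{y}\mid x)$ and apply the left-inverse condition to obtain $\boldsymbol{S}\,\P(\boldsymbol{q}\mid x) = \boldsymbol{S}\boldsymbol{T}\,\P(\boldsymbol{y}\mid x) = \P(\boldsymbol{y}\mid x)$ almost everywhere, which is exactly the paper's two-line argument. Your additional remarks on sufficiency versus necessity and on full column rank of $\boldsymbol{T}$ are accurate but not needed for the claim.
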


Inspired by Proposition \ref{p33}, we can use any matrix $\boldsymbol{S}$ for $\mathcal{L}_{wsc}$ that satisfies $\boldsymbol{S} \widehat{\boldsymbol{T}} = \boldsymbol{I}$, where $\widehat{\boldsymbol{T}}$ is an estimate of $\boldsymbol{T}$. In the noisy label setting, methods such as \citealt{liu2016, patrini2017, xia2019t_revision, lipro2021, Lin2023ROBOT} can estimate $\widehat{\boldsymbol{T}}$, and $\widehat{\boldsymbol{S}}$ is its inverse. In the partial label setting, $\boldsymbol{T} \in \mathbb{R}^{(2^c-1)\times c}$ is hard to estimate due to the curse of dimensionality, but with the assumption where labels
are independent adopted into the candidate set \cite{partial2011, pmlrlv2020, partial2020lv}, $\widehat{\boldsymbol{T}}$ can be estimated by $c$ times of mixing proportion estimation \cite{garg2021PUlearning}, and $\widehat{\boldsymbol{S}}$ can be derived through pseudo-inverse or constructive methods. See more details in Appendix \ref{ACS}.

\textbf{Instance-Dependent Setting.} In this setting, a universal transition matrix $\boldsymbol{T}$ applicable to all samples does not exist, thereby rendering the methods discussed previously ineffective. However, according to Bayes theorem, we have:
\begin{equation}
\P(y \mid x) = \sum_{q \in \mathcal{Q}} \P(y, q \mid x) = \sum_{q \in \mathcal{Q}} \P(y \mid x, q) \P(q \mid x).
\end{equation}
Thus, we can construct $\widehat{\boldsymbol{S}}$ as $\widehat{\boldsymbol{S}}(x)_{y, q} = \widehat{\P}(y \mid x, q)$ where $\widehat{\P}(y \mid x, q)$ can be any estimation of $\P(y \mid x, q)$. Specifically, it is obtained through the current predictions of the model. This process is also called self-labeling and has been widely discussed. Specifically, in the context of noisy labels, related studies include \citealt{Liu2020ELR, Li2022SelCL, 2021_CVPR_MOIT, liu2022SOP, Xiao2023Promix, qiao2024ularef}; for partial labels, related studies are conducted by \citealt{xu2021, wang2022pico, Wu2022RevisitingCR, Zhang2022ExploitingCA, Qiao2022, qiao2024ularef}. See more details in Appendix \ref{ACS}.


\section{Theoretical Understanding}
\label{TU}
We now present a theoretical analysis of the performance of features learned from weakly-supervised data. Following the 
traditional theoretical framework \cite{Chen2020ASF, HaoChen_Wei_Gaidon_Ma_2021}, we use the linear probe to evaluate the performance of representation learning.  Concretely, we use a linear classifier with weights $\boldsymbol{B} \in \mathbb{R}^{d \times c}$ and predict $h_{f, \boldsymbol{B}}(x) = \arg \max_{i \in \mathcal{Y}} \boldsymbol{B}_{:, i}^{T} f(x)$ for an augmented data $x$. Then given naturally data $\widetilde{x} \in \mathcal{X}$,  we ensemble the predictions on augmented data and
predict:
\begin{equation}
\tilde{h}_{f, \boldsymbol{B}}(\tilde{x}) = \arg \max_{i \in \mathcal{Y}} \E_{x \sim \A(\cdot \mid \nx)} \left[\mathbb{I}\left[h_{f, \boldsymbol{B}}(x) = i\right]\right].
\end{equation}
We define linear probe error of representation $f$ as follow:
\begin{equation}
\varepsilon (f) = \min_{\boldsymbol{B} \in \mathbb{R}^{d \times c}} \E_{(\nx, \ny) \sim \P(\mathcal{X}, \mathcal{Y})} \left [ \mathbb{I} \left[\tilde{h}_{f, \boldsymbol{B}}(\tilde{x}) \neq  \ny \right] \right].
\end{equation}
Regarding the given constructed $\widehat{\boldsymbol{S}}$ and training set $D = {D_{\mathcal{Q}} \cup D_{\mathcal{U}}}$, where $D_{\mathcal{Q}}$ is set of training samples with corresponding weakly-supervised information and $D_{\mathcal{U}}$ is set of unlabeled samples,  we learn $f$ through:
\begin{equation}
\widehat{f}(D) = \arg \min \widehat{\mathcal{L}}_{wsc}(f),
\end{equation}
where  $\widehat{\mathcal{L}}_{wsc}(f)$ is the estimate of $\mathcal{L}_{wsc}(f)$ using data $D_{\mathcal{Q}}$ and $D_{\mathcal{U}}$ with constructed $\widehat{\boldsymbol{S}}$.

In this section, we examine $\varepsilon (\widehat{f}(D))$ through three steps:
\begin{itemize}
    \item We consider $\varepsilon(f^{*})$, where $f^{*}$ is derived from the spectral clustering of the perturbation graph $\boldsymbol{A}$.
    \item We consider the gap between $\mathcal{L}_{wsc}(f^{*})$ and $\mathcal{L}_{wsc}(\widehat{f}(D))$. Specifically, such a gap is caused by two aspects. On the one hand, there is the estimation error resulting from the limited samples. On the other hand, the constructed $\widehat{\boldsymbol{S}}$ cannot fully satisfy the properties in Proposition \ref{w-lapp}.
    \item We control $\varepsilon(\widehat{f}(D)) - \varepsilon(f^{*})$ through $\mathcal{L}_{wsc}(\widehat{f}(D)) - \mathcal{L}_{wsc}(f^{*})$ to obtain the minimum downstream error.
\end{itemize}
\subsection {How Can Supervised Information Help Representation Learning}
In this subsection, we examine $\varepsilon(f^{*})$ where $f^{*}$ is derived from graph spectral clustering of  $\boldsymbol{A}$ with entries $\boldsymbol{A}_{x, x'} = \alpha w_{x, x'}^{u} + \beta w_{x, x'}^{wl}(\boldsymbol{S})$.  The two properties of a graph serve as crucial elements that have a significant impact on performance. Specifically, these properties are the density of connections within a class and the sparsity of connections between classes. The following several definitions respectively characterize these properties of the graph. 
\begin{definition}
\label{gamma-c}
(Data points from the different class hardly share augmented data) We refer data augmentation $\mathcal{A}$ as $\gamma \text{-} consistent$ under joint distribution $\P(\mathcal{X}, \mathcal{Y})$ if there exist a pseudo labeler $\hat{y} (x) $ for augmentation data such that:
\begin{equation}
\E_{(\nx, \ny) \sim \P(\mathcal{X}, \mathcal{Y}), x \sim \mathcal{A}(\cdot \mid \nx)} \left[\mathbb{I} \left[\ny = \hat{y}(x)\right]\right] \le \gamma.
\end{equation}
\end{definition}
Small $\gamma$ in Definition \ref{gamma-c} implies the sparsity of connections between different classes in $\boldsymbol{A}^{u}$ which constructed by self-supervised connectivity.  Next, to formulate density of connections within a class, we introduce Dirichlet conductance and sparsest partition which are standard in spectral graph theory.
\begin{definition}
\label{def32}
(Dirichlet conductance) For a graph $\mathcal{G}(\mathcal{X}, w)$ and a subset $ \Omega  \subseteq \mathcal{X}$, we define the Dirichlet conductance of $ \Omega $ as:
\begin{equation}
\Phi_{\mathcal{G}}(\Omega) = \frac{\sum_{x \in \Omega, x' \notin \Omega} w_{x, x'}}{\sum_{x \in \Omega} w_{x}}.
\end{equation}
\end{definition}
\begin{definition}
\label{def33}
(Sparsest partition) For a graph $\mathcal{G}(\mathcal{X}, w)$ and an integer $i \in \left[2, |\mathcal{X}|\right]$,  we define the sparsest $i$-partition:
\begin{equation}
\rho_{i} = \min_{\Omega_{1}, \dots, \Omega_{i}} \max \{\Phi_{\mathcal{G}}(\Omega_{1}),\dots,   \Phi_{\mathcal{G}}(\Omega_{i})\}, 
\end{equation}
where $\Omega_{1}, \dots, \Omega_{i}$ are non-empty sets that form a partition of $\mathcal{X}$.
\end{definition}

Intuitively, Dirichlet conductance represents the fraction of edges from $\Omega$ to its complement, and sparsest partition represent the number of edges between many disjoint subsets. When sparsity of connections between classes holds, we might expect $\rho_{c} \approx 0$. Furthermore, any $\rho_{d}$ where $d>c$ needs to break one class into many pieces, hence a large $\rho_{d}$ reflects density of connections within a class. Giving such definition for two main properties of graph, the following theorem bounds linear probe error of feature derived from perturbation graph.
\begin{theorem}
\label{epg}
Given augmentation graph $\mathcal{G}$ with perturbation adjacency matrix $\boldsymbol{A} = \alpha \boldsymbol{A}^{u} + \beta \boldsymbol{A}^{wl}(\boldsymbol{S})$ where $\boldsymbol{S}$ stratify properties in Proposition \ref{w-lapp}, if $\mathcal{A}$ is  $\gamma \text{-} consistent$ with $\gamma > 0$ and representation dimension $d > 2c$, we have:
\begin{equation}
\label{epgb}
\varepsilon(f^{*}) \preceq  \widetilde{O}\left(\gamma \left(\alpha^{*} \rho^{u}_{\left \lfloor  \frac{d}{2}  \right \rfloor } + \beta^{*} \left(1 - \frac{2c}{d}\right) \right)^{-2}\right),
\end{equation}
where $\rho^{u}$ is sparsest partition of self-supervised connectivity graph $\boldsymbol{A}^{u}$, and $\alpha^{*} = \frac{\alpha}{\alpha + \beta / c }$, $ \beta^{*} = \frac{\beta / c}{\alpha + \beta / c }$. 
\end{theorem}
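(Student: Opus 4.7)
The plan is to reduce the theorem to a spectral analysis of the normalized perturbation adjacency matrix $\widetilde{\boldsymbol{A}}$ and then invoke a linear-probe-from-spectral-embedding bound in the spirit of \cite{HaoChen_Wei_Gaidon_Ma_2021}, while carefully tracking how the supervised perturbation enlarges the graph's sparsest-partition conductance. By the Eckart--Young--Mirsky theorem noted after Proposition \ref{loss-wsc}, $\boldsymbol{F}^{*}$ is the top-$d$ spectral embedding of $\widetilde{\boldsymbol{A}}$, so $\varepsilon(f^{*})$ can be controlled by a higher-order Cheeger-style inequality of the form $\varepsilon(f^{*}) \preceq \widetilde{O}\bigl(\gamma / \rho_{d}(\widetilde{\boldsymbol{A}})^{2}\bigr)$: the numerator $\gamma$ appears because $\gamma$-consistency (Definition \ref{gamma-c}) controls how often two augmented views of the same natural point receive different pseudo-labels, and the denominator is the sparsest $d$-partition conductance of the normalized perturbation graph. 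The theorem therefore reduces to the purely deterministic lower bound $\rho_{d}(\widetilde{\boldsymbol{A}}) \succeq \alpha^{*}\rho^{u}_{\lfloor d/2 \rfloor} + \beta^{*}(1 - 2c/d)$.

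First I would handle the normalization. Under the uniform class assumption one checks that $\boldsymbol{A}^{wl}_{x} = \boldsymbol{A}^{u}_{x}/c$ at every vertex, because the weight $w^{wl}_{x,x'}$ extracts exactly the $1/c$ fraction of the augmentation mass incident on $x$ that lies in the same class. Hence $\boldsymbol{A}_{x} = (\alpha + \beta/c)\boldsymbol{A}^{u}_{x}$, and the normalized adjacency splits cleanly as $\widetilde{\boldsymbol{A}} = \alpha^{*}\widetilde{\boldsymbol{A}}^{u} + \beta^{*}\widetilde{\boldsymbol{A}}^{wl}$ with $\alpha^{*} + \beta^{*} = 1$; this is already what makes the convex combination appear inside the parentheses of Equation \ref{epgb}. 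By Proposition \ref{w-lapp}, the supervised matrix $\widetilde{\boldsymbol{A}}^{wl}$ is exactly $c$-block-diagonal (one block per class), so any partition of the vertex set into strictly more than $c$ parts must split at least one class; bounding the induced Dirichlet conductance then yields the $\beta^{*}(1 - 2c/d)$ contribution whenever $d > 2c$.

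The main obstacle is fusing the supervised ``block-indicator'' contribution with the self-supervised sparsest-partition contribution into a single additive lower bound on $\rho_{d}(\widetilde{\boldsymbol{A}})$. I plan a min--max decomposition: for any candidate $d$-partition $\Omega_{1},\dots,\Omega_{d}$, write each indicator $\mathbf{1}_{\Omega_{i}}$ as the sum of its projection onto the $c$-dimensional class-indicator eigenspace of $\widetilde{\boldsymbol{A}}^{wl}$ and its orthogonal complement. The projected component pays Dirichlet conductance at least $1 - O(c/d)$ on the supervised piece, since the $d > 2c$ parts cannot all respect class boundaries; the orthogonal component projects down to a bona fide $\lfloor d/2 \rfloor$-partition of the self-supervised graph and therefore incurs maximum conductance at least $\rho^{u}_{\lfloor d/2 \rfloor}$ by Definition \ref{def33}. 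Because the split in $\widetilde{\boldsymbol{A}}$ is the $\alpha^{*}/\beta^{*}$ convex combination of these two, summing the two contributions with the normalization weights gives the claimed additive lower bound; substituting it into the Cheeger-style error bound from the first paragraph yields Equation \ref{epgb}. The non-uniform class-distribution case is a routine modification that replaces the factor $1/c$ by $\mathbb{P}(y^{j})$-weighted quantities throughout, at the cost of only constant-factor polylogarithmic losses absorbed into the $\widetilde{O}(\cdot)$ notation.
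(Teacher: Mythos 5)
Your high-level skeleton matches the paper's (spectral embedding of $\widetilde{\boldsymbol{A}}$, a higher-order-Cheeger-type conversion of the eigenvalue into a sparsest-partition quantity, and a lower bound on the perturbation graph's partition sparsity as a convex combination of a self-supervised term and a supervised term), and you correctly identify the key normalization fact $w^{wl}_{x}=w^{u}_{x}/c$, i.e.\ proportional degrees. But the execution of the crucial step has genuine gaps. First, your claim that by Proposition \ref{w-lapp} the supervised part $\widetilde{\boldsymbol{A}}^{wl}$ is \emph{exactly} $c$-block-diagonal is false: $w^{l}_{x,x'}>0$ whenever $x$ and $x'$ can be generated from two same-class natural points, and since a single augmented point may be reachable from natural points of different classes (this is precisely what $\gamma>0$ quantifies), the class ``blocks'' overlap. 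Second, the fusing argument --- decomposing each partition indicator $\mathbf{1}_{\Omega_i}$ into its projection onto a class-indicator eigenspace plus an orthogonal complement, and claiming the complement ``projects down to a bona fide $\lfloor d/2\rfloor$-partition of the self-supervised graph'' --- does not work: projections of indicator vectors are not indicators of sets, so they define no partition, and Dirichlet conductance does not split across such an orthogonal decomposition. Consequently the quantitative term $\beta^{*}(1-2c/d)$ is asserted (``more than $c$ parts must split a class'') rather than derived; that inequality is exactly the nontrivial content of the step.

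The paper's route, which your degree-proportionality observation already sets up, is both simpler and rigorous: because $w_x=\P(x)$, $w^{u}_x=\P(x)$ and $w^{l}_x=\P(x)/c$ (under $\alpha+\beta/c=1$, justified by a scaling lemma), the conductance decomposes \emph{exactly and additively} for every subset, $\Phi_{\boldsymbol{A}}(\Omega)=\alpha\,\Phi_{\boldsymbol{A}^{u}}(\Omega)+\tfrac{\beta}{c}\,\Phi_{\boldsymbol{A}^{l}}(\Omega)$, so $\rho_{d}\ge\alpha\rho^{u}_{d}+\tfrac{\beta}{c}\rho^{l}_{d}$; then $\rho^{l}_{d}\ge 1-c/d$ is proved by writing $\Phi_{\boldsymbol{A}^{l}}(\Omega)=\frac{\sum_{i}\P(\Omega\mid y^{i})(1-\P(\Omega\mid y^{i}))}{\sum_{i}\P(\Omega\mid y^{i})}$ and solving a constrained min--max problem over the masses $\omega_{i,j}=\P(\Omega_{i}\mid y^{j})$, with no block structure needed; the $\lfloor d/2\rfloor$ index enters only through the higher-order Cheeger inequality relating $\lambda_{d+1}$ of the perturbation graph to $\rho_{\lfloor d/2\rfloor}$. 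Finally, you treat the bound $\varepsilon(f^{*})\preceq\widetilde{O}(\gamma/\lambda_{d+1})$ as a citable black box, but extending it from the pure augmentation graph to the perturbation graph requires showing that the pseudo-label disagreement mass $\sum_{x,x'}\boldsymbol{A}_{x,x'}\mathbb{I}[\hat{y}(x)\neq\hat{y}(x')]$ is still $O(\gamma)$, which needs a separate argument on the supervised edges (same-class pairs with mismatched pseudo-labels force at least one pseudo-label to disagree with the shared true label); this step is missing from your proposal.
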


\textbf{Remark:} Theorem \ref{epg}  bounds linear probe error of feature derived from perturbation graph through properties of self-supervised connectivity graph and perturbation coefficients. Theorem \ref{epg} demonstrates that the perturbation graph increases the density of intra-class connections without adding extra-class connections, thereby improving the performance of feature. Specifically, when $\beta^{*}$ in Theorem \ref{epg} equals to zero, the theorem degenerates into Theorem 3.8. in \citealt{HaoChen_Wei_Gaidon_Ma_2021} which bounds linear probe error of feature derived from self-supervised connectivity graph. When assuming $\rho_{d/2}^{u} < 1 - \frac{2c}{d}$, bounds in Equation \ref{epgb} decrease as $\beta^{*}$ increases. Such a condition is almost always satisfied in reality, due to $\rho_{i} \le \widetilde{O}(1 - \frac{1}{i})$ for every graph and density of the graph constructed using self-connections is usually much smaller than this upper bound \cite{HaoChen_Wei_Gaidon_Ma_2021}. The proof of Theorem \ref{epg} is in Appendix \ref{pt34}.

\subsection {How Weakly-Supervised Information Approximates Supervised Information}
In this subsection, we first consider the gap between $\mathcal{L}_{wsc}(\widehat{f}(D))$ and $\mathcal{L}_{wsc}(f^{*})$. $\widehat{f}(D)$ is an optimal feature that minimizes empirical loss $\widehat{\mathcal{L}}_{wsc}(f; D)$ defined as:
\begin{equation}
\begin{aligned}
\widehat{\mathcal{L}}_{wsc}(f; D) & = \E_{\boldsymbol{X}_{Q}^{1}, \boldsymbol{X}_{Q}^{2} \sim f(\mathcal{A}(\cdot \mid D_{\mathcal{Q}})^{2})
 \boldsymbol{X}_{U}^{1}, \boldsymbol{X}_{U}^{2} \sim f(\mathcal{A}(\cdot \mid D_{\mathcal{U}})^{2}} \\ & \left[ \widehat{\mathcal{L}}_{wsc}(f; \boldsymbol{X}_{Q}^{1}, \boldsymbol{X}_{Q}^{2}, \boldsymbol{X}_{U}^{1}, \boldsymbol{X}_{U}^{2}, \widehat{\boldsymbol{S}}, Q)\right],
\end{aligned}
\end{equation}
where $\widehat{\boldsymbol{S}}$ is constructed to approximately satisfy the property in Proposition \ref{w-lapp} and $\widehat{\mathcal{L}}_{wsc}$ in expectation is computed through Algorithm \ref{alg:bb}.

To this end, we decouple $\mathcal{L}_{wsc}(\widehat{f}(D)) - \mathcal{L}_{wsc}(f^{*})$ into two parts, the finite sample error and the approximation error caused by the constructed $\widehat{\boldsymbol{S}}$. To analyze the first part, we introduce the Rademacher complexity which is a standard concept in generalization error analysis.

\begin{definition}
\label{mper}
(Maximal possible empirical Rademacher complexity for feature extractor) Let $\mathcal{F}$ be a hypothesis class of feature extractors from $\mathcal{X}$ to $\mathbb{R}^{d}$, for $n \in \mathbb{Z}^{+}$, we define Maximal possible empirical Rademacher complexity $\widehat{\mathcal{R}}_{n}(\mathcal{F})$ for feature extractor under $\mathcal{X}$ as:
\begin{equation}
\widehat{\mathcal{R}}_{n}(\mathcal{F}) = \max_{i \in [d]} \max_{\{x_{1}, \dots, x_{n}\} \in \mathcal{X}^{n}} \E_{\sigma} \left[\sup_{f \in \mathcal{F}} \sum_{j = 1}^{n} \sigma_{j} f_{i}(x_{j})\right],
\end{equation}
where $ \sigma = \{\sigma_{1},\dots, \sigma_{n}\}$ are $n$ Rademacher variables with $\sigma_{i}$ independently uniform variable
taking value in $\{+1, -1\}$.
\end{definition}

To analyze the second part, we introduce expected bias for approximated $\widehat{\boldsymbol{S}}$.

\begin{definition}
\label{ebs}
(Expected bias for $\widehat{\boldsymbol{S}}$) We defined expected bias $\Delta(\widehat{\boldsymbol{S}})$ under $\mathcal{D}(\mathcal{X})$ as follow:
\begin{equation}
\Delta(\widehat{\boldsymbol{S}}) = \E_{x \sim \P(\mathcal{X})} \left[\left\| \P (\boldsymbol{y} \mid x) -  \widehat{\boldsymbol{S}}(x) \P(\boldsymbol{q} \mid x)\right\|_{1}\right].
\end{equation}
\end{definition}

Utilizing the above two definitions,  we have following theorem bounds the gap $\mathcal{L}_{wsc}(\widehat{f}(D)) - \mathcal{L}_{wsc}(f^{*})$.

\begin{theorem}
\label{losserror}
Given perturbation coefficients $\alpha, \beta$ satisfy $\alpha + \beta/c = 1$, and assuming $\left\|\mathcal{F} \right\|_{\infty} < +\infty$, for a random training dataset $D = D_{\mathcal{Q}} \cup D_{\mathcal{U}}$ with $n_{q}$ and $n_{u}$ samples respectively. Then with probability at least $1 - \delta$, we have:
\begin{equation}
\label{losserrorb}
\begin{aligned}
&\mathcal{L}_{wsc}(\widehat{f}(D))  - \mathcal{L}_{wsc}(f^{*})  \le (\alpha + 1) \eta_{0} \widehat{\mathcal{R}}_{\frac{n_{u} + n_{q}}{2}} (\mathcal{F})  
\\ & + (\alpha + 1)\eta_{1} \eta(n_{u} + n_{q}, \delta) + \beta \eta_{2} \Delta(\widehat{\boldsymbol{S}}) 
\\ & + \beta \sup_{x \in \mathcal{X}} \left\|\widehat{\boldsymbol{S}}(x)^{T} \widehat{\boldsymbol{S}}(x)  \right\|_{\infty}  \left( \eta_{3} \widehat{\mathcal{R}}_{\frac{n_{q} }{2}}(\mathcal{F}) + \eta_{4} \eta(n_{q}, \delta) \right) , 
\end{aligned}
\end{equation}
where $\eta_{0}, \eta_{1}, \eta_{2}, \eta_{3}, \eta_{4}$ are constants related to $\|\mathcal{F}\|_{\infty}$ and feature dimension $d$ and  $\eta (n, \delta)= \sqrt{\frac{\log 2 / \delta}{n}} +  \frac{\delta}{2}$.
\end{theorem}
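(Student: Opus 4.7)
The plan is to combine a standard ERM decomposition, a Rademacher-complexity symmetrization for the finite-sample deviation, and a direct expansion of the bias introduced by replacing the ideal $\boldsymbol{S}$ of Proposition~\ref{w-lapp} with its estimate $\widehat{\boldsymbol{S}}$. Because $\alpha+\beta/c=1$, Proposition~\ref{loss-wsc} collapses the loss to $\mathcal{L}_{wsc}(f)=-2\alpha \mathcal{L}_{1}(f)-2\beta \mathcal{L}_{2}(f)+\mathcal{L}_{3}(f)$, so only three terms need to be controlled and $\widehat{\boldsymbol{S}}$ enters exclusively through $\mathcal{L}_{2}$. I would introduce an intermediate population quantity $\mathcal{L}_{wsc}^{\widehat{\boldsymbol{S}}}(f)$ identical to $\mathcal{L}_{wsc}(f)$ except that $\boldsymbol{S}$ is replaced by $\widehat{\boldsymbol{S}}$ inside $\mathcal{L}_{2}$; then, adding and subtracting $\mathcal{L}_{wsc}^{\widehat{\boldsymbol{S}}}(\widehat{f}(D))$, $\widehat{\mathcal{L}}_{wsc}(\widehat{f}(D))$, $\widehat{\mathcal{L}}_{wsc}(f^{*})$, $\mathcal{L}_{wsc}^{\widehat{\boldsymbol{S}}}(f^{*})$ and dropping the non-positive ERM gap $\widehat{\mathcal{L}}_{wsc}(\widehat{f}(D))-\widehat{\mathcal{L}}_{wsc}(f^{*})$ reduces the target to bounding $2\sup_{f\in\mathcal{F}}|\mathcal{L}_{wsc}(f)-\mathcal{L}_{wsc}^{\widehat{\boldsymbol{S}}}(f)|$ (a bias term) and $2\sup_{f\in\mathcal{F}}|\mathcal{L}_{wsc}^{\widehat{\boldsymbol{S}}}(f)-\widehat{\mathcal{L}}_{wsc}(f)|$ (a concentration term).

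For the bias, conditioning on $(\tilde x,\tilde x')$ in $\mathcal{L}_{2}^{\widehat{\boldsymbol{S}}}$ and taking expectation over the weak labels rewrites the kernel as $\bigl(\widehat{\boldsymbol{S}}(\tilde x)\P(\boldsymbol{q}\mid\tilde x)\bigr)^{T}\bigl(\widehat{\boldsymbol{S}}(\tilde x')\P(\boldsymbol{q}\mid\tilde x')\bigr)$, whereas the ideal $\boldsymbol{S}$ of Proposition~\ref{w-lapp} would leave $\P(\boldsymbol{y}\mid\tilde x)^{T}\P(\boldsymbol{y}\mid\tilde x')$. A two-term triangle inequality, together with $|f_{x}f_{x'}^{T}|\le d\|\mathcal{F}\|_{\infty}^{2}$ and $\|\P(\boldsymbol{y}\mid\tilde x)\|_{1}=1$, turns this into $\E_{\tilde x}\|\widehat{\boldsymbol{S}}(\tilde x)\P(\boldsymbol{q}\mid\tilde x)-\P(\boldsymbol{y}\mid\tilde x)\|_{1}=\Delta(\widehat{\boldsymbol{S}})$, producing the $\beta\eta_{2}\Delta(\widehat{\boldsymbol{S}})$ contribution.

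For the concentration term I would treat the three pieces separately. $\mathcal{L}_{1}$ depends only on augmented views of individual samples and uses the full pool of size $n_{u}+n_{q}$; $\mathcal{L}_{3}$ is a second-order U-statistic on the same pool; and $\mathcal{L}_{2}^{\widehat{\boldsymbol{S}}}$ is a U-statistic on the $n_{q}$ labeled samples whose integrand is bounded pointwise by $\sup_{x}\|\widehat{\boldsymbol{S}}(x)^{T}\widehat{\boldsymbol{S}}(x)\|_{\infty}\cdot d\|\mathcal{F}\|_{\infty}^{2}$. For each U-statistic I would apply the standard half-sample trick, splitting the data so that each empirical pair sum becomes an average of sums of independent terms, then symmetrize with a Rademacher sequence (taking a maximum over the $d$ coordinates to control $f_{x}f_{x'}^{T}$ via Definition~\ref{mper}) and apply McDiarmid's bounded-differences inequality. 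This yields Rademacher terms at scale $(n_{u}+n_{q})/2$ with tail $\eta(n_{u}+n_{q},\delta)$ for $\mathcal{L}_{1},\mathcal{L}_{3}$, and at scale $n_{q}/2$ with tail $\eta(n_{q},\delta)$ for $\mathcal{L}_{2}$, the latter carrying the extra sup-norm of $\widehat{\boldsymbol{S}}^{T}\widehat{\boldsymbol{S}}$ as its Lipschitz factor. Absorbing dimensional and $\|\mathcal{F}\|_{\infty}$ constants into $\eta_{0},\dots,\eta_{4}$ and applying a union bound over the two high-probability events delivers Equation~\ref{losserrorb}.

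The main obstacle is the pair (U-statistic) structure of $\mathcal{L}_{2}$ and $\mathcal{L}_{3}$, which prevents a direct i.i.d.\ symmetrization; the half-sample decomposition has to be organized so that for $\mathcal{L}_{2}$ the data-dependent weighting by $\widehat{\boldsymbol{S}}$ is cleanly factored out as the Lipschitz constant $\sup_{x}\|\widehat{\boldsymbol{S}}(x)^{T}\widehat{\boldsymbol{S}}(x)\|_{\infty}$ appearing in the bound, rather than being swallowed into an inflated Rademacher complexity or a variance-type constant whose dependence on the estimated $\widehat{\boldsymbol{S}}$ is harder to read off.
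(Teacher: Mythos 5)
Your plan reproduces the paper's own argument: the same decomposition via an intermediate population loss built from $\widehat{\boldsymbol{S}}$ (yielding a bias term plus a uniform concentration term after dropping the ERM gap), the same triangle-inequality bound of the bias by $\beta\eta_{2}\Delta(\widehat{\boldsymbol{S}})$ using $|f_{x}f_{x'}^{T}|\le d\|\mathcal{F}\|_{\infty}^{2}$, and the same U-process half-sample sub-sampling with Rademacher symmetrization and a Talagrand-type contraction that factors out $\sup_{x}\|\widehat{\boldsymbol{S}}(x)^{T}\widehat{\boldsymbol{S}}(x)\|_{\infty}$ as the Lipschitz constant on the $n_{q}$-sample term. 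This is essentially identical to the paper's Lemmas in Appendix B.2, so the proposal is correct and takes the same approach.
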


\textbf{Remark:} Theorem \ref{losserror} demonstrates that in our framework, the  features learned through finite samples and weak supervision information can approximate the features derived from the perturbation graph which incorporates supervision information and thus has been proved in the Theorem \ref{epg} to possess good clustering properties. 
Specifically, the approximation error can be regarded as two parts. The first part, comprising the first and second terms in Equation \ref{losserrorb}, is the finite sample error when using all the training samples to approximate the self-supervised graph, which is derived through the standard finite sample approximation analysis. The second part is the error when using the samples with weakly supervision information to approximate the perturbed part. The third and fourth terms in the Equation \ref{losserrorb} describe two aspects of this error respectively. The third term describes the estimation error brought about by the bias of $\widehat{\boldsymbol{S}}$ , which can be regarded as the bias term of the error. The fourth term describes the approximation error due to limited samples and can be regarded as the variance term of the error. $\sup_{x \in \mathcal{X}} \left| \widehat{\boldsymbol{S}}(x) \widehat{\boldsymbol{S}}(x)^{T}\right|$ illustrates that the presence of weakly-supervised information exacerbates the learning difficulty, even when an unbiased estimate of $\widehat{\boldsymbol{S}}$ is utilized. From this perspective, the two terms can be regarded as the trade-off between variance and bias. The proof of Theorem \ref{losserror} is in Appendix \ref{pt37}.

Theorem D.7 in \citealt{HaoChen_Wei_Gaidon_Ma_2021} characterize the error propagation from pre-training to the downstream task, combine with our Theorems \ref{epg} and \ref{losserror}, we can obtain final  $\epsilon(\widehat{f}(D))$ as follows:

\begin{corollary}
\label{end}
(Main results: linear probe error of feature learned from our framework) Given representation dimension $d > 4r$, and $\alpha, \beta$ satisfy 
$\alpha + \frac{\beta}{c} = 1$, $\mathcal{A}$ is  $\gamma \text{-} consistent$ with $\gamma > 0$, then with probability at least $1 - \delta$, we have:
\begin{equation}
\label{ferror}
\begin{aligned}
&\varepsilon (\widehat{f}(D))  \le \widetilde{O}\left(\gamma \left(\alpha^{*} \rho^{u}_{\left \lfloor  \frac{d}{2}  \right \rfloor } + \beta^{*} \left(1 - \frac{2c}{d}\right) \right)^{-2}\right)
\\& + (\alpha + 1)  \left(\eta_{0}^{\prime} \widehat{\mathcal{R}}_{\frac{n}{2}} (\mathcal{F}) +  \eta_{1}^{\prime} \eta (n, \delta) \right) + \beta \eta_{2}^{\prime} \Delta(\widehat{\boldsymbol{S}})
\\& + \beta \sup_{x \in \mathcal{X}} \left\|\widehat{\boldsymbol{S}}(x)^{T} \widehat{\boldsymbol{S}}(x)  \right\|_{\infty} \left( \eta_{3}^{\prime} \widehat{\mathcal{R}}_{\frac{n_{q} }{2}}(\mathcal{F}) + \eta_{4}^{\prime} \eta(n_{q}, \delta) \right) ,
\end{aligned}
\end{equation}
where $[\eta_{0}^{\prime}, \eta_{1}^{\prime},\eta_{2}^{\prime},\eta_{3}^{\prime}, \eta_{4}^{\prime} ] = \frac{d}{\Delta_{\lambda}^{2}}[\eta_{0}, \eta_{1}, \eta_{2}, \eta_{3}, \eta_{4}]$ with $\Delta_{\lambda}$   is the eigenvalue gap between the $\frac{3}{4}d$-th and the $d$-th eigenvalue of perturbation graph, and $n = n_{u} + n_{q}$.
\end{corollary}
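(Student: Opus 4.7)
The strategy is to chain Theorems \ref{epg} and \ref{losserror} through the error-propagation lemma of \citealt{HaoChen_Wei_Gaidon_Ma_2021} (their Theorem D.7), which converts a pre-training contrastive-loss gap into a linear-probe-error gap. Concretely, I would start from the trivial decomposition
\begin{equation*}
\varepsilon(\widehat{f}(D)) = \varepsilon(f^{*}) + \bigl(\varepsilon(\widehat{f}(D)) - \varepsilon(f^{*})\bigr),
\end{equation*}
bound the first summand via Theorem \ref{epg}, and bound the second by composing the propagation lemma with Theorem \ref{losserror}.

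For the first summand, Theorem \ref{epg} applies directly under the assumption $\alpha+\beta/c=1$ and $\gamma$-consistency of $\mathcal{A}$, producing exactly the first line of Equation~\ref{ferror}. (The hypothesis $d>4r$ stated in the corollary is stronger than the $d>2c$ required by Theorem \ref{epg}, so this step is immediate; the stronger dimension condition is needed for the next step, to guarantee a meaningful eigenvalue gap $\Delta_{\lambda}$ between $\lambda_{3d/4}$ and $\lambda_{d}$.) For the second summand, Theorem D.7 of HaoChen et al.\ asserts that, when $f^{*}$ arises as the top-$d$ eigen-decomposition of $\widetilde{\boldsymbol{A}}$ and $\Delta_{\lambda}>0$,
\begin{equation*}
\varepsilon(\widehat{f}(D)) - \varepsilon(f^{*}) \; \le \; \tfrac{d}{\Delta_{\lambda}^{2}} \bigl(\mathcal{L}_{wsc}(\widehat{f}(D)) - \mathcal{L}_{wsc}(f^{*})\bigr).
\end{equation*}
Plugging in the high-probability bound from Theorem \ref{losserror} and absorbing the factor $d/\Delta_{\lambda}^{2}$ into the primed constants $\eta_{i}'$ delivers the remaining three lines of Equation~\ref{ferror}. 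No union bound is needed, since Theorem \ref{losserror} is the only probabilistic ingredient.

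The main obstacle, and the only step that is not a mechanical assembly, is justifying that the propagation lemma applies verbatim to $\mathcal{L}_{wsc}$ built from the \emph{perturbation} adjacency matrix $\boldsymbol{A}$ rather than the self-supervised matrix treated in the original reference. I would argue this by invoking Proposition \ref{loss-wsc}, which identifies $\mathcal{L}_{wsc}$ (up to an additive constant) with the Eckart--Young low-rank objective $\|\widetilde{\boldsymbol{A}} - \boldsymbol{F}\boldsymbol{F}^{T}\|_{F}^{2}$ for the same normalized $\widetilde{\boldsymbol{A}}$. Since HaoChen et al.'s proof of Theorem D.7 depends only on (i) the loss being of Eckart--Young form with respect to a positive semidefinite adjacency matrix and (ii) the existence of an eigenvalue gap, both conditions transfer as long as $\boldsymbol{A}$ is PSD with a nontrivial gap at index $d$. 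The former is immediate from the construction $\boldsymbol{A} = \alpha \boldsymbol{A}^{u} + \beta \boldsymbol{A}^{wl}(\boldsymbol{S})$ as a nonnegative combination of PSD matrices; the latter is exactly what the hypothesis $d>4r$ together with the definition of $\Delta_{\lambda}$ encodes.

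Finally, I would collect all bounds, observe that $n_u+n_q=n$, and note that the sample-dependent terms in Theorem \ref{losserror} split naturally between the $n$-sample Rademacher/concentration quantities (arising from the self-supervised part) and the $n_q$-sample ones (arising from the weakly labeled part), matching the structure of Equation~\ref{ferror} exactly. The event of probability at least $1-\delta$ on which the bound holds is the same event provided by Theorem \ref{losserror}, so no additional probabilistic accounting is required.
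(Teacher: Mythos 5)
Your proposal is correct and follows essentially the same route as the paper: the paper's proof is exactly the direct substitution of Theorems \ref{epg} and \ref{losserror} into the error-propagation result (Theorem D.7 of HaoChen et al., restated in Appendix \ref{dc38}), with the factor $d/\Delta_{\lambda}^{2}$ absorbed into the primed constants. Your additional remarks justifying that the propagation lemma transfers to the perturbation graph (via the Eckart--Young form of $\mathcal{L}_{wsc}$ from Proposition \ref{loss-wsc} and the eigenvalue gap $\Delta_{\lambda}$) are a reasonable elaboration of what the paper leaves implicit, not a different argument.
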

\begin{proof}
By substituting Theorems \ref{epg} and \ref{losserror} into Theorem D.7 in \citealt{HaoChen_Wei_Gaidon_Ma_2021}, the corollary can be directly obtained. The restatement of Theorem D.7 in \citealt{HaoChen_Wei_Gaidon_Ma_2021} can been found in Appendix \ref{dc38}. 
\end{proof}

\textbf{Remark:} Corollary \ref{end} demonstrates that the performance of the features learned within our framework is determined by two aspects. Firstly, the quality of the clustering structure of the constructed augmented graph. Secondly, the error arising from approximating the constructed graph using finite samples with weakly-supervised information. As $\beta$ increases, the clustering structure of the constructed graph improves, but the approximation error also increase accordingly.
This suggests that in practical representation learning, selecting an appropriate $\beta$ is crucial for effectively integrating self-supervised and supervised information.

\section{Experiment}
\begin{table*}[t!]
\centering
\caption{Comparisons with each methods on simulated NLL datasets. Each run has been three times with different randomly generated noise, and the mean of the last five epochs are reported.
}
\label{tab:main-noise}
\resizebox{0.98 \textwidth}{!}{%
\begin{tabular}{@{}l|cccc|ccccc@{}}
\toprule
Dataset & \multicolumn{4}{c|}{CIFAR-10}               & \multicolumn{5}{c}{CIFAR-100}      \\ \midrule
Noise Type                         & \multicolumn{3}{c}{Sym.} & Asym. & \multicolumn{4}{c}{Sym.} & Asym.  \\ \midrule
Noise Ratio                          & 0.5     & 0.8     & 0.9    & 0.4   & 0.2    & 0.5     & 0.8     & 0.9    & 0.4    \\ \midrule
CE                       & 80.70    & 65.80 & 42.70  & 82.20      & 58.10 &  47.10    & 23.80  & 3.50  & 43.34    \\
DivideMix                & 94.60    & 93.20  & 76.00  & 93.40  & 77.10 &  74.60    & 60.20   & 31.00 & 55.57   \\
ELR                     &  94.80    & 93.30  & 78.70  & 93.00 & 77.90 &  73.80    & 60.80  & 33.40  & 69.94  \\
SOP &  95.50          & 94.00       & -         & 93.80     &  78.80          & 75.90  & 63.30           & -   & 69.53 \\
GFWS &  \textbf{96.60} & 94.12 & 84.22 & 94.75 & 77.49 &   75.51     & 66.46 & 45.82 & 75.82   \\
ULAREF & 94.31 & 91.47 & - & 92.56 & 76.16 & 72.39 & 54.72 & - & 76.11 \\
ProMix & 93.23 & 83.11 & - & 89.83 & 75.43 & 71.64 & 43.35 & - & 72.13 \\
\midrule
MOIT & 90.00 & 79.00  & 69.60 & 92.00 & 73.00 & 64.60 & 46.50 & 36.00 & 71.55 \\
Sel-CL & 93.90 & 89.20 & 81.90 & 93.40 & 76.50 & 72.40 & 57.70 & 29.30 & 74.20 \\
TCL & 93.90 & 92.50 & 89.40 & 92.60 & 78.00 & 73.30 & 65.00 & 54.50 & 73.10 \\
\midrule
\cellcolor{Gray}WSC (Ours) & \cellcolor{Gray} 95.79\scriptsize{±0.19} & \cellcolor{Gray} \textbf{94.62\scriptsize{±0.07}} & \cellcolor{Gray} \textbf{90.93\scriptsize{±0.14}} & \cellcolor{Gray} \textbf{95.22\scriptsize{±0.09}} &  \cellcolor{Gray} \textbf{ 79.21\scriptsize{±0.13}}  & \cellcolor{Gray} \textbf{ 77.51\scriptsize{±0.17}}     & \cellcolor{Gray} \textbf{71.92\scriptsize{±0.17}}& \cellcolor{Gray} \textbf{61.32\scriptsize{±0.15}}& \cellcolor{Gray} \textbf{76.31\scriptsize{±0.32}}  \\  \bottomrule
\end{tabular}%
}
\vspace{-0.1in}
\end{table*}
\begin{table*}[t!]
\centering
\caption{Comparisons with each methods on simulated PLL datasets. Each run has been repeated three times with different randomly generated partial labels, and the mean and standard deviation values of the last five epochs are reported.
}
\label{tab:main-PLL}
\resizebox{0.98 \textwidth}{!}{
\begin{tabular}{@{}l|cccc|cccc|c@{}}
\toprule
Dataset & \multicolumn{4}{c|}{CIFAR-10}               & \multicolumn{4}{c|}{CIFAR-100}   & \multicolumn{1}{c}{CUB-200}   \\ \midrule
Partial Ratio                         & 0.5     & 0.6     & 0.7    & 0.8   & 0.05    & 0.1     & 0.2     & 0.3   & 0.05    \\ \midrule
LWS                  &  85.30\scriptsize{±0.36}    & 80.33\scriptsize{±1.33}  & 72.11\scriptsize{±0.58}  & 58.49\scriptsize{±0.33}   & 54.78\scriptsize{±0.26}   &  50.44\scriptsize{±0.38}    & 32.44\scriptsize{±0.48}   & 25.49\scriptsize{±0.58} &  39.74\scriptsize{±0.43}    \\
    PRODEN            & 89.82\scriptsize{±0.38}    & 87.44\scriptsize{±0.35}  & 86.44\scriptsize{±0.20}  & 85.78\scriptsize{±0.55}  & 72.65\scriptsize{±0.09} &  71.05\scriptsize{±0.23}   & 68.44\scriptsize{±0.79}   & 55.21\scriptsize{±0.44} & 62.56\scriptsize{±0.10}   \\
CC                    &  82.30\scriptsize{±0.28}    & 80.08\scriptsize{±0.07}  & 77.15\scriptsize{±0.45}  & 75.94\scriptsize{±0.88} & 63.74\scriptsize{±0.51} &  57.55\scriptsize{±0.19}    & 50.41\scriptsize{±0.47}  & 40.28\scriptsize{±0.29}  & 55.61\scriptsize{±0.02}  \\
MSE &  75.47\scriptsize{±0.85}          & 73.64\scriptsize{±0.28}       & 69.09\scriptsize{±0.67}         & 64.32\scriptsize{±0.33}    & 51.17\scriptsize{±0.25}         & 47.33\scriptsize{±0.94}  & 42.55\scriptsize{±0.33}         & 35.11\scriptsize{±0.49} & 22.07\scriptsize{±2.36} \\
GFWS &  95.22\scriptsize{±0.08} & 95.01\scriptsize{±0.03} & 94.21\scriptsize{±0.14} & 93.58\scriptsize{±0.21} & 76.89\scriptsize{±0.32} &   75.95\scriptsize{±0.10}     & 73.18\scriptsize{±0.51} & 60.25\scriptsize{±0.37} & 70.77\scriptsize{±0.20}   \\
RCR & 95.01\scriptsize{±0.03} & 94.37\scriptsize{±0.07} & 93.28\scriptsize{±0.05} & 91.67\scriptsize{±0.10} & 77.01\scriptsize{±0.22} & 75.85\scriptsize{±0.35} & 72.58\scriptsize{±0.31} & 57.48\scriptsize{±0.95} & - \\
\midrule
PiCO                       & 94.63\scriptsize{±0.21}    & 94.25\scriptsize{±0.31} & 93.68\scriptsize{±0.06}  & 92.01\scriptsize{±0.18}      & 74.19\scriptsize{±0.28}&  72.74\scriptsize{±0.64}    & 70.89\scriptsize{±0.44}  & 61.35\scriptsize{±0.88}  & 72.12\scriptsize{±0.74}    \\
\midrule
\cellcolor{Gray}WSC (Ours) & \cellcolor{Gray}\textbf{95.41\scriptsize{±0.08}} & \cellcolor{Gray}\textbf{95.22\scriptsize{±0.10}} & \cellcolor{Gray}\textbf{95.03\scriptsize{±0.04}} & \cellcolor{Gray}\textbf{94.41\scriptsize{±0.05}} &  \cellcolor{Gray}\textbf{77.88\scriptsize{±0.19}}  & \cellcolor{Gray}\textbf{77.26\scriptsize{±0.30}} & \cellcolor{Gray}\textbf{75.13\scriptsize{±0.24}}& \cellcolor{Gray}\textbf{69.15\scriptsize{±0.05}}& \cellcolor{Gray}\textbf{74.55\scriptsize{±0.17}}  \\  \bottomrule
\end{tabular}
}

\vspace{-0.1in}
\end{table*}

In this section, we empirically validate the efficacy of our framework across two paradigms of weakly-supervised learning, namely noisy label learning (NLL), partial label learning (PLL). For implementation, our weakly-supervised contrastive (WSC) loss is combined only with the simplest baseline. Detailed implementation can be found in Appendix \ref{app_imp}.
\subsection {Learning with Noisy Labels}

\textbf{Dataset and Experimental Setting.} For the case of noisy label learning, following the settings in \citealt{Li2020DivideMix:, Liu2020ELR}, we verify the effectiveness of our method on CIFAR-10 and CIFAR-100 \cite{cifar}   with two types of label noise: \textit{symmetric} and \textit{asymmetric}.
Symmetric noise refers to the random reassignment of labels within the training set according to a predefined noise ratio. In contrast, asymmetric noise is designed to replicate real-world label noise, where labels are replaced exclusively by those of similar classes (\textit{e.g.}, dog $\leftrightarrow$ cat). In order to fully verify the effectiveness of the proposed method, we select ten baselines for NLL comparison under the same experimental setting: DivideMix \cite{Li2020DivideMix:}, ELR \cite{Liu2020ELR}, SOP \cite{liu2022SOP}, GFWS \cite{chen2024imprecise} and ProMix \cite{Xiao2023Promix}, which do not incorporate contrastive learning, as well as MOIT \cite{2021_CVPR_MOIT}, Sel-CL \cite{Li2022SelCL}, and TCL \cite{huang2023twin}, which leverage a contrastive learning module.
Due to space limitations, the implementation details, along with additional comparisons on the instance-dependent noisy dataset CIFAR-10N \cite{wei2022learning} and the realistic, larger-scale instance-dependent noisy dataset Clothing1M \cite{Xiao_2015_CVPR}, are provided in Appendix \ref{app_E1}.

\textbf{Experimental Result.} Table \ref{tab:main-noise} shows the classification accuracy for each comparative approach. Most of the experimental results are the same as those reported in their original paper, except for ProMix, which we reproduce due to differences in our settings. 
The proposed method demonstrates competitive performance across various settings and datasets. It outperforms existing approaches in most scenarios, including those based on other contrastive learning techniques. Notably, our method shows significant improvements under high noise rates. For instance, on CIFAR-100 with a 90\% noise rate, it surpasses the previous best TCL by \textbf{6.85\%}, highlighting the substantial performance gains our framework offers for NLL.

\subsection {Learning with Partial Labels}

\textbf{Dataset and Experimental Setting.} For the case of partial label learning, following \citealt{wang2022pico}, we verify the effectiveness of our method on CIFAR-10 \cite{cifar}, CIFAR-100 \cite{cifar} and CUB-200 \cite{cub200} with different partial ratios.
The partial label datasets are generated by flipping the negative labels to candidate labels with a specified partial ratio.
In other words, all $c-1$ negative labels can be uniformly aggregated into the ground truth label to form the set of candidate labels.
We choose seven baselines for PLL using same experiment setting for comparison: LWS \cite{wen2021leveraged},  PRODEN \cite{pmlrlv2020}, CC \cite{partial2020lv}, MSE \cite{feng2020learning}, RCR \cite{Wu2022RevisitingCR} and GFWS \cite{chen2024imprecise}, which do not
incorporate contrastive learning, as well as PiCO \cite{wang2022pico}, which employs a contrastive learning module.
The implementation details, along with additional comparisons using different partial label ratios on CUB-200 and the hierarchical-generated partial label dataset CIFAR-100-H \cite{wang2022pico}, are provided in Appendix \ref{app_E2}. 

\textbf{Experimental Result.} Table \ref{tab:main-PLL} shows the main results for PLL. Our proposed method achieves the best performance across almost all settings compared to the baseline methods.
It is worth noting that our method achieves a larger performance gap among the previous methods when the partial rate is large. Especially on CIFAR-100 with a partial ratio of 0.3, the proposed method outperforms previous best method by \textbf{7.8\%}, which is a strong evidence that our method can achieve better results on these more difficult datasets.

\section{Conclusion}

This paper introduces a novel approach for contrastive learning that incorporates weakly-supervised information through graph spectral theory. Extensive experiments and theoretical analysis validate its effectiveness. We propose semantic similarity as a continuous measure of class membership, offering new insights into utilizing ambiguous supervision in contrastive learning. Our framework has the potential to be extended to a wider range of scenarios, including bag-level weak supervision and multi-modal matching.

\section*{Impact Statement}
Our work may have broader implications, including the potential for increased unemployment among traditional data annotators as annotation quality standards decline. Additionally, careful attention is needed to address privacy concerns, as using raw data from web scraping can effectively train high-accuracy models.


\bibliographystyle{plainnat}
\bibliography{main}

\newpage
\appendix
\onecolumn
\icmltitle{Supplementary Material}

\section{Notation}
\begin{table*}[h!]
\centering
\caption{List of common mathematical symbols used in this paper.}
\begin{tabular}{cl}
\toprule
\textbf{Symbol} & \textbf{Definition} \\
\midrule
$\mathcal{X}, \mathcal{Y} = \{y^{1}, \dots , y^{c}\}$ & Instance space and label space respectively \\
$\mathcal{Q} = \{q^{1}, \dots, q^{v}\}$ & Weakly-supervised information space \\
$\mathcal{D}$ & Joint distribution on $(\mathcal{X}, \mathcal{Y})$ for target pattern \\
$\mathcal{D}_{\mathcal{Q}}$ & Joint distribution on $(\mathcal{X}, \mathcal{Q})$ for weakly-supervised pattern which generated form $\mathcal{D}$ \\
$\boldsymbol{T}(x) \in \mathbb{R}^{v \times c}$ & Transition matrix with $\boldsymbol{T}(x)_{i,j} = \mathcal{D}_{\mathcal{Q}}(q^{i} \mid x, y^{j} )$ \\
$\mathcal{D}(\boldsymbol{y} \mid x)$ & The abbreviation of vector $\left[\mathcal{D}(y^{1}  \mid x), \dots, \mathcal{D} (y^{c} \mid x)\right]^{T}$ \\
$\mathcal{D}_{\mathcal{Q}}(\boldsymbol{q} \mid x)$ & The abbreviation of vector $\left[\mathcal{D}_{\mathcal{Q}}(q^{1}  \mid x), \dots, \mathcal{D}_{\mathcal{Q}} (q^{v} \mid x)\right]^{T}$ \\
 $\mathbb{P}$ & The abbreviation of probability on joint distribution $(\mathcal{X}, \mathcal{Y}, \mathcal{Q})$ induced by $\mathcal{D}$ and $\mathcal{D}_{\mathcal{Q}}$ \\
$D_{\mathcal{Q}}$ & Set of training samples with corresponding weakly-supervised information\\
$D_{\mathcal{U}}$ & Set of unlabeled training samples\\
$D = D_{\mathcal{Q}} \cup D_{\mathcal{U}}$ & Final training dataset  \\
$n, n_{q}, n_{u}$ & Size of $D, D_{\mathcal{Q}}, D_{\mathcal{U}}$, respectively \\ 
$(\nx, \ny), (\nxx, \nyy)$ & Two naturally data points with their label sampling form $\mathcal{D}$ \\
$x, x'$ & Two augmentation data points form $\nx$ and $\nxx$ respectively \\
$\mathcal{A}(\dots \mid \nx)$ & Distribution of the data augmentation \\
$w_{x, x'}$ & The edge weight of the edge connecting the two augmented samples constructed\\
$w_{x, x'}^{u}$ & The edge weight constructed through self-supervised connectivity\\
$w_{x, x'}^{l}$ & The edge weight constructed through fully-supervised \\
$w_{x, x'}^{wl}$ & The edge weight constructed through weakly-supervised connectivity\\
$\alpha, \beta$ & perturbation coefficients for integrating self-supervised and supervised information.\\
$\boldsymbol{A}$ & Adjacency matrix of constructed augmentation graph\\
$\widetilde{\boldsymbol{A}}$ &  Normalized adjacency matrix of constructed augmentation graph \\
$\boldsymbol{S} : \mathcal{X} \to \mathbb{R}^{c \times v}$ & The recovery matrix used for compute the semantic similarity of samples \\
$S\left((\nx, \nq), (\nxx, \nqq)\right)$ & Computed semantic similarity of samples with corresponding weakly-supervised information\\
$\widehat{\boldsymbol{S}} :\mathcal{X} \to \mathbb{R}^{c \times v} $ & Constructed approximate recovery matrix \\
$\boldsymbol{F}$ & Embedding matrix with $\boldsymbol{F}_{x, :} \in \mathbb{R}^{d}$ is embedding of sample x\\
$f: \mathcal{X} \to \mathbb{R}^{d}$ & Learned feature embedding \\  
$\boldsymbol{B} \in \mathbb{R}^{d \times c}$ & Linear classifier weight\\
$h_{f, \boldsymbol{B}}(x)$ & Prediction for augmentation data $x$ with feature $f$ and classifier $\boldsymbol{B}$\\
$\widetilde{h}_{f, \boldsymbol{B}}(\nx)$ & Ensemble prediction for naturally data $\nx$ with feature $f$ and classifier $\boldsymbol{B}$\\
$g(x)$ & Posterior category probability give $x$ predicted by neural networks\\ 
$\varepsilon(f)$ &  Linear probe error of feature embedding $f$\\
$\rho$ & Sparsest partition of a graph in Definition. \ref{def33} \\
$\rho^{u}$ & Sparsest partition of self-supervised connectivity graph $\boldsymbol{A}^{u}$\\
$\eta_{0}, \dots, \eta_{4}$ & Constant only related to $\mathcal{F}_{\infty}$ and features dimension $d$\\
$\Delta_{\lambda}$ & Eigenvalue gap between the $\left \lfloor 3d/4 \right \rfloor $-th and the $d$-th eigenvalue of graph $\boldsymbol{A}$\\
$\eta_{0}^{\prime}, \dots, \eta_{4}^{\prime}$ & Proportional amplification of constant $\eta_{0}, \dots, \eta_{4}$ \\
\bottomrule
\end{tabular}
\end{table*}

\newpage
\def \Iyy {\mathbb{I}\left[\ny = \nyy \right]}
\def \bA {\boldsymbol{A}}
\def \bF {\boldsymbol{F}}
\section{Proof of Section. \ref{section2}}
In this section, we provide the proof details of the proposition in the Section. \ref{section2}.
\subsection{Proof of Proposition. \ref{w-lapp}}
\label{pp21}
\begin{proposition}
\label{w-lapp2}
(Recap of Proposition \ref{w-lapp}). For any $\boldsymbol{S}:\mathcal{X} \to \mathbb{R}^{c \times n}$ that satisfies the condition: $\P(\boldsymbol{y} \mid x) = \boldsymbol{S} (x)\P(\boldsymbol{q} \mid x)$ holds almost everywhere in $\mathcal{X}$, the following equation holds:
\begin{equation}
    w_{x, x'}^{wl}(\boldsymbol{S})  \triangleq \  \E_{(\nx, \nq), (\nxx, \nqq) \sim \P(\mathcal{X}, \mathcal{Q})^{2}} \left[S\left((\nx, \nq), (\nxx, \nqq)\right)  \mathcal{A}(x \mid \tilde{x}) \A(x' \mid \nxx) \right] = w_{x, x'}^{l}, 
\end{equation}
where $S\left((\nx, \nq), (\nxx, \nqq)\right) = \boldsymbol{S}(\nx)_{:, \nq} ^{T} \boldsymbol{S} (\nxx)_{:, \nqq}$.
\end{proposition}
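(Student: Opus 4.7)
The plan is to condition on the underlying natural samples $\nx$ and $\nxx$ and push the expectation over the weak labels $\nq, \nqq$ inside. Since the two pairs $(\nx,\nq)$ and $(\nxx,\nqq)$ are drawn independently from $\pxq$, conditioning on $(\nx,\nxx)$ makes $\nq$ and $\nqq$ conditionally independent, and the augmentation kernels $\A(x\mid\nx), \A(x'\mid\nxx)$ do not depend on $\nq$ or $\nqq$. Writing
\begin{equation*}
w_{x,x'}^{wl}(\boldsymbol{S}) = \E_{\nx,\nxx \sim \P(\mathcal{X})^{2}} \Bigl[ \A(x\mid\nx)\A(x'\mid\nxx)\, \E_{\nq\mid\nx,\,\nqq\mid\nxx}\bigl[\boldsymbol{S}(\nx)_{:,\nq}^{T}\boldsymbol{S}(\nxx)_{:,\nqq}\bigr] \Bigr]
\end{equation*}
reduces the problem to computing the inner expectation.

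The key identity I would establish is
\begin{equation*}
\E_{\nq \sim \P(\mathcal{Q}\mid\nx)}\!\bigl[\boldsymbol{S}(\nx)_{:,\nq}\bigr] \;=\; \sum_{q \in \mathcal{Q}} \P(q\mid\nx)\,\boldsymbol{S}(\nx)_{:,q} \;=\; \boldsymbol{S}(\nx)\,\P(\boldsymbol{q}\mid\nx) \;=\; \P(\boldsymbol{y}\mid\nx),
\end{equation*}
where only the last equality uses the hypothesis on $\boldsymbol{S}$ (and it holds for almost every $\nx$, which suffices since we integrate against $\P(\mathcal{X})^{2}$). By the same argument for $\nxx$ and by conditional independence of $\nq$ and $\nqq$, the inner expectation factors into $\P(\boldsymbol{y}\mid\nx)^{T}\P(\boldsymbol{y}\mid\nxx) = \sum_{y\in\mathcal{Y}}\P(y\mid\nx)\P(y\mid\nxx)$.

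Finally, I would recognise that same sum as $\E_{\ny\mid\nx,\,\nyy\mid\nxx}[\I[\ny=\nyy]]$ (again using conditional independence of $\ny$ and $\nyy$ given $(\nx,\nxx)$), and then reassemble via the tower property to obtain exactly the definition of $w_{x,x'}^{l}$. The only nontrivial ingredient is the application of $\boldsymbol{S}(x)\P(\boldsymbol{q}\mid x)=\P(\boldsymbol{y}\mid x)$; the rest is a routine Fubini/independence manipulation. The main (mild) obstacle is just bookkeeping the conditioning carefully and invoking the \emph{almost everywhere} qualifier when integrating against the marginal over $(\nx,\nxx)$, which is immediate since the set where the hypothesis fails has $\P(\mathcal{X})$-measure zero and therefore contributes nothing to the outer expectation.
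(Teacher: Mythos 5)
Your proposal is correct and follows essentially the same route as the paper's proof: both arguments condition on the natural pair $(\nx,\nxx)$, integrate out the weak labels to reduce $\E[\boldsymbol{S}(\nx)_{:,\nq}^{T}\boldsymbol{S}(\nxx)_{:,\nqq}]$ to $\P(\boldsymbol{y}\mid\nx)^{T}\P(\boldsymbol{y}\mid\nxx)$ via the hypothesis $\boldsymbol{S}(x)\P(\boldsymbol{q}\mid x)=\P(\boldsymbol{y}\mid x)$, and recognise this same inner product in the expansion of $w^{l}_{x,x'}$. The paper carries out the conditioning by writing the expectations explicitly as integrals over $\nx,\nxx$ and sums over labels, whereas you phrase it with conditional independence and the tower property, but the manipulations are identical.
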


\begin{proof}
The proof is direct, on the one hand, we can expand $w_{x, x'}^{l}$ and obtain:
\begin{equation}
\label{211}
\begin{aligned}
w_{x,x'}^{l} &= \E_{(\nx ,\ny), (\nxx ,\nyy ) \sim \P(\mathcal{X},\mathcal{Y})^{2}}\left[\mathbb{I}\left[\ny = \nyy \right]\A(x \mid \nx) \A(x' \mid \nxx)\right]\\
& = \int  \sum_{\ny,\nyy \in \mathcal{Y}^{2}} \P(\nx, \ny) \P(\nxx, \nyy) \Iyy \A(x \mid \tilde{{x}}) \A(x' \mid \tilde{x'}) \mathrm{d} \nx  \mathrm{d}  \nxx \\
& = \int \P(\nx)\P(\nxx)\mathrm{d} \nx  \mathrm{d}  \nxx \sum_{\ny ,\nyy \in \mathcal{Y}^{2}} \P(\ny \mid \nx) \P(\nyy \mid \nxx) \Iyy \A(x \mid \nx) \A(x' \mid \nxx)\\
& = \int \P(\nx)\P(\nxx)\mathrm{d} \nx  \mathrm{d}  \nxx \sum_{\ny \in \mathcal{Y}} \P(\ny \mid \nx) \P(\ny \mid \nxx) \A(x \mid \nx) \A(x' \mid \nxx)\\
& =  \int \P(\nx)\P(\nxx)\mathrm{d} \nx  \mathrm{d}  \nxx \left( \P(\boldsymbol{y} \mid \nx)^{T} \P(\boldsymbol{y} \mid \nxx) \right) \A(x \mid \nx) \A(x' \mid \nxx) \\
& = \E_{\nx, \nxx \sim \mathcal{X}^{2}} \left[ \P(\boldsymbol{y} \mid \nx)^{T} \P(\boldsymbol{y} \mid \nxx)  \A(x \mid \nx) \A(x' \mid \nxx) \right].
\end{aligned}
\end{equation}
On the other hand, we can expand the $w_{x, x'}^{wl}$ and obtain:
\begin{equation}
\label{212}
\begin{aligned}
w_{x, x'}^{wl} & = \E_{(\nx ,\nq), (\nxx , \nqq ) \sim \P(\mathcal{X},\mathcal{Q})^{2}}\left[S\left(\left(\nx,\nq\right) ,\left(\nxx, \nqq\right)\right)  \A(x \mid \nx) \A(x' \mid \nxx) \right] \\
& =  \int  \sum_{\nq ,\nqq \in \mathcal{Q}^{2}} \P(\nx, \nq) \P(\nxx, \nqq) S\left(\left(\nx,\nq\right) ,\left(\nxx, \nqq\right)\right)  \A(x \mid \tilde{x}) \A(x' \mid \tilde{x'}) \mathrm{d} \nx  \mathrm{d}  \nxx\\
& =\int \P(\nx) \P(\nxx) \mathrm{d} \nx  \mathrm{d}  \nxx \sum_{\nq ,\nqq \in \mathcal{Q}^{2}} \P(\nq \mid \nx) \P(\nqq \mid \nxx)  \boldsymbol{S}(\nx)_{:, \nq} ^{T} \boldsymbol{S} (\nxx)_{:, \nqq}  \A(x \mid \tilde{x}) \A(x' \mid \tilde{x}^{\prime})\\
& = \int \P(\nx) \P(\nxx) \mathrm{d} \nx  \mathrm{d}  \nxx  \P (\boldsymbol{q} \mid \nx) ^{T} \left( \boldsymbol{S}(\nx)^{T} \boldsymbol{S} (\nxx) \right) \P (\boldsymbol{q} \mid \nxx) \A(x \mid \tilde{{x}}) \A(x' \mid \tilde{x}^{\prime}) \\
& = \E_{\nx, \nxx \sim \mathcal{X}^{2}} \left[ \P (\boldsymbol{q} \mid \nx) ^{T} \left( \boldsymbol{S}(\nx)^{T} \boldsymbol{S} (\nxx) \right) \P (\boldsymbol{q} \mid \nxx) \A(x \mid \nx) \A(x' \mid \nxx) \right].
\end{aligned}
\end{equation}
When $\boldsymbol{S}$ satisfies the condition of $\P(\boldsymbol{y} \mid x) = \boldsymbol{S} (x)\P(\boldsymbol{q} \mid x)$ holds almost everywhere in $\mathcal{X}$, we have follow equation holds almost everywhere in $\mathcal{X}^{2}$: 
\begin{equation}
\label{213}
\P (\boldsymbol{q} \mid \nx) ^{T} \left( \boldsymbol{S}(\nx)^{T} \boldsymbol{S} (\nxx) \right) \P (\boldsymbol{q} \mid \nxx)  = \left( \boldsymbol{S} (\nx) \P(\boldsymbol{q} \mid \nx)  \right)^{T}  \left( \boldsymbol{S} (\nxx) \P(\boldsymbol{q} \mid \nxx)  \right)= \P(\boldsymbol{y} \mid \nx)^{T} \P(\boldsymbol{y} \mid \nxx)  
\end{equation}

Combining Equations \ref{211}, \ref{212}, and \ref{213}, we finish the proof.
\end{proof}

\subsection{Proof of Proposition. \ref{loss-wsc}}
\label{pp22}
\begin{proposition}
\label{loss-wsc2}
(Recap of Proposition. \ref{loss-wsc}). We define $\boldsymbol{F}_{x, :} = \sqrt{\boldsymbol{A}_{x}} f_{x}$ and weakly-supervised contrastive loss $\mathcal{L}_{wsc}$ as follows:
\begin{equation}
\mathcal{L}_{wsc}(f) \triangleq  -2 \alpha \mathcal{L}_{1}(f) - 2 \beta \mathcal{L}_{2}(f)  + \alpha ^{2} \mathcal{L}_{3}(f) + \beta^{2} \mathcal{L}_{4}(f) + 2 \alpha \beta \mathcal{L}_{5}(f), 
\end{equation}
where:
\begin{equation}
 \mathcal{L}_{1}(f) \triangleq \E_{\tilde{x} \sim \P(\mathcal{X}), x,x' \sim \A(\cdot \mid \tilde{x})}\left[f_{x} f_{x'}^{T}\right], 
\end{equation}
\begin{equation}
    \mathcal{L}_{2}(f) \triangleq   \E_{(\tilde{x}, \tilde{q}), (\tilde{x}^{\prime}, \tilde{q}^{\prime}) \sim \pxq^{2},  x \sim \A(\cdot \mid \tilde{x}), x' \sim \A(\cdot \mid \tilde{x}^{\prime})}  \left[S((\tilde{x}, \tilde{q}), (\tilde{x}^{\prime}, \tilde{q}^{\prime}))f_{x} f_{x'}^{T}\right],
\end{equation}
\begin{equation}
\mathcal{L}_{3}(f) \triangleq \E_{\tilde{x}, \tilde{x}^{\prime} \sim \P(\mathcal{X})^{2}, x \sim \A(\cdot \mid \tilde{x}), x' \sim \A(\cdot \mid \tilde{x}^{\prime})}\left[\left(f_{x}f_{x'}^{T}\right)^{2}\right],
\end{equation}
\begin{equation}
\mathcal{L}_{4}(f) \triangleq  \E_{(\tilde{x},\tilde{q}), (\tilde{x}^{\prime},\tilde{q}^{\prime})\sim \pxq^{2}, x \sim \A(\cdot \mid \tilde{x}), x' \sim \A(\cdot \mid \tilde{x}^{\prime})} 
\left[S'(\tilde{x},\tilde{q}) S'(\tilde{x}^{\prime},\tilde{q}^{\prime})\left(f_{x}f_{x'}^{T}\right)^{2}\right],
\end{equation}
\begin{equation}
\mathcal{L}_{5}(f) \triangleq  \E_{(\tilde{x},\tilde{q}) \sim \pxq, \tilde{x}^{\prime} \sim \P(\mathcal{X}),  x \sim \A(\cdot \mid \tilde{x}), x' \sim \A(\cdot \mid \tilde{x}^{\prime})} 
\left[S'(\tilde{x}, \tilde{q}) \left(f_{x}f_{x'}^{T}\right)^{2}\right],
\end{equation}
where  $S\left((\nx, \nq), (\nxx, \nqq)\right) = \boldsymbol{S}(\nx)_{:, \nq} ^{T} \boldsymbol{S} (\nxx)_{:, \nqq}, S'(\tilde{x}, \tilde{q}) = \boldsymbol{S} (\nx)_{:, \nq}^{T} \P(\boldsymbol{y})$.

Then the following equation holds:
\begin{equation}
\label{eq39}
\left\|\widetilde{\boldsymbol{A}} - \boldsymbol{F} \boldsymbol{F}^{T} \right\|_{F} = \mathcal{L}_{wsc}(f) + Const, 
\end{equation}
Where $\boldsymbol{A}_{x, x'} = \alpha w_{x, x'}^{u} + \beta w_{x, x'}^{wl}$, and $\widetilde{\boldsymbol{A}}_{x, x'} = \frac{\boldsymbol{A}_{x, x'}}{\sqrt{\boldsymbol{A}_{x}} \sqrt{\boldsymbol{A}_{x'}}}$.

Furthermore, when assuming uniform class distribution $\P(\boldsymbol{y}) = \left[\frac{1}{c}, \dots,\frac{1}{c}\right]^{T}$, above $\mathcal{L}_{wsc}(f)$ can also rewrite as:
\begin{equation}
\label{eq40}
\mathcal{L}_{wsc}(f) \triangleq  -2 \alpha \mathcal{L}_{1}(f) - 2 \beta \mathcal{L}_{2}(f) + \left(\alpha + \frac{\beta}{c}\right)^{2} \mathcal{L}_{3}(f).
\end{equation}
\end{proposition}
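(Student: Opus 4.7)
The plan is to expand the squared Frobenius norm (reading $\|\cdot\|_F$ as $\|\cdot\|_F^2$, as the low-rank approximation in Equation \ref{lowra} requires) into three groups of terms, and then match each group against the $\mathcal{L}_i$'s. Writing
\[
\|\widetilde{\boldsymbol{A}} - \boldsymbol{F}\boldsymbol{F}^T\|_F^2 = \sum_{x,x'} \widetilde{\boldsymbol{A}}_{x,x'}^2 - 2\sum_{x,x'} \widetilde{\boldsymbol{A}}_{x,x'}\, \boldsymbol{F}_{x,:}\boldsymbol{F}_{x',:}^T + \sum_{x,x'} (\boldsymbol{F}_{x,:}\boldsymbol{F}_{x',:}^T)^2,
\]
the first term is independent of $f$ and becomes the constant. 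Plugging in $\boldsymbol{F}_{x,:} = \sqrt{\boldsymbol{A}_x}\, f_x$ and $\widetilde{\boldsymbol{A}}_{x,x'} = \boldsymbol{A}_{x,x'}/\sqrt{\boldsymbol{A}_x \boldsymbol{A}_{x'}}$ makes the square roots cancel in the linear term, reducing it to $-2 \sum_{x,x'} \boldsymbol{A}_{x,x'} f_x f_{x'}^T$; substituting $\boldsymbol{A}_{x,x'} = \alpha w^u_{x,x'} + \beta w^{wl}_{x,x'}(\boldsymbol{S})$ and recognizing the integrals as expectations in the definitions of $\mathcal{L}_1$ and $\mathcal{L}_2$ gives the $-2\alpha \mathcal{L}_1 - 2\beta \mathcal{L}_2$ contribution directly.

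For the quadratic term, the cancellation no longer happens, so I would write $\sum_{x,x'}(\boldsymbol{F}_{x,:}\boldsymbol{F}_{x',:}^T)^2 = \sum_{x,x'} \boldsymbol{A}_x \boldsymbol{A}_{x'}(f_x f_{x'}^T)^2$, expand $\boldsymbol{A}_x = \alpha \boldsymbol{A}^u_x + \beta \boldsymbol{A}^{wl}_x$ where $\boldsymbol{A}^u_x = \sum_{x'} w^u_{x,x'}$ and $\boldsymbol{A}^{wl}_x = \sum_{x'} w^{wl}_{x,x'}(\boldsymbol{S})$, and collect the four cross-products into three groups (the two mixed terms combine by symmetry of $(f_x f_{x'}^T)^2$). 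The key identification is that $\boldsymbol{A}^u_x$ reduces to the marginal augmentation density $\mathbb{E}_{\tilde{x}}[\mathcal{A}(x\mid \tilde{x})]$ (giving $\mathcal{L}_3$), and, using the defining property of $\boldsymbol{S}$, $\boldsymbol{A}^{wl}_x$ reduces to $\mathbb{E}_{(\tilde{x},\tilde{q})}[\mathcal{A}(x\mid \tilde{x})\, S'(\tilde{x},\tilde{q})]$ with $S'(\tilde{x},\tilde{q}) = \boldsymbol{S}(\tilde{x})_{:,\tilde{q}}^T \mathbb{P}(\boldsymbol{y})$ since $\mathbb{E}_{(\tilde{x}',\tilde{q}')}[\boldsymbol{S}(\tilde{x}')_{:,\tilde{q}'}] = \mathbb{E}_{\tilde{x}'}[\boldsymbol{S}(\tilde{x}')\mathbb{P}(\boldsymbol{q}\mid \tilde{x}')] = \mathbb{P}(\boldsymbol{y})$. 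That identification is the main obstacle: it requires isolating $S'$ from the double expectation defining $w^{wl}$ and then repackaging the marginalization using the constraint $\boldsymbol{S}(x)\mathbb{P}(\boldsymbol{q}\mid x) = \mathbb{P}(\boldsymbol{y}\mid x)$; the rest is a straightforward Fubini exchange to obtain $\alpha^2 \mathcal{L}_3 + \beta^2 \mathcal{L}_4 + 2\alpha\beta \mathcal{L}_5$.

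For the uniform-class specialization in Equation \ref{eq40}, I would show $\boldsymbol{A}^{wl}_x = \tfrac{1}{c}\boldsymbol{A}^u_x$. Since $\mathbb{P}(\boldsymbol{y}) = (1/c)\boldsymbol{1}$, we have $S'(\tilde{x},\tilde{q}) = \tfrac{1}{c}\boldsymbol{1}^T \boldsymbol{S}(\tilde{x})_{:,\tilde{q}}$, and plugging into the integral form of $\boldsymbol{A}^{wl}_x$ produces $\tfrac{1}{c}\int \mathbb{P}(\tilde{x})\mathcal{A}(x\mid\tilde{x})\, \boldsymbol{1}^T \boldsymbol{S}(\tilde{x})\mathbb{P}(\boldsymbol{q}\mid\tilde{x})\, \mathrm{d}\tilde{x} = \tfrac{1}{c}\int \mathbb{P}(\tilde{x})\mathcal{A}(x\mid\tilde{x})\, \boldsymbol{1}^T \mathbb{P}(\boldsymbol{y}\mid\tilde{x})\, \mathrm{d}\tilde{x} = \tfrac{1}{c}\boldsymbol{A}^u_x$, using $\boldsymbol{1}^T \mathbb{P}(\boldsymbol{y}\mid\tilde{x}) = 1$. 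It follows that $\mathcal{L}_4 = c^{-2}\mathcal{L}_3$ and $\mathcal{L}_5 = c^{-1}\mathcal{L}_3$, so the quadratic contribution collapses to $(\alpha + \beta/c)^2 \mathcal{L}_3$, completing the proof. The only delicate point here is justifying $\boldsymbol{A}^{wl}_x = c^{-1}\boldsymbol{A}^u_x$ rigorously via the defining property of $\boldsymbol{S}$ rather than an extra column-sum assumption; everything else is bookkeeping.
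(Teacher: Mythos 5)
Your proposal is correct and follows essentially the same route as the paper's proof: expand the (squared) Frobenius norm, cancel the degree factors in the linear term to obtain $-2\alpha\mathcal{L}_1-2\beta\mathcal{L}_2$, compute the degree $\boldsymbol{A}_x$ using the key marginalization $\E_{(\nxx,\nqq)}\bigl[\boldsymbol{S}(\nxx)_{:,\nqq}\bigr]=\E_{\nxx}\bigl[\boldsymbol{S}(\nxx)\P(\boldsymbol{q}\mid\nxx)\bigr]=\P(\boldsymbol{y})$ to get $\alpha^2\mathcal{L}_3+\beta^2\mathcal{L}_4+2\alpha\beta\mathcal{L}_5$, and then collapse the quadratic terms in the uniform case via $\boldsymbol{1}^T\P(\boldsymbol{y}\mid\nx)=1$. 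Your packaging of the uniform-class step as the degree identity $\boldsymbol{A}^{wl}_x=c^{-1}\boldsymbol{A}^u_x$ is just a mild rephrasing of the paper's direct verification that $\mathcal{L}_4=c^{-2}\mathcal{L}_3$ and $\mathcal{L}_5=c^{-1}\mathcal{L}_3$, and it is rigorous for the same reason.
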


\begin{proof}
First we can expand $\left\|\widetilde{\boldsymbol{A}} - \boldsymbol{F} \boldsymbol{F}^{T} \right\|_{F}$ as follows:
\begin{equation}
\label{eq41}
\begin{aligned}
    \left\|\widetilde{\boldsymbol{A}} - \boldsymbol{F} \boldsymbol{F}^{T} \right\|_{F} & = \sum_{x, x'} \left(\frac{\bA_{x, x'}}{\sqrt{\bA_{x} } \sqrt{\bA_{x'}} } - \bF_{x, :} \bF_{x', :}^{T} \right)^{2} \\
    & = \sum_{x, x'} \left(\bF_{x, :}\bF_{x', :}^{T}\right)^{2} - 2  \frac{\bA_{x, x'}}{\sqrt{\bA_{x} } \sqrt{\bA_{x'}} }  \bF_{x, :} \bF_{x', :}^{T} + Const \\ 
    & = \sum_{x, x'} \boldsymbol{A}_{x} \boldsymbol{A}_{x'} \left(f_{x}f_{x'}^{T}\right)^{2} - 2 \bA_{x, x'} f_{x} f_{x'}^{T} + Const.
\end{aligned}
\end{equation}
Next, We analyze the two terms in the RHS of Equation \ref{eq41} separately. We first expand the second term in RHS of Equation \ref{eq41} as follows:
\begin{equation} 
\label{eq42}
\begin{aligned}
\sum_{x, x'} -2 \bA_{x, x'} f_{x} f_{x'}^{T} &=\sum_{x, x'} -2 (\alpha w_{x, x'}^{u} + \beta w_{x, x'}^{wl}) f_{x} f_{x'}^{T} \\
 &= -2 \alpha \sum_{x, x'} \E_{\tilde{x} \sim \mathbb{P}(X)}[\mathcal{A}(x \mid \tilde{x}) \A(x' \mid \nx)] f_{x}f_{x'}^{T} \\ 
 &  -2 \beta \sum_{x, x'} \E_{(\nx ,\nq), (\nxx , \nqq ) \sim \P(\mathcal{X},\mathcal{Q})^{2}}\left[S\left(\left(\nx,\nq\right) ,\left(\nxx, \nqq\right)\right)  \A(x \mid \nx) \A(x' \mid \nxx) \right] f_{x}f_{x'}^{T} \\
 & = -2 \alpha  \E_{\tilde{x} \sim \mathbb{P}(X)}\left[\sum_{x, x'} \mathcal{A}(x \mid \tilde{x}) \A(x' \mid \nx) f_{x}f_{x'}^{T} \right] \\
 & -2 \beta  \E_{(\nx ,\nq), (\nxx , \nqq ) \sim \P(\mathcal{X},\mathcal{Q})^{2}}\left[ \sum_{x, x'} S\left(\left(\nx,\nq\right) ,\left(\nxx, \nqq\right)\right)  \A(x \mid \nx) \A(x' \mid \nxx) f_{x}f_{x'}^{T} \right] 
 \\ & = -2 \alpha \E_{\tilde{x} \sim \mathbb{P}(X)}\left[\E_{x \sim \A(\cdot \mid \nx), x' \sim \A(\cdot \mid \nx)} \left[f_{x}f_{x'}^{T}\right] \right] \\
& -2 \beta  \E_{(\nx ,\nq), (\nxx , \nqq ) \sim \P(\mathcal{X},\mathcal{Q})^{2}}\left[ S\left(\left(\nx,\nq\right) ,\left(\nxx, \nqq\right)\right)  \E_{x \sim \A(\cdot \mid \nx), x' \sim \A(\cdot \mid \nxx)} \left[f_{x}f_{x'}^{T}\right] \right].
\end{aligned}
\end{equation}
By substituting the definitions of $\mathcal{L}_{1}(f)$ and $\mathcal{L}_{2}(f)$ into the Equation \ref{eq42}, we have:
\begin{equation}
\label{eq43}
\sum_{x, x'} -2 \bA_{x, x'} f_{x} f_{x'}^{T} = -2 \alpha \mathcal{L}_{1}(f)  -2 \beta \mathcal{L}_{2}(f)
\end{equation}
On the other hand, we have:
\begin{equation}
\label{eq44}
\begin{aligned}
\bA_{x} = \sum_{x'} A_{x, x'} & = \alpha \sum_{x'} w_{x, x'}^{u} + \beta \sum_{x'} w_{x, x'}^{wl} \\
& = \alpha  \E_{\tilde{x} \sim \mathbb{P}(X)}\left[\sum_{x'} \mathcal{A}(x \mid \tilde{x}) \A(x' \mid \nx) \right] \\ &
 \beta  \E_{(\nx ,\nq), (\nxx , \nqq ) \sim \P(\mathcal{X},\mathcal{Q})^{2}}\left[ \sum_{x'} S\left(\left(\nx,\nq\right) ,\left(\nxx, \nqq\right)\right)  \A(x \mid \nx) \A(x' \mid \nxx) \right] \\& =  \alpha  \E_{\tilde{x} \sim \mathbb{P}(X)}\left[\mathcal{A}(x \mid \tilde{x}) \right] \\& + \beta \E_{(\nx ,\nq), (\nxx , \nqq ) \sim \P(\mathcal{X},\mathcal{Q})^{2}}\left[ \sum_{x'} S\left(\left(\nx,\nq\right) ,\left(\nxx, \nqq\right)\right)   \A(x \mid \nx) \A(x' \mid \nxx) \right].
\end{aligned}
\end{equation}
By the definition of $S\left(\left(\nx,\nq\right) ,\left(\nxx, \nqq\right)\right)$, we can rewrite the second term of RHS of Equation \ref{eq44} as follows:
\begin{equation}
\label{eq45}
\begin{aligned}
\E_{(\nx ,\nq), (\nxx , \nqq ) \sim \P(\mathcal{X},\mathcal{Q})^{2}} & \left[ \sum_{x'} S\left(\left(\nx,\nq\right) ,\left(\nxx, \nqq\right)\right)   \A(x \mid \nx) \A(x' \mid \nxx)\right]  \\
&= \E_{(\nx, \nq) \sim \P(\mathcal{X, Q})} \left[\A(x \mid \nx) \boldsymbol{S}(\nx)_{:, \nq}^{T} \E_{\nxx \sim \mathcal{X}} \left[ \boldsymbol{S}(\nxx) \P(\boldsymbol{q} \mid \nxx) \right] \sum_{x'} \A(x' \mid \nxx) \right]
\\&= \E_{(\nx, \nq) \sim \P(\mathcal{X, Q})} \left[\A(x \mid \nx) \boldsymbol{S}(\nx)_{:, \nq}^{T} \E_{\nxx \sim \mathcal{X}} \left[\P (\boldsymbol{y} \mid \nxx) \right] \right]
\\& = \E_{(\nx, \nq) \sim \P(\mathcal{X, Q})} \left[\A(x \mid \nx) \boldsymbol{S}(\nx)_{:, \nq}^{T} \P (\boldsymbol{y} )  \right]
\\& = \E_{(\nx, \nq) \sim \P(\mathcal{X, Q})} \left[S'(\nx, \nq) \A(x \mid \nx) \right],
\end{aligned}
\end{equation}
where we defined $S'(\nx, \nq) = \boldsymbol{S}(\nx)_{:, \nq}^{T} \P (\boldsymbol{y})$ before. By substituting Equation \ref{eq45} into Equation \ref{eq44}, we have:
\begin{equation}
\label{eq46}
\bA_{x} =  \alpha  \E_{\tilde{x} \sim \mathbb{P}(X)}\left[\mathcal{A}(x \mid \tilde{x}) \right] + \beta \E_{(\nx, \nq) \sim \P(\mathcal{X, Q})} \left[S'(\nx, \nq) \A(x \mid \nx) \right].
\end{equation}
Using results of Equation \ref{eq46}, we can expand the first term in RHS of Equation \ref{eq41} as follows:
\begin{equation}
\label{eq47}
\begin{aligned}
 \sum_{x, x'} \boldsymbol{A}_{x} \boldsymbol{A}_{x'} \left(f_{x}f_{x'}^{T}\right)^{2} & = \sum_{x, x'} \left(\alpha \E_{\nx \sim \P(\mathcal{X})}\left[\A(x \mid \nx)\right] + \beta \E_{(\nx ,\nq) \sim \P(\mathcal{X},\mathcal{Q})} [S'(\nx, \nq)\mathcal{A}(x\mid\tilde{x})]\right ) \\
& \left(\alpha \E_{\nxx \sim \P(\mathcal{X})}[\A(x' \mid \nxx)] + \beta E_{(\nxx ,\nqq) \sim \P(\mathcal{X},\mathcal{Q})} [S' (\nxx, \nqq) \A(x' \mid \nxx )] \right) (f_{x}f_{x'}^{T})^{2} \\ & = \alpha^{2} \E_{\nx, \nxx \sim \P(\mathcal{X}^{2})} \left[ \sum_{x, x'} \A(x \mid \nx) \A(x' \mid \nxx) \left(f_{x} f_{x'}^{T}\right)^{2}\right] \\ &+ 
\beta^{2} \E_{(\tilde{x},\tilde{q}), (\tilde{x}^{\prime},\tilde{q}^{\prime})\sim \pxq^{2}} 
\left[\sum_{x, x'} S'(\tilde{x},\tilde{q}) S'(\tilde{x}^{\prime},\tilde{q}^{\prime}) \A(x \mid \nx ) \A(x' \mid \nxx)\left(f_{x}f_{x'}^{T}\right)^{2}\right] 
\\& + 2 \alpha \beta \E_{(\tilde{x},\tilde{q}) \sim \pxq, \tilde{x}^{\prime} \sim \P(\mathcal{X})} 
\left[S'(\tilde{x}, \tilde{q}) \sum_{x, x'} \A(x \mid \nx) \A(x' \mid \nxx)\left(f_{x}f_{x'}^{T}\right)^{2}\right].
\end{aligned}
\end{equation}
By substituting the definitions of $\L_{3}(f)$, $\L_{4}(f)$ and $\L_{5}(f)$ into RHS of Equation \ref{eq47}, we rewrite its three terms as follows, respectively:
\begin{equation}
\label{eq48}
\begin{aligned}
\alpha^{2} \E_{\nx, \nxx \sim \P(\mathcal{X}^{2})} \left[ \sum_{x, x'} \A(x \mid \nx) \A(x' \mid \nxx) \left(f_{x} f_{x'}^{T}\right)^{2}\right] = \alpha^{2} \E_{\tilde{x}, \tilde{x}^{\prime} \sim \P(\mathcal{X})^{2}, x \sim \A(\cdot \mid \tilde{x}), x' \sim \A(\cdot \mid \tilde{x}^{\prime})}\left[\left(f_{x}f_{x'}^{T}\right)^{2}\right] = \alpha^{2} \L_{3}(f).
\end{aligned}
\end{equation}
\begin{equation}
\label{eq49}
\begin{aligned}
\beta^{2} \E_{(\tilde{x},\tilde{q}), (\tilde{x}^{\prime},\tilde{q}^{\prime})\sim \pxq^{2}} &
\left[\sum_{x, x'} S'(\tilde{x},\tilde{q}) S'(\tilde{x}^{\prime},\tilde{q}^{\prime}) \A(x \mid \nx ) \A(x' \mid \nxx)\left(f_{x}f_{x'}^{T}\right)^{2}\right]  \\ & = \beta^{2} \E_{(\tilde{x},\tilde{q}), (\tilde{x}^{\prime},\tilde{q}^{\prime})\sim \pxq^{2}, x \sim \A(\cdot \mid \tilde{x}), x' \sim \A(\cdot \mid \tilde{x}^{\prime})} 
\left[S'(\tilde{x},\tilde{q}) S'(\tilde{x}^{\prime},\tilde{q}^{\prime})\left(f_{x}f_{x'}^{T}\right)^{2}\right] = \beta^{2} \L_{4}(f).
\end{aligned}
\end{equation}
\begin{equation}
\label{eq50}
\begin{aligned}
2 \alpha \beta \E_{(\tilde{x},\tilde{q}) \sim \pxq, \tilde{x}^{\prime} \sim \P(\mathcal{X})} &
\left[S'(\tilde{x}, \tilde{q}) \sum_{x, x'} \A(x \mid \nx) \A(x' \mid \nxx)\left(f_{x}f_{x'}^{T}\right)^{2}\right] \\ & = 2\alpha \beta \E_{(\tilde{x},\tilde{q}) \sim \pxq, \tilde{x}^{\prime} \sim \P(\mathcal{X}),  x \sim \A(\cdot \mid \tilde{x}), x' \sim \A(\cdot \mid \tilde{x}^{\prime})} 
\left[S'(\tilde{x}, \tilde{q}) \left(f_{x}f_{x'}^{T}\right)^{2}\right] = 2\alpha \beta \L_{5}(f).
\end{aligned}
\end{equation}
By combining the results of Equations \ref{eq47}, \ref{eq48}, \ref{eq49}, and \ref{eq50}, we obtain:
\begin{equation}
\label{eq51}
 \sum_{x, x'} \boldsymbol{A}_{x} \boldsymbol{A}_{x'} \left(f_{x}f_{x'}^{T}\right)^{2} = \alpha^{2} \L_{3} + \beta^{2} \L_{4} + 2 \alpha \beta \L_{5}.
\end{equation}
By substituting the results of Equations \ref{eq43} and \ref{eq51} into Equation \ref{eq41}, we can obtain the Equation \ref{eq39}.
Furthermore, when assume uniform class distribution $\P(\boldsymbol{y}) = \left[\frac{1}{c}, \dots,\frac{1}{c}\right]^{T}$, above $\L_{4}(f)$ can be rewritten as follows:
\begin{equation}
\label{eq52}
\begin{aligned}
\L_{4}(f) & = \E_{(\tilde{x},\tilde{q}), (\tilde{x}^{\prime},\tilde{q}^{\prime})\sim \pxq^{2}, x \sim \A(\cdot \mid \tilde{x}), x' \sim \A(\cdot \mid \tilde{x}^{\prime})} 
\left[S'(\tilde{x},\tilde{q}) S'(\tilde{x}^{\prime},\tilde{q}^{\prime})\left(f_{x}f_{x'}^{T}\right)^{2}\right]
\\& = \E_{\nx, \nxx \sim \P(\mathcal{X})^{2}, x \sim \A(\cdot \mid \tilde{x}), x' \sim \A(\cdot \mid \tilde{x}^{\prime})} 
\left[\frac{1}{c^{2}} \sum_{\nq \in \mathcal{Q} , \nyy \in \mathcal{Y} }  \boldsymbol{S}(\nx)_{\ny, \nq}  \P(\nq \mid \nx) \sum_{\nqq \in  \mathcal{Q}, \nyy \in  \mathcal{Y}} \boldsymbol{S}(\nxx)_{\nyy, \nqq} \P(\nqq \mid \nxx)  \left(f_{x}f_{x'}^{T}\right)^{2}\right] \\ 
& =  \E_{\nx, \nxx \sim \P(\mathcal{X})^{2}, x \sim \A(\cdot \mid \tilde{x}), x' \sim \A(\cdot \mid \tilde{x}^{\prime})} 
\left[\frac{1}{c^{2}}  \sum_{\ny \in \mathcal{Y}}  \left(\boldsymbol{S}(\nx) \P(\boldsymbol{q} \mid \nx)\right)_{\ny}   \sum_{\nyy \in \mathcal{Y}}  \left(\boldsymbol{S}(\nxx) \P(\boldsymbol{q} \mid \nxx)\right)_{\nyy}  \left(f_{x}f_{x'}^{T}\right)^{2}\right] 
\\& = \frac{1}{c^{2}}\E_{\nx, \nxx \sim \P(\mathcal{X})^{2}, x \sim \A(\cdot \mid \tilde{x}), x' \sim \A(\cdot \mid \tilde{x}^{\prime})}\left[  \left(f_{x}f_{x'}^{T}\right)^{2}\right] = \frac{1}{c^{2}} \L_{3}(f).
\end{aligned}
\end{equation}
Similarly, $\L_{5}(f)$ can be rewritten as follows:
\begin{equation}
\label{eq53}
\begin{aligned}
\L_{5}(f) & =  \E_{(\tilde{x},\tilde{q}) \sim \pxq, \tilde{x}^{\prime} \sim \P(\mathcal{X}),  x \sim \A(\cdot \mid \tilde{x}), x' \sim \A(\cdot \mid \tilde{x}^{\prime})} 
\left[S'(\tilde{x}, \tilde{q}) \left(f_{x}f_{x'}^{T}\right)^{2}\right] 
\\ & = \E_{\nx, \nxx \sim \P(\mathcal{X})^{2}, x \sim \A(\cdot \mid \tilde{x}), x' \sim \A(\cdot \mid \tilde{x}^{\prime})} 
\left[\frac{1}{c} \sum_{\nq \in \mathcal{Q} , \nyy \in \mathcal{Y} }  \boldsymbol{S}(\nx)_{\ny, \nq}  \P(\nq \mid \nx)   \left(f_{x}f_{x'}^{T}\right)^{2}\right]
\\ & = \frac{1}{c} \E_{\nx, \nxx \sim \P(\mathcal{X})^{2}, x \sim \A(\cdot \mid \tilde{x}), x' \sim \A(\cdot \mid \tilde{x}^{\prime})} 
\left[  \sum_{\ny \in \mathcal{Y}}  \left(\boldsymbol{S}(\nx) \P(\boldsymbol{q} \mid \nx)\right)_{\ny}  \left(f_{x}f_{x'}^{T}\right)^{2}\right]
\\& = \frac{1}{c} \E_{\nx, \nxx \sim \P(\mathcal{X})^{2}, x \sim \A(\cdot \mid \tilde{x}), x' \sim \A(\cdot \mid \tilde{x}^{\prime})}\left[  \left(f_{x}f_{x'}^{T}\right)^{2}\right] = \frac{1}{c} \L_{3}(f). 
\end{aligned}
\end{equation}
Combining the results of Equations \ref{eq52} and \ref{eq53}, $\L_{wsc}(f)$ can be rewritten as follows thus finishing the proof of Equation  \ref{eq40}.
\begin{equation}
\begin{aligned}
\L_{wsc}(f) & = -2 \alpha \L_{1}(f) - 2 \beta \L_{2}(f) + \alpha^{2} \L_{3}(f) + \beta^{2} \L_{4}(f) + 2 \alpha \beta \L_{5}(f) 
\\& =  -2 \alpha \L_{1}(f) - 2 \beta \L_{2}(f) + \left(\alpha^{2} + \left(\frac{\beta}{c}\right)^{2} + 2\frac{\alpha \beta}{c} \right) \L_{3}(f)
\\& =  -2 \alpha \mathcal{L}_{1}(f) - 2 \beta \mathcal{L}_{2}(f) + \left(\alpha + \frac{\beta}{c}\right)^{2} \mathcal{L}_{3}(f).
\end{aligned}
\end{equation}

\end{proof}
\newpage
\section{Proof of Section. \ref{TU}}
In this section, we provide the proof detail of theorems in the Section. \ref{TU}. Our theoretical framework mainly follow theoretical framework in \citealt{HaoChen_Wei_Gaidon_Ma_2021}. In order to eliminate technical complexity, we assume  vertices of augmentation graph have to be a finite but exponentially large set. This is a reasonable assumption given that modern computers store data with finite bits so the possible number of all data has to be finite, and results can been easier  generalized to the infinite case through Theorem F.2 in \citealt{HaoChen_Wei_Gaidon_Ma_2021}. Moreover, without loss of generality, we assume a uniform class distribution to simplify the proof.

\subsection{Proof of Theorem \ref{epg}}
\label{pt34}
\begin{theorem}
\label{epg2}
(Recap of Theorem \ref{epg}) Given augmentation graph $\mathcal{G}$ with perturbation adjacency matrix $\boldsymbol{A} = \alpha \boldsymbol{A}^{u} + \beta \boldsymbol{A}^{wl}(\boldsymbol{S})$ where $\boldsymbol{S}$ stratify properties in Proposition. \ref{w-lapp}, if $\mathcal{A}$ is  $\gamma \text{-} consistent$ with $\gamma > 0$ and representation dimension $d > 2c$, we have:
\begin{equation}
\varepsilon(f^{*}) \preceq  \tilde{O}\left(\gamma \left(\alpha^{*} \rho^{u}_{\left \lfloor  \frac{d}{2}  \right \rfloor } + \beta^{*} \left(1 - \frac{2c}{d}\right) \right)^{-2}\right),
\end{equation}
where $\rho^{u}$ is sparsest partition of self-supervised connectivity graph $\boldsymbol{A}^{u}$, and $\alpha^{*} = \frac{\alpha}{\alpha + \beta / c }$, $ \beta^{*} = \frac{\beta / c}{\alpha + \beta / c }$. 
\end{theorem}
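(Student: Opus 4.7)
The plan is to adapt the spectral-clustering machinery of \citealt{HaoChen_Wei_Gaidon_Ma_2021} (their Theorem 3.8) to the perturbation graph $\boldsymbol{A} = \alpha \boldsymbol{A}^u + \beta \boldsymbol{A}^{wl}(\boldsymbol{S})$. By the Eckart--Young--Mirsky theorem, the optimum $\boldsymbol{F}^*$ of Equation \ref{lowra} is formed from the top-$d$ eigenvectors of $\widetilde{\boldsymbol{A}}$, so the linear probe error decomposes into an approximation term (how well a linear head can route these eigenvectors to class labels, controlled by the $\gamma$-consistency of $\mathcal{A}$) and a spectral-gap term (a lower bound on $1 - \lambda_{d+1}(\widetilde{\boldsymbol{A}})$). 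The higher-order Cheeger inequality converts the latter into a lower bound on the sparsest $\lceil d/2 \rceil$-partition $\rho_{\lceil d/2 \rceil}(\widetilde{\boldsymbol{A}})$. Hence the target inequality reduces to showing $\rho_{\lceil d/2 \rceil}(\widetilde{\boldsymbol{A}}) \gtrsim \alpha^{*} \rho^{u}_{\lfloor d/2 \rfloor} + \beta^{*}(1 - 2c/d)$.

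First I would compute the per-vertex degrees of the perturbation graph. Under the uniform class prior, Proposition \ref{w-lapp} gives $w^{wl}_{x, x'} = w^{l}_{x, x'}$, and expanding as in Equation \ref{eq44} shows $\sum_{x'} w^{l}_{x, x'} = (1/c) \sum_{x'} w^{u}_{x, x'}$ because each sample's class coincides with the partner's with probability $1/c$. Consequently $\boldsymbol{A}_x = (\alpha + \beta/c) \sum_{x'} w^{u}_{x, x'}$, so normalization commutes with the convex combination and yields the identity $\widetilde{\boldsymbol{A}} = \alpha^{*} \widetilde{\boldsymbol{A}}^{u} + \beta^{*} \widetilde{\boldsymbol{A}}^{l}$, with the coefficients $\alpha^{*}, \beta^{*}$ exactly those stated in the theorem. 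Because $w^{l}_{x, x'}$ is supported only on same-class pairs, $\widetilde{\boldsymbol{A}}^{l}$ is block-diagonal across the $c$ classes. For any partition of the augmentation space into $\lceil d/2 \rceil > c$ blocks, at most $c$ blocks can lie entirely inside a single class, so at least $\lceil d/2 \rceil - c$ blocks must split some class and incur a constant fraction of the supervised mass; pushing this counting through Definition \ref{def33} delivers a $(1 - 2c/d)$ factor on the supervised term, while the self-supervised summand contributes at least $\alpha^{*} \rho^{u}_{\lfloor d/2 \rfloor}$ trivially since $\rho$ is monotone in the edge weights. Substituting the resulting spectral-gap lower bound into the linear-probe error bound of \citealt{HaoChen_Wei_Gaidon_Ma_2021}, with $\gamma$-consistency supplying the alignment hypothesis, produces the claimed $\widetilde{O}(\gamma / (\cdot)^{2})$ rate.

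The main obstacle is the combinatorial step that controls the supervised contribution to $\rho_{\lceil d/2 \rceil}$. Obtaining the clean additive $\alpha^{*} \rho^{u} + \beta^{*}(1 - 2c/d)$ form, rather than a weaker $\min$ of the two terms, requires a joint argument: the \emph{same} partition $\{\Omega_i\}$ must realize the sparsest cut of $\widetilde{\boldsymbol{A}}$, and one has to lower bound its Dirichlet conductance with respect to $\widetilde{\boldsymbol{A}}^{u}$ and $\widetilde{\boldsymbol{A}}^{l}$ simultaneously; a naive bound would only give $\min$ of the two. A secondary technical point is to verify that $\gamma$-consistency is preserved under the perturbation: since $\beta \boldsymbol{A}^{wl}$ adds only intra-class weight, the effective inter-class augmentation mass is only rescaled by the degree normalization, so the pseudo-labeler of Definition \ref{gamma-c} still witnesses $\gamma$-consistency for $\widetilde{\boldsymbol{A}}$ up to a factor absorbed into the $\widetilde{O}$. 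Together these two observations suffice to obtain the stated bound.
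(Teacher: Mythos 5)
Your skeleton is the same as the paper's: reduce $\varepsilon(f^{*})$ via $\gamma$-consistency to a pseudo-label misclassification rate of the spectral features (Lemmas \ref{lB3}--\ref{lB6}), bound that by $O(\gamma/\lambda_{d+1})$ using the Eckart--Young--Mirsky/Rayleigh-quotient argument, convert $\lambda_{d+1}$ into the sparsest $\lfloor d/2\rfloor$-partition via higher-order Cheeger (Lemma \ref{lB7}), and finish by lower-bounding the sparsest partition of the perturbation graph. Your degree computation is also correct and is exactly the identity the paper uses: $\boldsymbol{A}_{x}=(\alpha+\beta/c)\,\mathbb{P}(x)$, equivalently $\sum_{x'}w^{l}_{x,x'}=\frac{1}{c}\sum_{x'}w^{u}_{x,x'}$, so that $\widetilde{\boldsymbol{A}}=\alpha^{*}\widetilde{\boldsymbol{A}}^{u}+\beta^{*}\widetilde{\boldsymbol{A}}^{l}$.

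The genuine gap is the decisive last step, $\rho_{\lfloor d/2\rfloor}(\boldsymbol{A})\gtrsim \alpha^{*}\rho^{u}_{\lfloor d/2\rfloor}+\beta^{*}\left(1-\tfrac{2c}{d}\right)$, which your proposal does not actually establish; the two mechanisms you offer do not work. First, the claim that the self-supervised term survives ``trivially since $\rho$ is monotone in the edge weights'' is false: the supervised edges are intra-class and hence typically non-cut edges, so they inflate volumes $\sum_{x\in\Omega}w_{x}$ without increasing cuts, and the conductance of a set under the combined graph can be strictly smaller than under $\boldsymbol{A}^{u}$. What actually rescues the argument (and supplies the ``joint argument'' you flag as missing but never provide) is the degree proportionality you derived: it makes the Dirichlet conductance decompose exactly additively for \emph{every} set, $\Phi_{\boldsymbol{A}}(\Omega)=\alpha\,\Phi_{\boldsymbol{A}^{u}}(\Omega)+\tfrac{\beta}{c}\,\Phi_{\boldsymbol{A}^{l}}(\Omega)$ (after normalizing $\alpha+\beta/c=1$), which is how the paper's Lemma \ref{lB8} gets the sum rather than a $\min$. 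Second, your combinatorial count for the supervised term is not sufficient: arbitrarily many blocks can lie inside a single class, and a block that ``splits'' a class does not incur constant conductance --- it could contain nearly all of that class and have conductance close to $0$. The correct quantitative step is a mass argument: writing $\Phi_{\boldsymbol{A}^{l}}(\Omega)=\frac{\sum_{j}\mathbb{P}(\Omega\mid y^{j})(1-\mathbb{P}(\Omega\mid y^{j}))}{\sum_{j}\mathbb{P}(\Omega\mid y^{j})}$ and noting $\sum_{i,j}\mathbb{P}(\Omega_{i}\mid y^{j})=c$ over at least $\lfloor d/2\rfloor$ nonempty blocks, some block has every conditional mass at most $c/\lfloor d/2\rfloor$, hence conductance at least $1-2c/d$; the paper carries this out via the constrained optimization over $\omega_{i,j}=\mathbb{P}(\Omega_{i}\mid y^{j})$ in Lemma \ref{lB8}. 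Without these two repairs your sketch yields at best a $\min$-type bound, not the stated additive one. A minor further point: $\gamma$-consistency of the perturbed graph is not purely a matter of the supervised edges being intra-class; the cross-pseudo-label mass they contribute must itself be bounded by $O(\beta\gamma/c)$, as done in Formula \ref{eq68}, though that part is routine.
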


\begin{lemma}
\label{lB2}
For any proportionality constant $\eta > 0$, then linear probe error of features derived from $\eta \boldsymbol{A}$ is equal to features derived from $\boldsymbol{A}$.
\end{lemma}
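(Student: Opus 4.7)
The plan is to exploit the fact that the normalization step in the definition of $\widetilde{\boldsymbol{A}}$ strips out any global positive scalar, so the spectral clustering problem in Equation \ref{lowra} is invariant under $\boldsymbol{A}\mapsto\eta\boldsymbol{A}$. First I would compute, for $\eta>0$, that $(\eta\boldsymbol{A})_{x}=\sum_{x'}\eta\boldsymbol{A}_{x,x'}=\eta\boldsymbol{A}_{x}$, and therefore
\begin{equation}
\widetilde{\eta\boldsymbol{A}}_{x,x'}=\frac{\eta\boldsymbol{A}_{x,x'}}{\sqrt{(\eta\boldsymbol{A}_{x})(\eta\boldsymbol{A}_{x'})}}=\frac{\eta\boldsymbol{A}_{x,x'}}{\eta\sqrt{\boldsymbol{A}_{x}\boldsymbol{A}_{x'}}}=\widetilde{\boldsymbol{A}}_{x,x'}.
\end{equation}
Hence the objective $\|\widetilde{\boldsymbol{A}}-\boldsymbol{F}\boldsymbol{F}^{T}\|_{F}$ defining $\boldsymbol{F}^{*}$ in Equation \ref{lowra} is identical for $\boldsymbol{A}$ and $\eta\boldsymbol{A}$, so the same embedding matrix $\boldsymbol{F}^{*}$ is optimal in both cases.

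Next I would translate this equality of $\boldsymbol{F}^{*}$ into a statement about the learned feature $f$. Using the relation $\boldsymbol{F}_{x,:}=\sqrt{\boldsymbol{A}_{x}}\,f_{x}$ from Proposition \ref{loss-wsc}, the feature extracted from $\eta\boldsymbol{A}$ is $f^{(\eta)}_{x}=\boldsymbol{F}^{*}_{x,:}/\sqrt{\eta\boldsymbol{A}_{x}}=\eta^{-1/2}f_{x}$, i.e.\ exactly a uniform rescaling of the features obtained from $\boldsymbol{A}$ by the positive constant $\eta^{-1/2}$.

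Finally, I would argue that uniform rescaling of the feature map does not change the linear probe error. For any classifier weight $\boldsymbol{B}\in\mathbb{R}^{d\times c}$ one has
\begin{equation}
\arg\max_{i\in\mathcal{Y}}\boldsymbol{B}_{:,i}^{T}f^{(\eta)}(x)=\arg\max_{i\in\mathcal{Y}}\boldsymbol{B}_{:,i}^{T}(\eta^{-1/2}f(x))=\arg\max_{i\in\mathcal{Y}}(\eta^{-1/2}\boldsymbol{B})_{:,i}^{T}f(x),
\end{equation}
so the set of predictors $\{h_{f^{(\eta)},\boldsymbol{B}}:\boldsymbol{B}\in\mathbb{R}^{d\times c}\}$ equals $\{h_{f,\eta^{-1/2}\boldsymbol{B}}:\boldsymbol{B}\in\mathbb{R}^{d\times c}\}=\{h_{f,\boldsymbol{B}'}:\boldsymbol{B}'\in\mathbb{R}^{d\times c}\}$. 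The same identity carries over to the ensembled predictor $\tilde{h}_{f,\boldsymbol{B}}$, and therefore the minimizations defining $\varepsilon(f^{(\eta)})$ and $\varepsilon(f)$ are over the same class of $\{0,1\}$--valued decision functions, giving $\varepsilon(f^{(\eta)})=\varepsilon(f)$.

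No step looks genuinely hard: the only subtlety is that $\boldsymbol{F}^{*}$ is only determined up to an orthogonal rotation on the right, but since both the row scaling $\sqrt{\boldsymbol{A}_{x}}\mapsto\sqrt{\eta\boldsymbol{A}_{x}}$ and the linear probe objective are invariant under a global right-multiplication by an orthogonal matrix, one may choose matching minimizers. I would state this as a brief remark rather than belabour it, since any such rotation can be absorbed into $\boldsymbol{B}$ just as the scalar $\eta^{-1/2}$ was.
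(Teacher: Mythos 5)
Your proof is correct and takes essentially the same route as the paper's: both (i) observe that degree normalization cancels the global scalar so $\widetilde{\eta\boldsymbol{A}}=\widetilde{\boldsymbol{A}}$, hence the optimal embedding matrix is unchanged; (ii) derive that the resulting feature map is $\eta^{-1/2}f^{*}$; and (iii) note that a positive rescaling leaves the $\arg\max$, and hence the linear probe error, invariant. The paper states step (iii) slightly more directly — $h_{g^{*},\boldsymbol{B}}=h_{f^{*},\boldsymbol{B}}$ for the same $\boldsymbol{B}$, since $\arg\max$ is invariant under multiplication by $\eta^{-1/2}>0$ — rather than arguing equality of the entire predictor family, but this is cosmetic; your remark about the right-orthogonal indeterminacy of $\boldsymbol{F}^{*}$ is a reasonable additional precaution that the paper leaves implicit.
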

\begin{proof} Assuming that $g^{*}$ is features derived form $\eta \boldsymbol{A}$, then through results of Proposition. \ref{loss-wsc2}, we have $g^{*}_{x} = \frac{\boldsymbol{G}^{*}_{x}}{\sqrt{\eta \boldsymbol{A}_{x}}}$ where $\boldsymbol{G}^{*}$ define as follows:
\begin{equation}
\boldsymbol{G}^{*} = \arg \min \left\|\eta\widetilde{ \boldsymbol{A}} - \boldsymbol{G} \boldsymbol{G}^{T} \right\|_{F}. 
\end{equation}
Since $\eta \widetilde{\boldsymbol{A}}_{x, x'} = \frac{\eta \boldsymbol{A}_{x, x'}}{ \sqrt{\eta \boldsymbol{A}_{x}} \sqrt{\eta \boldsymbol{A}_{x'}}} =\frac{\boldsymbol{A}}{\sqrt{\boldsymbol{A}_{x}} \sqrt{\boldsymbol{A}_{x'}}} = \widetilde{\boldsymbol{A}}_{x, x'}$, we have:
\begin{equation}
\boldsymbol{G}^{*} = \arg \min \left\|\widetilde{\boldsymbol{A}} - \boldsymbol{G} \boldsymbol{G}^{T} \right\|_{F} = \boldsymbol{F}^{*}. 
\end{equation}
Hence features derived form $\eta \boldsymbol{A}$ can be rewritten as:
\begin{equation}
g^{*}_{x} = \frac{\boldsymbol{G}^{*}_{x}}{\sqrt{\eta \boldsymbol{A}_{x}}} = \frac{\boldsymbol{F}^{*}_{x}}{\sqrt{\eta \boldsymbol{A}_{x}}} = \frac{f^{*}_{x}}{\sqrt{\eta}}.
\end{equation}
For any linear classifier $\boldsymbol{B} \in \mathbb{R}^{d \times c}$, we have:
\begin{equation}
h_{g^{*}, \boldsymbol{B}}(x) = \arg \max_{i \in \mathcal{Y}} \boldsymbol{B}_{:, i}^{T} g^{*}(x) =  \arg \max_{i \in \mathcal{Y}} \frac{1}{\sqrt{\eta}} \boldsymbol{B}_{:, i}^{T} f^{*}(x) = h_{f^{*}, \boldsymbol{B}}(x).
\end{equation}
In the end,  we finish the proof with:
\begin{equation}
\varepsilon(g^{*}) = = \min_{\boldsymbol{B} \in \mathbb{R}^{d \times c}} \E_{(\nx, \ny) \sim \P(\mathcal{X}, \mathcal{Y})} \left [ \mathbb{I} \left[\tilde{h}_{g^{*}, \boldsymbol{B}}(\tilde{x}) \neq  \ny \right] \right] = \min_{\boldsymbol{B} \in \mathbb{R}^{d \times c}} \E_{(\nx, \ny) \sim \P(\mathcal{X}, \mathcal{Y})} \left [ \mathbb{I} \left[\tilde{h}_{f^{*}, \boldsymbol{B}}(\tilde{x}) \neq  \ny \right] \right] = \varepsilon(f^{*}). 
\end{equation}
\end{proof}

Using results of Lemma \ref{lB2}, we can assume $\alpha + \frac{\beta}{c} = 1$ in Theorem \ref{epg2} (When $\alpha + \frac{\beta}{c} \neq 1$, we consider perturbation graph with $\boldsymbol{A} =  \frac{\alpha}{\alpha + \beta / c } \boldsymbol{A}^{u} +  \frac{\beta}{\alpha + \beta / c } \boldsymbol{A}^{wl}$ ).

We are now ready to formally prove the theorem. Our proof relies partly on the theoretical results in \citealt{HaoChen_Wei_Gaidon_Ma_2021}. For the sake of completeness, we will restate some of the theoretical results in \citealt{HaoChen_Wei_Gaidon_Ma_2021} in this section.

\begin{lemma}
\label{lB3}
If $\mathcal{A}$ is $\gamma\text{-}consistent$ augmentation distribution,  and $\hat{y}(x)$ is corresponding pseudo labeler for augmentation data such that satisfy condition in Definition \ref{gamma-c}, then for any linear classifier $\boldsymbol{B}$ and feature $f$, we have:
\begin{equation}
\E_{(\nx, \ny) \sim \P(\mathcal{X}, \mathcal{Y})} \left [ \mathbb{I} \left[\tilde{h}_{f, \boldsymbol{B}}(\tilde{x}) \neq  \ny \right] \right] \le 2 \left(\E_{\nx \sim \P(\mathcal{X}), x \sim \A(\cdot \mid \nx)} \left [ \mathbb{I} \left[h_{f, \boldsymbol{B}}(x) \neq  \hat{y}(x) \right] \right] + \gamma \right) 
\end{equation}
\end{lemma}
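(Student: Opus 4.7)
The plan is to reduce the natural-data ensemble error to the augmented-data error via a majority-vote argument, and then convert the comparison target from the true label $\ny$ to the pseudo-label $\hat{y}(x)$ by a triangle-style inequality, paying a price of $\gamma$ from $\gamma$-consistency.

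First, I would fix any $\nx$ and define $p_{i}(\nx) = \E_{x \sim \A(\cdot\mid\nx)}[\I[h_{f,\boldsymbol{B}}(x) = i]]$, so that $\tilde{h}_{f,\boldsymbol{B}}(\nx) = \arg\max_{i} p_{i}(\nx)$. The key observation is that if $\tilde{h}_{f,\boldsymbol{B}}(\nx) \neq \ny$, then some class $j \neq \ny$ satisfies $p_{j}(\nx) \ge p_{\ny}(\nx)$; since $p_{\ny}(\nx) + p_{j}(\nx) \le 1$, this forces $p_{\ny}(\nx) \le 1/2$, i.e., $\E_{x \sim \A(\cdot\mid\nx)}[\I[h_{f,\boldsymbol{B}}(x) \neq \ny]] \ge 1/2$. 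Therefore pointwise
\begin{equation}
\I\left[\tilde{h}_{f,\boldsymbol{B}}(\nx) \neq \ny\right] \le 2\,\E_{x \sim \A(\cdot\mid\nx)}\left[\I\left[h_{f,\boldsymbol{B}}(x) \neq \ny\right]\right].
\end{equation}

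Next, taking expectation over $(\nx,\ny) \sim \P(\mathcal{X},\mathcal{Y})$ and using the trivial bound $\I[h_{f,\boldsymbol{B}}(x) \neq \ny] \le \I[h_{f,\boldsymbol{B}}(x) \neq \hat{y}(x)] + \I[\hat{y}(x) \neq \ny]$, I would obtain
\begin{equation}
\E_{(\nx,\ny)}\!\left[\I[\tilde{h}_{f,\boldsymbol{B}}(\nx) \neq \ny]\right] \le 2\E_{(\nx,\ny),\,x \sim \A(\cdot\mid\nx)}\!\left[\I[h_{f,\boldsymbol{B}}(x) \neq \hat{y}(x)]\right] + 2\E_{(\nx,\ny),\,x \sim \A(\cdot\mid\nx)}\!\left[\I[\hat{y}(x) \neq \ny]\right].
\end{equation}
The last term is exactly controlled by $\gamma$ via Definition \ref{gamma-c} (reading the indicator in the standard ``disagreement'' sense, as required for the bound to be non-vacuous), yielding the claimed inequality.

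The argument is short and the only conceptually non-trivial step is the majority-vote bound that produces the factor of $2$; the rest is the triangle inequality and an application of the $\gamma$-consistency definition. The main obstacle, such as it is, is simply making sure the $1/2$ threshold argument is carried out correctly when the argmax is not unique, which is handled by the two-class sum bound $p_{\ny}(\nx) + p_{j}(\nx) \le 1$ for any competing $j$.
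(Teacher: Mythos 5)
Your proposal is correct and follows essentially the same route as the paper's own proof: the majority-vote bound (if the ensemble prediction errs, at least half the augmentation mass errs, giving the factor $2$), then the triangle-style bound $\I[h_{f,\boldsymbol{B}}(x) \neq \ny] \le \I[h_{f,\boldsymbol{B}}(x) \neq \hat{y}(x)] + \I[\hat{y}(x) \neq \ny]$, then $\gamma$-consistency. Your explicit justification of the $1/2$ threshold and your reading of Definition \ref{gamma-c} in the disagreement sense (the ``$=$'' there is evidently a typo) match exactly how the paper's proof uses these facts.
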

\begin{proof}
For any $\nx$ that satisfy $\tilde{h}_{f, \boldsymbol{B}}(\tilde{x}) \neq  \ny$, we claim:
\begin{equation}
\E_{x \sim \A(\cdot \mid \nx)}\left[\mathbb{I}\left[h_{f, \boldsymbol{B}}(x) \neq \ny \right]\right] \ge \frac{1}{2}.
\end{equation}
Hence, we have:
\begin{equation}
\begin{aligned}
\E_{(\nx, \ny) \sim \P(\mathcal{X}, \mathcal{Y})} \left [ \mathbb{I} \left[\tilde{h}_{f, \boldsymbol{B}}(\tilde{x}) \neq  \ny \right] \right] & \le  2 \E_{(\nx, \ny) \sim \P(\mathcal{X}, \mathcal{Y}), x \sim \A(\cdot \mid \nx)} \left [ \mathbb{I} \left[h_{f, \boldsymbol{B}}(x) \neq  \ny \right] \right]\\
& =  2 \E_{(\nx, \ny) \sim \P(\mathcal{X}, \mathcal{Y}), x \sim \A(\cdot \mid \nx)} \left [ \mathbb{I} \left[h_{f, \boldsymbol{B}}(x) \neq  \ny \right]\left(\mathbb{I} \left[h_{f, \boldsymbol{B}}(x) =  \hat{y}(x) \right] + \mathbb{I} \left[h_{f, \boldsymbol{B}}(x) \neq \hat{y}(x) \right]\right) \right]
\\& \le 2 \E_{(\nx, \ny) \sim \P(\mathcal{X}, \mathcal{Y}), x \sim \A(\cdot \mid \nx)} \left [ \mathbb{I}\left[\hat{y}(x) \neq \ny\right] + \mathbb{I}\left[h_{f,\boldsymbol{B}}(x) \neq \hat{y}(x)\right]\right] 
\\& \le 2 \left( \gamma + \E_{\nx \sim \P(\mathcal{X}), x \sim \A(\cdot \mid \nx)} \left [ \mathbb{I} \left[h_{f, \boldsymbol{B}}(x) \neq  \hat{y}(x) \right] \right] \right).
\end{aligned}
\end{equation}
\end{proof}
Using results of Lemma \ref{lB3}, the problem has been transformed into analyzing $\E_{\nx \sim \P(\mathcal{X}), x \sim \A(\cdot \mid \nx)} \left [ \mathbb{I} \left[h_{f, \boldsymbol{B}}(x) \neq  \hat{y}(x) \right] \right]$. We proceed with the following analysis.
\begin{lemma}
\label{lB4}
Given a real symmetric matrix $\boldsymbol{L}$ with size of $N$, and $\vartheta_{i}, i=1,2,\dots d, d < N$ be the top-d smallest unit-norm eigenvector of $\boldsymbol{L}$  with eigenvalue $\lambda_{i}$  (make them orthogonal
in case of repeated eigenvalue), then for any vector $\mu \in \mathbb{R}^{N}$, there exists a vector $b \in \mathbb{R}^{d}$, such that:
\begin{equation}
\left\|\mu - \sum_{i = 1}^{d} b_{i} \vartheta_{i} \right\|_{2}^{2} \le \frac{\mathcal{R}(\mu)}{\lambda_{d+1}} \left\|\mu\right\|_{2}^{2},
\end{equation}
where $\mathcal{R}(\mu) = \frac{\mu^{T}\boldsymbol{L}\mu}{\mu^{T} \mu}$ is the Rayleigh quotient of $u$.
\end{lemma}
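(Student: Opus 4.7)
The plan is to exploit the full spectral decomposition of $\boldsymbol{L}$. Since $\boldsymbol{L}$ is real symmetric, the spectral theorem guarantees an orthonormal eigenbasis $\vartheta_{1},\dots,\vartheta_{N}$ of $\mathbb{R}^{N}$ with real eigenvalues $\lambda_{1}\le \lambda_{2}\le \dots \le \lambda_{N}$, extending the top-$d$ smallest eigenvectors already named in the statement. (In the application to the normalized adjacency/Laplacian $\widetilde{\boldsymbol{A}}$, these are nonnegative, which is what lets the final bound be meaningful; I will assume $\lambda_{d+1}\ge 0$ since that is the regime used later in the paper.)

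The natural choice is to pick $b$ as the coordinates of $\mu$ in the eigenbasis truncated to the top-$d$ directions. Concretely, write
\begin{equation}
\mu = \sum_{i=1}^{N} c_{i}\,\vartheta_{i}, \qquad c_{i} = \vartheta_{i}^{T}\mu,
\end{equation}
and set $b_{i}=c_{i}$ for $i\le d$. By orthonormality, the residual satisfies
\begin{equation}
\Bigl\|\mu - \sum_{i=1}^{d} b_{i}\vartheta_{i}\Bigr\|_{2}^{2} = \sum_{i=d+1}^{N} c_{i}^{2},
\end{equation}
and $\|\mu\|_{2}^{2}=\sum_{i=1}^{N} c_{i}^{2}$.

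The key inequality is to lower-bound $\mu^{T}\boldsymbol{L}\mu$ in terms of the tail mass. Using the spectral decomposition,
\begin{equation}
\mu^{T}\boldsymbol{L}\mu = \sum_{i=1}^{N} \lambda_{i} c_{i}^{2} \;\ge\; \sum_{i=d+1}^{N} \lambda_{i} c_{i}^{2} \;\ge\; \lambda_{d+1} \sum_{i=d+1}^{N} c_{i}^{2},
\end{equation}
where the first inequality drops the (nonnegative, in the PSD regime) contributions from $i\le d$, and the second uses the ordering $\lambda_{i}\ge \lambda_{d+1}$ for $i\ge d+1$. Rearranging and substituting the Rayleigh quotient gives the claim:
\begin{equation}
\sum_{i=d+1}^{N} c_{i}^{2} \;\le\; \frac{\mu^{T}\boldsymbol{L}\mu}{\lambda_{d+1}} \;=\; \frac{\mathcal{R}(\mu)}{\lambda_{d+1}}\,\|\mu\|_{2}^{2}.
\end{equation}

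The only delicate point is justifying that the negative/small-$\lambda$ eigendirections can be discarded in the first inequality; if any of $\lambda_{1},\dots,\lambda_{d}$ were negative this step would fail, so the statement is being used implicitly under the assumption that $\boldsymbol{L}$ is positive semidefinite (which holds for the normalized Laplacian built from $\widetilde{\boldsymbol{A}}$ in Section~\ref{TU}). Everything else is a routine application of the spectral theorem and Parseval's identity, so I do not expect any further obstacle.
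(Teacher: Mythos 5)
Your proof is correct and follows essentially the same route as the paper: expand $\mu$ in the orthonormal eigenbasis, take $b$ to be the first $d$ coefficients, and bound the tail mass via $\mu^{T}\boldsymbol{L}\mu \ge \lambda_{d+1}\sum_{i>d} c_{i}^{2}$. Your explicit remark that dropping the first $d$ terms needs $\lambda_{1},\dots,\lambda_{d}\ge 0$ is the same implicit positive-semidefiniteness assumption the paper relies on, since the lemma is applied to the normalized Laplacian $\boldsymbol{I}-\widetilde{\boldsymbol{A}}$.
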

\begin{proof} 
We first decompose $\mu$ on the basis of the space spanned by the eigenvectors $\mu = \sum_{i = 1}^{N} b_{i} \vartheta_{i}$, then we have:
\begin{equation}
\mathcal{R}(\mu) = \frac{\sum_{i = 1}^{N} \lambda_{i} b_{i}^{2}}{\left\|\mu\right\|_{2}^{2}} \ge \frac{\lambda_{d+1} \sum_{i = d+1}^{N} b_{i}^{2}}{\left\|\mu\right\|_{2}^{2}} = \frac{\lambda_{d+1}}{\left\|\mu\right\|_{2}^{2}} \left\|\mu - \sum_{i = 1}^{d} b_{i} \vartheta_{i} \right\|_{2}^{2}
\end{equation}
\end{proof}
\begin{lemma}
\label{lB5}
Given the pseudo-label indicator function $\boldsymbol{U}$ define as $\boldsymbol{U}_{x, i} = \sqrt{\boldsymbol{A}_{x}} \mathbb{I}\left[\hat{y}(x) = i\right]$ and feature $\boldsymbol{F}^{*}$ derived form perturbation graph $\boldsymbol{A}$ with perturbation coefficients $\alpha$ and $\beta$ satisfy $\alpha + \beta / c = 1$, then we have:
\begin{equation}
\min_{\boldsymbol{B} \in \mathbb{R}^{d \times c}} \left\|\boldsymbol{U} - \boldsymbol{F}^{*}\boldsymbol{B}\right\|_{2}^{2} \le \frac{2\gamma}{\lambda_{d+1}},
\end{equation}
where $\lambda_{d+1}$ is (d+1)-th smallest eigenvalue of normalized Laplacian matrix $\boldsymbol{I} - \widetilde{\boldsymbol{A}} $ of perturbation augmentation graph.
\end{lemma}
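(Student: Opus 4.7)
The plan is to apply Lemma \ref{lB4} column-by-column to $\boldsymbol{U}$, using the normalized Laplacian $\boldsymbol{L} = \boldsymbol{I} - \widetilde{\boldsymbol{A}}$. By the Eckart--Young--Mirsky theorem, the columns of $\boldsymbol{F}^{*}$ span the eigenspace associated with the top-$d$ eigenvectors of $\widetilde{\boldsymbol{A}}$, which coincide with the bottom-$d$ eigenvectors of $\boldsymbol{L}$. Therefore, Lemma \ref{lB4} applied to each $\boldsymbol{U}_{:,i}$ with the matrix $\boldsymbol{L}$ yields, upon summing over $i$,
\begin{equation*}
\min_{\boldsymbol{B} \in \mathbb{R}^{d \times c}} \left\|\boldsymbol{U} - \boldsymbol{F}^{*}\boldsymbol{B}\right\|_{2}^{2} \;\le\; \frac{1}{\lambda_{d+1}} \sum_{i=1}^{c} \boldsymbol{U}_{:,i}^{T} \boldsymbol{L}\, \boldsymbol{U}_{:,i}.
\end{equation*}
So the problem reduces to upper bounding $\sum_{i} \boldsymbol{U}_{:,i}^{T} \boldsymbol{L}\, \boldsymbol{U}_{:,i}$ by $2\gamma$.

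Next, I would unfold the quadratic form for the normalized Laplacian. Using $\boldsymbol{U}_{x,i} = \sqrt{\boldsymbol{A}_x}\, \mathbb{I}[\hat{y}(x)=i]$, a direct calculation substituting $\widetilde{\boldsymbol{A}}_{x,x'} = \boldsymbol{A}_{x,x'}/\sqrt{\boldsymbol{A}_x \boldsymbol{A}_{x'}}$ gives
\begin{equation*}
\boldsymbol{U}_{:,i}^{T} \boldsymbol{L}\, \boldsymbol{U}_{:,i} \;=\; \tfrac{1}{2}\sum_{x,x'} \boldsymbol{A}_{x,x'} \bigl(\mathbb{I}[\hat{y}(x)=i] - \mathbb{I}[\hat{y}(x')=i]\bigr)^{2},
\end{equation*}
and summing over $i$ collapses the square into the indicator $\mathbb{I}[\hat{y}(x)\neq\hat{y}(x')]$, yielding $\sum_i \boldsymbol{U}_{:,i}^{T} \boldsymbol{L}\, \boldsymbol{U}_{:,i} = \sum_{x,x'} \boldsymbol{A}_{x,x'}\, \mathbb{I}[\hat{y}(x)\neq\hat{y}(x')]$. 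I would then split this along the perturbation decomposition $\boldsymbol{A} = \alpha \boldsymbol{A}^{u} + \beta \boldsymbol{A}^{wl}$.

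For the self-supervised part, expanding $w^{u}_{x,x'} = \E_{\tilde{x}}[\mathcal{A}(x\mid\tilde{x})\mathcal{A}(x'\mid\tilde{x})]$ and noting that whenever $\hat{y}(x)\neq\hat{y}(x')$ at least one of them disagrees with the true label $\tilde{y}$ of $\tilde{x}$, a union bound together with the $\gamma$-consistency of Definition \ref{gamma-c} gives $\sum_{x,x'} w^{u}_{x,x'}\mathbb{I}[\hat{y}(x)\neq\hat{y}(x')] \le 2\gamma$. For the weakly-supervised part, using Proposition \ref{w-lapp} (so $w^{wl}=w^{l}$) and expanding $w^{l}_{x,x'}$ as an expectation over two pairs $(\tilde{x},\tilde{y}),(\tilde{x}',\tilde{y}')$ with the indicator $\mathbb{I}[\tilde{y}=\tilde{y}']$, the same union bound produces $2\,\E[\mathbb{I}[\tilde{y}=\tilde{y}']\, p(\tilde{x})]$, where $p(\tilde{x}) = \P_{x\sim\mathcal{A}(\cdot\mid\tilde{x})}[\hat{y}(x)\neq\tilde{y}]$. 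Under the uniform class distribution, $\E_{\tilde{y}'}[\mathbb{I}[\tilde{y}=\tilde{y}']\mid\tilde{y}] = 1/c$, so this part is bounded by $2\gamma/c$. Combining, $\sum_i \boldsymbol{U}_{:,i}^{T}\boldsymbol{L}\boldsymbol{U}_{:,i} \le 2\alpha\gamma + 2\beta\gamma/c = 2\gamma(\alpha + \beta/c) = 2\gamma$, which matches the normalization assumption.

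The bulk of the proof is routine; the delicate point is the bookkeeping that makes the two perturbation contributions combine via the constraint $\alpha + \beta/c = 1$ into a clean $2\gamma$. In particular, the ``extra'' factor $1/c$ on the supervised term comes precisely from the uniform class prior used to evaluate $\E[\mathbb{I}[\tilde{y}=\tilde{y}']]$, which mirrors the proportionality reduction justified by Lemma \ref{lB2} and is the reason the lemma is stated under the normalization $\alpha+\beta/c=1$.
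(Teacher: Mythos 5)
Your proposal is correct and follows essentially the same route as the paper's proof: reduce via Lemma \ref{lB4} and Eckart--Young--Mirsky to bounding $\sum_{i}\boldsymbol{U}_{:,i}^{T}(\boldsymbol{I}-\widetilde{\boldsymbol{A}})\boldsymbol{U}_{:,i}=\sum_{x,x'}\boldsymbol{A}_{x,x'}\mathbb{I}[\hat{y}(x)\neq\hat{y}(x')]$, then split $\boldsymbol{A}=\alpha\boldsymbol{A}^{u}+\beta\boldsymbol{A}^{wl}$, identify $w^{wl}=w^{l}$ via Proposition \ref{w-lapp}, and apply the union bound with $\gamma$-consistency and the uniform class prior to get $2(\alpha+\beta/c)\gamma=2\gamma$. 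The only cosmetic difference is that you expand the Laplacian quadratic form as the weighted sum of squared indicator differences, whereas the paper expands it directly as degree minus adjacency; these are the same computation.
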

\begin{proof}
According to the Eckart-Young-Mirsky theorem \cite{EYM}, $\boldsymbol{F}^{*}$  contains scaling of the top-$d$ smaller eigenvectors of $\boldsymbol{I} - \widetilde{\boldsymbol{A}}$. Using results of Lemma \ref{lB4}, we have:
\begin{equation}
\label{eq67}
\begin{aligned}
\min_{\boldsymbol{B} \in \mathbb{R}^{d \times c}} \left\|\boldsymbol{U} - \boldsymbol{F}^{*}\boldsymbol{B}\right\|_{2}^{2} &= \sum_{i = 1}^{c} \min_{b \in \mathbb{R}^d}\left\|\boldsymbol{U}_{:, i} - \boldsymbol{F}^{*} b \right\|
 \le \sum_{i = 1}^{c} \frac{1}{\lambda _{d + 1}} \boldsymbol{U}_{:,i}^{T} (\boldsymbol{I} - \widetilde{\boldsymbol{A}})\boldsymbol{U}_{:,i}
\\ & = \frac{1}{\lambda _{d + 1}} \sum_{i = 1}^{c} \left(\sum_{x} \boldsymbol{A}_{x} \mathbb{I}[\hat{y}(x) = i]  - \sum_{x, x'}\boldsymbol{A}_{x,x'}\mathbb{I}[\hat{y}(x) = i] \mathbb{I}[\hat{y}(x') = i] \right)
\\&  =  \frac{1}{\lambda _{d + 1}}\sum_{i = 1}^{c} \left(\sum_{x,x'} \boldsymbol{A}_{x, x'} \mathbb{I}[(\hat{y}(x) = i \wedge \hat{y}(x') \neq i) \vee (\hat{y}(x) \neq i \wedge \hat{y}(x') = i) ]\right)
\\& = \frac{1}{\lambda_{d + 1}}\sum_{x,x'} \boldsymbol{A}_{x,x'} \mathbb{I}[\hat{y}(x) \neq \hat{y}(x')]
\end{aligned}
\end{equation}
Furthermore, we have:
\begin{equation}
\label{eq68}
\begin{aligned}
\sum_{x,x'} \boldsymbol{A}_{x,x'} \mathbb{I}[\hat{y}(x) \neq \hat{y}(x')] &= \alpha \sum_{x, x'} w_{x, x'}^{u} \mathbb{I}[\hat{y}(x) \neq \hat{y}(x')] + \beta \sum_{x, x'} w_{x, x'}^{wl}(\boldsymbol{S}) \mathbb{I}[\hat{y}(x) \neq \hat{y}(x')] 
\\& = \alpha \sum_{x, x'} w_{x, x'}^{u} \mathbb{I}[\hat{y}(x) \neq \hat{y}(x')] + \beta \sum_{x, x'} w_{x, x'}^{l} \mathbb{I}[\hat{y}(x) \neq \hat{y}(x')] 
\\& = \alpha \E_{\tilde{x} \sim \mathbb{P}(\mathcal{X})}\left[\sum_{x, x'} \mathbb{I}[\hat{y}(x) \neq \hat{y}(x')] \mathcal{A}(x \mid \tilde{x}) \A(x' \mid \nx)  \right]
\\& +\beta \E_{(\nx, \ny), (\nxx, \nyy) \sim \P(\mathcal{X},\mathcal{Y})^{2}} \left[\sum_{x, x'}\mathbb{I}[\hat{y}(x) \neq \hat{y}(x')] \I\left[\ny = \nyy \right] \mathcal{A}(x \mid \tilde{x}) \A(x' \mid \nxx)\right]
\\& \le \alpha \E_{\tilde{x}, \ny \sim \mathbb{P}(\mathcal{X}, \mathcal{Y})}\left[\sum_{x, x'}\left( \mathbb{I}[\hat{y}(x) \neq \ny] +  \mathbb{I}[\hat{y}(x') \neq \ny]\right)  \mathcal{A}(x \mid \tilde{x}) \A(x' \mid \nx)  \right] \\& +  \beta \E_{(\nx, \ny), (\nxx, \nyy) \sim \P(\mathcal{X},\mathcal{Y})^{2}} \left[\sum_{x, x'}\left( \mathbb{I}[\hat{y}(x) \neq \ny] +  \mathbb{I}[\hat{y}(x') \neq \nyy]\right)  \I\left[\ny = \nyy \right] \mathcal{A}(x \mid \tilde{x}) \A(x' \mid \nxx)\right]
\\& = 2 \alpha \E_{\tilde{x}, \ny \sim \P(\mathcal{X}, \mathcal{Y}), x \sim \A(\cdot \mid \nx)} \left[\mathbb{I} \left[\ny = \hat{y}(x)\right]\right] 
\\ & +2 \beta \E_{\tilde{x}, \ny \sim \P(\mathcal{X}, \mathcal{Y}), x \sim \A(\cdot \mid \nx), \nyy \sim \P(\mathcal{Y})} \left[\mathbb{I} \left[\ny = \hat{y}(x)\right] \mathbb{I} \left[\ny = \nyy \right]\right]
\\ & \le 2 (\alpha  +  \beta / c) \gamma = 2 \gamma,
\end{aligned}
\end{equation}
here the first inequality holds because: 
\begin{equation}
\mathbb{I}[\hat{y}(x) \neq \hat{y}(x')]  \le \mathbb{I}[\hat{y}(x) \neq \ny] +  \mathbb{I}[\hat{y}(x') \neq \ny].
\end{equation}
\begin{equation}
\mathbb{I}[\hat{y}(x) \neq \hat{y}(x')] \I\left[\ny = \nyy \right] \le  \left( \mathbb{I}[\hat{y}(x) \neq \ny] +  \mathbb{I}[\hat{y}(x') \neq \nyy]\right)  \I\left[\ny = \nyy \right].
\end{equation}
Finally, we substitute Formula \ref{eq68} into Formula \ref{eq67} to complete the proof.
\end{proof}
\begin{lemma}
\label{lB6}
Given the pseudo-label indicator function $\boldsymbol{V}$ define as $\boldsymbol{V}_{x, i} = \mathbb{I}\left[\hat{y}(x) = i\right]$ and feature $f^{*}(x) = \frac{1}{\sqrt{\boldsymbol{A}_{x}}}\left( \boldsymbol{F}_{x, :}^{*}\right)^{T}$  then we have:
\begin{equation}
\label{eq69}
\min_{\boldsymbol{B} \in \mathbb{R}^{d \times c}} \E_{\nx \sim \P(\mathcal{X}), x \sim \A(\cdot \mid \nx)} \left [ \mathbb{I} \left[h_{f*, \boldsymbol{B}}(x) \neq  \hat{y}(x) \right] \right] \le 2 \min_{\boldsymbol{B} \in \mathbb{R}^{d \times c}} \E_{\nx \sim \P(\mathcal{X}), x \sim \A(\cdot \mid \nx)} \left [ \left\|\boldsymbol{V}_{x, :}^{T} - \boldsymbol{B}^{T}f^{*}(x) \right\|_{2}^{2} \right] \le \frac{4\gamma}{\lambda_{d + 1}}.
\end{equation}
\end{lemma}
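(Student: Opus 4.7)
My plan is to split the two inequalities in Equation~(\ref{eq69}) and prove each in turn, with the second inequality leveraging Lemma~\ref{lB5}.

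For the first inequality, I would establish the pointwise bound $\mathbb{I}[h_{f^*,\boldsymbol{B}}(x)\neq \hat{y}(x)] \le 2\|\boldsymbol{V}_{x,:}^T - \boldsymbol{B}^T f^*(x)\|_2^2$. Write $y=\hat{y}(x)$ and $v = \boldsymbol{B}^T f^*(x)$; the misclassification event forces some $j\ne y$ with $v_j\ge v_y$. Since $\boldsymbol{V}_{x,:}^T$ is the one-hot vector $e_y$, two coordinates of the residual already contribute $(1-v_y)^2 + v_j^2$, and by $(a+b)^2 \le 2(a^2+b^2)$ together with $a+b = (1-v_y)+v_j \ge 1$, this is at least $\tfrac{1}{2}$. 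Taking expectation, and then evaluating the classification risk on the left at the $\boldsymbol{B}$ that minimizes the regression risk on the right, delivers the first inequality.

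For the second inequality, I would recast the regression error as a weighted Frobenius norm and invoke Lemma~\ref{lB5}. Using $\boldsymbol{U}_{x,:} = \sqrt{\boldsymbol{A}_x}\,\boldsymbol{V}_{x,:}$ and $\boldsymbol{F}^*_{x,:}\boldsymbol{B} = \sqrt{\boldsymbol{A}_x}\,(\boldsymbol{B}^T f^*(x))^T$, the row-wise contribution to $\|\boldsymbol{U}-\boldsymbol{F}^*\boldsymbol{B}\|_F^2$ equals $\boldsymbol{A}_x\,\|\boldsymbol{V}_{x,:}^T - \boldsymbol{B}^T f^*(x)\|_2^2$. Under the normalization $\alpha+\beta/c=1$ (WLOG by Lemma~\ref{lB2}) and the uniform class prior, the simplification leading to Equation~(\ref{eq46}) collapses to $\boldsymbol{A}_x = \mathbb{E}_{\tilde{x}\sim\mathbb{P}(\mathcal{X})}[\mathcal{A}(x\mid\tilde{x})]$, so the sum $\sum_x \boldsymbol{A}_x g(x)$ coincides with $\mathbb{E}_{\tilde{x},x}[g(x)]$. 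Therefore the minimum of the Frobenius error equals the minimum of the expected regression risk, which Lemma~\ref{lB5} bounds by $2\gamma/\lambda_{d+1}$; chaining with the factor $2$ from the first step yields $4\gamma/\lambda_{d+1}$.

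The main obstacle is the bookkeeping of the change of measure $\boldsymbol{A}_x \leftrightarrow \mathbb{P}_{\tilde{x},x}(x)$, which relies on both the normalization $\alpha+\beta/c=1$ and the uniform class assumption. Concretely, one must verify from Equation~(\ref{eq46}) that $\beta\,\mathbb{E}_{(\tilde{x},\tilde{q})}[S'(\tilde{x},\tilde{q})\mathcal{A}(x\mid\tilde{x})] = (\beta/c)\,\mathbb{E}_{\tilde{x}}[\mathcal{A}(x\mid\tilde{x})]$, which follows from the identity $\boldsymbol{S}(\tilde{x})\mathbb{P}(\boldsymbol{q}\mid\tilde{x}) = \mathbb{P}(\boldsymbol{y}\mid\tilde{x})$ together with $\mathbb{P}(\boldsymbol{y})^T\mathbb{P}(\boldsymbol{y}\mid\tilde{x}) = 1/c$. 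Once this identification is established, the two inequalities reduce to elementary algebra plus a direct invocation of Lemma~\ref{lB5}.
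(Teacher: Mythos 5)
Your proposal is correct and follows essentially the same route as the paper's proof: the pointwise bound $(1-v_{\hat y})^2+v_{y'}^2\ge\tfrac12$ for misclassified points, the identification $\boldsymbol{A}_x=\mathbb{P}(x)$ under $\alpha+\beta/c=1$ (your verification via $S'(\tilde{x},\tilde{q})$ and the uniform prior is the same computation the paper does through $w^{l}$ in Equation \ref{eq72}), and the rewriting of the weighted regression risk as $\left\|\boldsymbol{U}-\boldsymbol{F}^{*}\boldsymbol{B}\right\|_{F}^{2}$ followed by Lemma \ref{lB5}.
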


\begin{proof}
Firstly, we prove the first inequality in the Formula \ref{eq69}. For any $x$,  $h_{f^{*}, \boldsymbol{B}}(x) \neq \hat{y}(x)$ there exists $y'$ such that $(\boldsymbol{B}^{T} f^{*}(x))_{y'} >(\boldsymbol{B}^{T} f^{*}(x))_{\hat{y}(x)}$, hence, we have:
\begin{equation}
\begin{aligned}
\left\|\boldsymbol{V}_{x, :}^{T} - \boldsymbol{B}^{T}f^{*}(x) \right\|_{2}^{2} &\ge (1 - (\boldsymbol{B}^{T} f^{*}(x))_{\hat{y}(x)})^{2} + ((\boldsymbol{B}^{T} f^{*}(x))_{y'})^{2} \\ & \ge \frac{1}{2} (1 - (\boldsymbol{B}^{T} f^{*}(x))_{\hat{y}(x)} + (\boldsymbol{B}^{T} f^{*}(x))_{y'})^{2} \ge \frac{1}{2}.
\end{aligned}
\end{equation}
Next, we can expand $ \E_{\nx \sim \P(\mathcal{X}), x \sim \A(\cdot \mid \nx)} \left [ \left\|\boldsymbol{V}_{x, :}^{T} - \boldsymbol{B}^{T}f^{*}(x) \right\|_{2}^{2} \right]$ and obtain:
\begin{equation}
\label{eq71}
 \E_{\nx \sim \P(\mathcal{X}), x \sim \A(\cdot \mid \nx)} \left [ \left\|\boldsymbol{V}_{x, :}^{T} - \boldsymbol{B}^{T}f^{*}(x) \right\|_{2}^{2} \right] = \sum_{x} \P(x) \left\|\boldsymbol{V}_{x, :}^{T} - \boldsymbol{B}^{T}f^{*}(x)  \right\|_{2}^{2}
\end{equation}
We claim $\P(x) = \boldsymbol{A}_{x}$ when $\alpha + \beta / c = 1$:
\begin{equation}
\label{eq72}
\begin{aligned}
\boldsymbol{A}_{x} & = \alpha \sum_{x'} w_{x,x'}^{u} + \beta \sum_{x'} w_{x, x'}^{l} 
\\ & = \alpha \E_{\tilde{x} \sim \mathbb{P}(\mathcal{X})}[\mathcal{A}(x \mid \tilde{x}) \sum_{x'} \A(x' \mid \nx)]+ \beta  \E_{(\nx, \ny), (\nxx, \nyy) \sim \P(\mathcal{X},\mathcal{Y})^{2}} \left[\I\left[\ny = \nyy \right] \mathcal{A}(x \mid \tilde{x}) \sum_{x'}  \A(x' \mid \nxx)\right]
\\& = \alpha \E_{\tilde{x} \sim \mathbb{P}(\mathcal{X})}[\mathcal{A}(x \mid \tilde{x})] + \beta \E_{(\nx, \ny) \sim \P(\mathcal{X},\mathcal{Y}), \nyy \sim \P(\mathcal{Y})} \left[\I\left[\ny = \nyy \right] \mathcal{A}(x \mid \tilde{x}) \right] 
\\ & = \alpha \P(x) +  \frac{\beta}{c} \P(x) = \P(x)
\end{aligned}
\end{equation}
By substituting Equation \ref{eq72} into Equation \ref{eq71}, we have:
\begin{equation}
\begin{aligned}
 \E_{\nx \sim \P(\mathcal{X}), x \sim \A(\cdot \mid \nx)} \left [ \left\|\boldsymbol{V}_{x, :}^{T} - \boldsymbol{B}^{T}f^{*}(x) \right\|_{2}^{2} \right]  &= \sum_{x} \boldsymbol{A}_{x} \left\|\boldsymbol{V}_{x, :}^{T} - \boldsymbol{B}^{T}f^{*}(x)  \right\|_{2}^{2}
 \\ &= \sum_{x}  \left\|\boldsymbol{U}_{x, :}^{T} - \boldsymbol{B}^{T}(F^{*}_{x,:})^{T}  \right\|_{2}^{2}
 \\ & = \left\|\boldsymbol{U} - \boldsymbol{F}^{*}\boldsymbol{B}\right\|_{2}^{2}.
\end{aligned}
\end{equation}
Using the results of Lemma \ref{lB5}, we finish the proof of the second inequality in the Formula \ref{eq69}.
\end{proof}
Combining Lemmas \ref{lB2}, \ref{lB3}, \ref{lB6} , we have prove under condition in Theorem $\ref{epg2}$:
\begin{equation}
\label{eq74}
\varepsilon(f^{*}) \le 2 \left(\frac{4 }{\lambda_{d + 1}}  + 1\right) \gamma,
\end{equation}
where $\lambda_{d + 1}$ is $d+1$-th smallest eigenvalue of normalized Laplacian matrix of perturbation graph. 

The following lemma establishes the relationship between this eigenvalue and graph connectivity, allowing us to  get rid of the dependency on $\lambda_{d + 1}$.
\begin{lemma}
\label{lB7}
(Higher-order Cheeger's inequality, Proposition. 1.2 in \citealt{Louis2014}) Let $\mathcal{G}=(V, E)$ be a weight graph with $|V| = N$. Then for any $t \in \left[N\right]$ and $\zeta > 0$ such that $(1 + \zeta) t \in \left[N\right]$, there exist a partition $\Omega_{1}, \dots, \Omega_{t}$ of $V$ with:
\begin{equation}
\Phi_{\mathcal{G}}(\Omega_{i}) \preceq \mathrm{poly}\left(\frac{1}{\zeta}\right) \sqrt{\lambda_{(1 + \zeta)t} \log {t}}, \forall i \in [t].
\end{equation}
\end{lemma}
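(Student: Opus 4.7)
The plan is to reduce the statement to the classical spectral partitioning machinery of Louis--Raghavendra--Tetali--Vempala/Lee--Gharan--Trevisan, rather than to re-derive it from scratch. The key object is the spectral embedding. I would take $F : V \to \mathbb{R}^{(1+\zeta)t}$ whose coordinates are the first $(1+\zeta)t$ eigenvectors of the normalized Laplacian (scaled by $\sqrt{\deg}$ if needed). The defining property I would exploit is the Rayleigh-quotient bound: for any vector $u$ in the span of these eigenvectors, $u^{\top} L u \le \lambda_{(1+\zeta)t} \|u\|_{2}^{2}$, which immediately gives an average-conductance control for any ``soft'' partition built from linear functionals of $F$.

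Given that embedding, the first step I would take is to construct $t$ disjointly supported ``smooth localization'' functions $\psi_1,\ldots,\psi_t : V \to \mathbb{R}_{\ge 0}$ whose supports tile the vertex set, such that each $\psi_i$ lies close to the top-$(1+\zeta)t$ eigenspace. In the LRTV construction this is achieved by random Gaussian projection of $F$, partitioning $\mathbb{R}^{(1+\zeta)t}$ by distance to $t$ random centers, and keeping a $(1 - \tfrac{1}{1+\zeta})$-fraction of mass inside each cell. The overhead $\mathrm{poly}(1/\zeta)$ comes precisely from the slack between $t$ parts and $(1+\zeta)t$ eigenvectors, which lets us throw away a small amount of mass without destroying any single part. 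Second, I would apply the standard sweep/Cheeger rounding to each localized function $\psi_i$ to extract a sublevel set $\Omega_i$; the Dirichlet conductance bound here is the Cheeger inequality applied to a smooth cut, giving $\Phi_{\mathcal{G}}(\Omega_i) \le C(\zeta)\,\sqrt{\lambda_{(1+\zeta)t}}$. Third, I would use the disjointness of the supports to conclude that $\Omega_1,\ldots,\Omega_t$ can be made disjoint, yielding an $t$-partition simultaneously.

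The $\sqrt{\log t}$ factor enters in the third step: when passing from the random cell decomposition to simultaneously controlled conductances on all $t$ pieces, a union bound over the $t$ pieces together with the concentration behavior of the maximal Gaussian projection introduces the $\sqrt{\log t}$ loss. This is the step I expect to be the main obstacle, since naive analysis loses a factor of $t$ rather than $\sqrt{\log t}$; the sharper $\sqrt{\log t}$ bound requires Louis's refined padded-decomposition / heat-kernel smoothing argument, and a careful tail bound on the maximum projection length of the $t$ localized embeddings. The remaining steps (Rayleigh-quotient control and per-piece Cheeger rounding) are routine given the embedding.

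Rather than reconstruct all of this, in a paper like the present one I would simply cite Proposition 1.2 of \citet{Louis2014} as a black box: the statement is exactly in the form we need, the dependence on $\zeta$ matches, and no improvement is attempted here. The role of this lemma in our development is only to convert the spectral bound \eqref{eq74} into the conductance-based guarantee announced in Theorem~\ref{epg2}, so verbatim invocation is sufficient.
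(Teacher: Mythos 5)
Your proposal is correct and matches the paper's treatment: the paper states this lemma as a restatement of Proposition 1.2 of \citealt{Louis2014} and gives no proof, invoking it exactly as the black box you describe, and your sketch of the underlying LRTV-style argument (spectral embedding, random localization, per-piece Cheeger rounding, with the $\mathrm{poly}(1/\zeta)$ and $\sqrt{\log t}$ losses arising where you indicate) is consistent with the cited source.
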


In Lemma \ref{lB7}, we take $\zeta = 1$ and $(1 + \zeta)t = 2t = d + 1$, there exist a partition $\Omega_{i}, \dots, \Omega_{\left \lfloor d/2 \right \rfloor }$ such that $\max_{i \in \left[\left \lfloor d/2 \right \rfloor\right]}\Phi_{\mathcal{G}}(\Omega_{i}) \preceq  \sqrt{\lambda_{d+1} \log {(d+1)}}$. We take $\rho$ as sparest partition of perturbation graph.  By Definition \ref{def33}, we have $\rho_{\left \lfloor d/2 \right \rfloor} \le \max_{i \in \left[\left \lfloor d/2 \right \rfloor\right]}\Phi_{\mathcal{G}}(\Omega_{i}) \preceq  \sqrt{\lambda_{d+1} \log {(d+1)}}$ which leads to there exist a constant $\eta(d)$ such that $\frac{1}{\lambda_{d + 1}} \le \eta(d) \frac{1}{\rho_{\left \lfloor d/2 \right \rfloor}^{2}}$. The following lemma bounds the sparest partition of perturbation graph.

\begin{lemma}
\label{lB8}
Given perturbation graph with $\boldsymbol{A}_{x,x'} = w_{x, x'} = \alpha w_{x, x'}^{u} + \beta w_{x, x'}^{wl}(\boldsymbol{S})$ where $\alpha+ \beta/c = 1$, and self-connectivity graph $\boldsymbol{A}_{x, x'}^{u}$ with sparest partition $\rho^{u}$, then we have lower bounds of sparest partition $\rho$ of graph
$\boldsymbol{A}$ as follows:
\begin{equation}
\label{eq76}
\rho_{d} \ge \alpha \rho^{u}_{d} + \frac{\beta}{c} (1 - \frac{c}{d}).
\end{equation}
\end{lemma}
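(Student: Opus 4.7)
The plan is to exploit the additive structure of the perturbation graph by decomposing the Dirichlet conductance, controlling each piece with a separate argument, and then combining the two bounds.

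First, one derives a pointwise decomposition of the conductance. Because $\alpha + \beta/c = 1$ and the class distribution is uniform, the subcomputation carried out in Equation~\eqref{eq72} inside the proof of Lemma~\ref{lB6} gives $w_x^u = \P(x)$, $w_x^l = \P(x)/c$, so the degree in the perturbation graph is $\boldsymbol{A}_x = \P(x)$. Substituting this into both numerator and denominator of $\Phi_{\mathcal{G}}(\Omega)$ and separating the two edge contributions yields
\begin{equation*}
\Phi_{\mathcal{G}}(\Omega) \;=\; \alpha\, \Phi_{\mathcal{G}^u}(\Omega) \;+\; \tfrac{\beta}{c}\, \Phi_{\mathcal{G}^l}(\Omega), \qquad \forall\, \Omega \subseteq \mathcal{X}.
\end{equation*}

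Second, one establishes a pigeonhole lower bound on the label-graph conductance: for every $d$-partition $\Omega_1, \dots, \Omega_d$, $\max_j \Phi_{\mathcal{G}^l}(\Omega_j) \ge 1 - c/d$. Using the clique-of-classes structure of $\mathcal{G}^l$ (under $\gamma$-consistency, augmentations do not mix classes) and writing $a_i^{(j)}$ for the augmentation mass of class $i$ inside $\Omega_j$ with $s_j = \sum_i a_i^{(j)}$, a direct computation gives $\Phi_{\mathcal{G}^l}(\Omega_j) = 1 - \sum_i (a_i^{(j)})^2 / s_j$. Since $\sum_j s_j = c$, pigeonhole produces some $j^\star$ with $s_{j^\star} \le c/d \le 1$; as $a_i^{(j^\star)} \le s_{j^\star}$, one obtains $\sum_i (a_i^{(j^\star)})^2 \le s_{j^\star}^2$, so $\Phi_{\mathcal{G}^l}(\Omega_{j^\star}) \ge 1 - s_{j^\star} \ge 1 - c/d$.

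Third, one combines with $\max_j \Phi_{\mathcal{G}^u}(\Omega_j) \ge \rho^u_d$ (from the definition of sparsest partition) to produce the claimed $\rho_d \ge \alpha \rho^u_d + (\beta/c)(1 - c/d)$. This is the main obstacle: the partition element witnessing the $\mathcal{G}^u$ bound need not coincide with $j^\star$, so a naive ``sum of two maxes'' argument would deliver only the weaker $\max\bigl(\alpha \rho^u_d,\, (\beta/c)(1 - c/d)\bigr)$. To upgrade to the additive form, the plan is to pass through the spectral identity $L_{\mathcal{G}} = \alpha L^u + (\beta/c) L^l$: the normalised Laplacian $L^l$ of a disjoint union of $c$ class-cliques has eigenvalue $0$ with multiplicity $c$ and eigenvalue $1$ with multiplicity $n-c$, so every $d$-dimensional test subspace $V$ contains a $(d-c)$-dimensional piece inside $(\ker L^l)^\perp$ on which $v^\top L^l v = \|v\|^2$; Courant--Fischer then gives $\lambda_d(L_{\mathcal{G}}) \ge \alpha\, \lambda_{d-c}(L^u) + \beta/c$, and invoking Lemma~\ref{lB7} together with the easy Cheeger direction transfers this eigenvalue estimate into the desired additive lower bound on $\rho_d$.
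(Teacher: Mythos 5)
Your first two steps are sound and essentially reproduce (indeed streamline) the paper's own argument: the degree computation $\boldsymbol{A}_x=\mathbb{P}(x)$, $w^u_x=\mathbb{P}(x)$, $w^l_x=\mathbb{P}(x)/c$ gives exactly the decomposition $\Phi_{\mathcal{G}}(\Omega)=\alpha\Phi_{\mathcal{G}^u}(\Omega)+\tfrac{\beta}{c}\Phi_{\mathcal{G}^l}(\Omega)$ used in the paper, and your pigeonhole argument that every $d$-partition contains an element with $\Phi_{\mathcal{G}^l}\ge 1-c/d$ is a cleaner route to the paper's bound $\rho^l_d\ge 1-c/d$ (the paper gets it through a small constrained optimization over the matrix $\omega_{i,j}=\mathbb{P}(\Omega_i\mid \tilde y=j)$). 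Note, however, that this step needs neither class-disjointness nor $\gamma$-consistency: the identity $\Phi_{\mathcal{G}^l}(\Omega_j)=1-\sum_i (a_i^{(j)})^2/s_j$ follows from the product form of $w^l$ under uniform priors exactly as in the paper's computation, and $\gamma$-consistency with $\gamma>0$ does not assert that augmentations never mix classes; the lemma makes no such assumption.

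The genuine gap is in your third step. The paper itself closes the argument at this point purely combinatorially, by asserting $\min_\Omega\max_i\bigl[\alpha\Phi^u_i+\tfrac{\beta}{c}\Phi^l_i\bigr]\ge\alpha\,\min_\Omega\max_i\Phi^u_i+\tfrac{\beta}{c}\,\min_\Omega\max_i\Phi^l_i$ (``taking the minimum separately\ldots''); you are right that this interchange is the delicate point, since the element witnessing $\max_i\Phi^u_i\ge\rho^u_d$ need not be the low-mass element $j^\star$. But your proposed spectral repair does not prove the inequality as stated. The Weyl-type bound $\lambda_d(L)\ge\alpha\lambda_{d-c}(L^u)+\beta/c$ is fine (it only needs $\mathrm{rank}(\widetilde{\boldsymbol{A}}^l)\le c$, so the clique/multiplicity claim can be replaced by a rank argument), but transferring it back to sparsest partitions is lossy in exactly the places that matter: the easy Cheeger direction gives only $\rho_d\ge\lambda_d(L)/2$, and converting $\lambda_{d-c}(L^u)$ into a conductance quantity requires the hard direction (Lemma \ref{lB7}), which yields $\lambda_{d-c}(L^u)\succeq(\rho^u_{\lfloor (d-c)/2\rfloor})^2/\log d$, i.e.\ a squared $\rho^u$ at a shifted index, with constants and logarithms. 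The best this route can give is of the form $\rho_d\succeq\alpha(\rho^u_{\lfloor (d-c)/2\rfloor})^2/\log d+\beta/(2c)$, which is not $\rho_d\ge\alpha\rho^u_d+\tfrac{\beta}{c}(1-c/d)$: the linear dependence on $\rho^u_d$, the index $d$, and the factor $(1-c/d)$ are all lost. (Your eigenvalue bound could instead be used to lower-bound $\lambda_{d+1}$ of the perturbation graph directly inside the proof of Theorem \ref{epg}, where only $\widetilde O(\cdot)$ accuracy is needed, but as a proof of Lemma \ref{lB8} as stated it does not close.)
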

\begin{proof}
Firstly, for any subset $\Omega$ of augmentation data points, we have:
\begin{equation}
\begin{aligned}
\Phi_{\boldsymbol{A}}(\Omega) &= \frac{\sum_{x \in \Omega, x' \notin \Omega} w_{x, x'}}{\sum_{x \in \Omega} w_{x}}  = \frac{\sum_{x \in \Omega, x' \notin \Omega} \alpha w_{x, x'}^{u} + \beta w_{x, x'}^{wl}(\boldsymbol{S})}{\sum_{x \in \Omega} w_{x}} 
\\& =  \frac{\sum_{x \in \Omega, x' \notin \Omega} \alpha w_{x, x'}^{u} + \beta w_{x, x'}^{l}}{\sum_{x \in \Omega} w_{x}}  = \frac{\sum_{x \in \Omega, x' \notin \Omega} \alpha w_{x, x'}^{u} + \beta w_{x, x'}^{l}}{\P(\Omega)} 
\\ & =  \alpha \frac{\sum_{x \in \Omega, x' \notin \Omega} w_{x, x'}^{u} }{\P(\Omega)} + \beta \frac{\sum_{x \in \Omega, x' \notin \Omega} w_{x, x'}^{l} }{\P(\Omega)} 
\\& = \alpha \frac{\sum_{x \in \Omega, x' \notin \Omega}
w_{x, x'}^{u}}{\sum_{x \in \Omega} w_{x}^{u}} + \frac{\beta}{c}  \frac{\sum_{x \in \Omega, x' \notin \Omega} w_{x, x'}^{l} }{\sum_{x \in \Omega} w_{x}^{l}} .
\\& = \alpha \Phi_{\boldsymbol{A}^{u}}(\Omega) + \frac{\beta}{c} \Phi_{\boldsymbol{A}^{l}}(\Omega),
\end{aligned}
\end{equation}
here the $4$-th and $6$-th equality derived from Equation \ref{eq72}.

Hence we have lower bounds of $\rho_{d}$ as follows:
\begin{equation}
\begin{aligned}
\rho_{d} & = \min_{\Omega_{1}, \dots, \Omega_{d}} \max \{\Phi_{\boldsymbol{A}}(\Omega_{1}),\dots,   \Phi_{\boldsymbol{A}}(\Omega_{d})\}
\\& = \min_{\Omega_{1}, \dots, \Omega_{d}} \max \{\alpha \Phi_{\boldsymbol{A}^{u}}(\Omega_{1}) + \frac{\beta}{c} \Phi_{\boldsymbol{A}^{l}(\Omega_{1})},\dots,   \alpha \Phi_{\boldsymbol{A}^{u}}(\Omega_{d}) + \frac{\beta}{c} \Phi_{\boldsymbol{A}^{l}}(\Omega_{d})\}
\\& \ge \alpha \min_{\Omega_{1}, \dots, \Omega_{d}} \max \{\Phi_{\boldsymbol{A^{u}}}(\Omega_{1}),\dots,   \Phi_{\boldsymbol{A^{u}}}(\Omega_{d})\} + \frac{\beta}{c} \min_{\Omega_{1}, \dots, \Omega_{d}} \max \{\Phi_{\boldsymbol{A^{l}}}(\Omega_{1}),\dots,   \Phi_{\boldsymbol{A^{l}}}(\Omega_{d})\}
\\ &= \alpha \rho^{u}_{d} + \frac{\beta}{c} \rho^{l}_{d},
\end{aligned}
\end{equation}
here the inequality holds because taking the minimum separately is always less than taking the minimum simultaneously.

In the following text, we prove that $\rho^{l}_{d} \ge (1 - \frac{c}{d})$ to finish the proof of Lemma \ref{lB8}. For any subset $\Omega$, we can compute $\Phi_{\boldsymbol{A}^{l}}(\Omega)$ as follows:
\begin{equation}
\begin{aligned}
\Phi_{\boldsymbol{A}^{l}}(\Omega) &= \frac{\sum_{x \in \Omega, x' \notin \Omega} w_{x, x'}^{l}}{\sum_{x \in \Omega} w_{x}}  = \frac{\sum_{x \in \Omega, x' \notin \Omega} w_{x, x'}^{l}}{\P(\Omega) / c}
\\ & = \frac{\E_{(\nx, \ny), (\nxx, \nyy) \sim \P(\mathcal{X},\mathcal{Y})^{2}} \left[\sum_{x \in \Omega, x' \notin \Omega} \I\left[\ny = \nyy \right] \mathcal{A}(x \mid \tilde{x}) \A(x' \mid \nxx)\right]}{\sum_{i=1}^{c} \frac{1}{c^{2}} \P(\Omega \mid \ny = c)}
\\& = \frac{\E_{(\nx, \ny), (\nxx, \nyy) \sim \P(\mathcal{X},\mathcal{Y})^{2}} \left[\sum_{x \in \Omega, x' \notin \Omega} \I\left[\ny = \nyy \right] \mathcal{A}(x \mid \tilde{x}) \A(x' \mid \nxx)\right]}{\sum_{i=1}^{c} \frac{1}{c^{2}} \P(\Omega \mid \ny = c)}
\\ & = \frac{ \frac{1}{c^{2}}  \sum_{i = 1}^{c} \E_{(\nx, \nxx) \sim \P(\cdot \mid \ny = i)^{2}} \left[\sum_{x \in \Omega, x' \notin \Omega}  \mathcal{A}(x \mid \tilde{x}) \A(x' \mid \nxx)\right]}{\sum_{i=1}^{c} \frac{1}{c^{2}} \P(\Omega \mid \ny = c)}
\\& =  \frac{ \sum_{i = 1}^{c} \E_{(\nx, \nxx) \sim \P(\cdot \mid y=i)^{2}} \left[\sum_{x \in \Omega, x' \notin \Omega}  \mathcal{A}(x \mid \tilde{x}) \A(x' \mid \nxx)\right]}{\sum_{i=1}^{c}\P(\Omega \mid \ny = i)}
\\& = \frac{\sum_{i = 1}^{c} \P(\Omega \mid \ny = i) (1 - \P(\Omega \mid \ny = i))}{\sum_{i = 1}^{c} \P(\Omega \mid \ny = i)}.
\end{aligned}
\end{equation}
For any partition $\Omega_{1}, \dots, \Omega_{d}$ of augmentation graph, we claim that they must satisfy conditions as follows:
\begin{align*}
& \begin{array}{r@{\quad}l@{}l@{\quad}l}
& \sum_{i = 1}^{d} \P(\Omega_{i} \mid \ny = j)= 1, & \quad \forall j \in [c]\\
& \sum_{j = 1}^{c} \P(\Omega_{i} \mid \ny = j) > 0, & \quad \forall i \in [d].
\end{array} .
\end{align*}
Hence, when take $\P(\Omega_{i} \mid \ny = j) = \omega_{i,j}$, we have:
\begin{equation}
\label{eq80}
\begin{aligned}
\rho^{l}_{d} \ge  \min_{w} \Psi(\omega) = & \min_{\omega} \  \max_{i \in [d]} \frac{\sum_{j=1}^{c}\omega_{i, j} (1 - \omega_{i,j})}{\sum_{j=1}^{c}\omega_{i,j}}  \\
\mbox{s.t.}\quad
& \sum_{i = 1}^{d} \omega_{i,j} = 1,\quad \forall j \in [c]\\
& \sum_{j = 1}^{c} \omega_{i,j} > 0,\quad \forall i \in [d] \\
& 0 \le \omega_{i,j} \le 1,\quad \forall i \in [c], j \in [d] .
\end{aligned}
\end{equation}

We claim that the $\omega^{*}$ that minimizes the constrained minimization problem defined by Formula \ref{eq80} should satisfy the condition: there exists $i^{*}, j^{*} = \arg \min_{i,j: \omega_{i,j}^{*} > 0} \ \omega_{i,j}^{*}$ such that $\omega_{i^{*}, j}^{*} = 0, \forall j \in [c] \setminus j^{*}$ and $\Psi(\omega^{*}) = 1 - \omega_{i^{*}, j^{*}}^{*}$.

To view that, one the one hand, for any $\omega$ break this condition, we can construct a $w'$ such that $\Psi(\omega') < \Psi(\omega)$ as follows:
\begin{equation}
\begin{aligned}
\omega'_{i^{*}, j^{*}} & = 0, \ \omega'_{1, j^{*}} = \omega_{1, j^{*}} + \omega_{i^{*}, j^{*}} \\ &
\omega'_{i, j} = \omega_{i,j}, \quad otherwise,
\end{aligned}
\end{equation}
here we assume $\omega_{1, j^{*}} >0$. In order to verify $\Psi(\omega') < \Psi(\omega)$, we first notice that:
\begin{equation}
\label{eq821}
 \frac{\sum_{j=1}^{c}\omega'_{i^{*}, j} (1 - \omega'_{i^{*},j})}{\sum_{j=1}^{c}\omega'_{i^{*},j}}  = \frac{\sum_{j \neq j^{*}} \omega_{i^{*}, j} (1 - \omega_{i^{*},j})}{\sum_{j\neq j^{*}}\omega_{i^{*},j}} \le  \frac{\sum_{j = 1}^{c} \omega_{i^{*}, j} (1 - \omega_{i^{*},j})}{\sum_{j = 1}^{c} \omega_{i^{*},j}},
\end{equation}
here the inequality holds because for any $j\neq j^{*}$,  we have $1 - \omega_{i^{*},j} \le 1 - \omega_{i^{*}, j^{*}}$. 

Furthermore, we also notice that:
\begin{equation}
\label{eq831}
\begin{aligned}
\frac{\sum_{j=1}^{c}\omega'_{1, j} (1 - \omega'_{1,j})}{\sum_{j=1}^{c}\omega'_{1,j}}& = \frac{\sum_{j \neq j^{*}} \omega_{1, j} (1 - \omega_{1,j}) + (\omega_{1, j^{*}} + \omega_{i^{*}, j^{*}}) (1- \omega_{1, j^{*}} -\omega_{i^{*}, j^{*}})}{\sum_{j \neq j^{*}}\omega_{1,j} + \omega_{1, j^{*}} + \omega_{i^{*}, j^{*}} }
\\ & \le \frac{\sum_{j \neq j^{*}} \omega_{1, j} (1 - \omega_{1,j}) + \omega_{1, j^{*}} (1- \omega_{1, j^{*}} )}{\sum_{j \neq j^{*}}\omega_{1,j} + \omega_{1, j^{*}} } = \frac{\sum_{j=1}^{c}\omega_{1, j} (1 - \omega_{1,j})}{\sum_{j \neq j^{*}}\omega_{1,j}}.
\end{aligned}
\end{equation}
Combining Formulas \ref{eq821} and \ref{eq831}, we have  $\Psi(\omega') < \Psi(\omega)$.

On the other hand, when the condition is met, for any $i \neq i^{*}$, we have:
\begin{equation}
\frac{\sum_{j=1}^{c}\omega_{i, j} (1 - \omega_{i,j})}{\sum_{j=1}^{c}\omega_{i,j}} \le 1 - w_{i^{*}, j^{*}},
\end{equation}
here the inequality holds because for any $j$ with $\omega_{i, j} \neq 0$, we have $1 - w_{i, j} < 1 -w_{i^{*}, j^{*}}$. 

Utilizing the above results, we can rewrite the constrained optimization problem as follows:
\begin{equation}
\label{eq83}
\begin{aligned}
& \quad \min_{\omega} 1 - \omega_{i^{*}, j^{*}} \\
\mbox{s.t.}\quad
& \sum_{i = 1}^{d} \omega_{i,j} = 1,\quad \forall j \in [c]\\
& \sum_{j = 1}^{c} \omega_{i,j} > 0,\quad \forall i \in [d] \\
& 0 \le \omega_{i,j} \le 1,\quad \forall i \in [c], j \in [d] \\
&\omega_{i^{*}, j} = 0, \forall j \in [c] \setminus j^{*}, 
\end{aligned}
\end{equation}
where $i^{*}, j^{*} = \arg \min_{i,j}\omega_{i,j}$. Furthermore, for any $\omega$ that satisfies the constraint of the above Formula \ref{eq83}, we have $c = \sum_{i, j} \omega_{i, j} \ge d \omega_{i^{*}, j^{*}}$ through there are at least $d$ non-zero elements in $\omega_{i, :}$ for any $i \in [d]$. Substituting this result into Equation \ref{eq80}, We have proved that $\rho_{d}^{l} \ge 1 - \frac{c}{d}$, thereby completing the proof.
\end{proof}
Finally, we combine the results of Lemma \ref{lB8}  and Equation \ref{eq74} to complete the final proof of Theorem \ref{epg2}.
\begin{proof} [Proof of Theorem \ref{epg2}]
By substituting Formula \ref{eq76} into Formula \ref{eq74} using Lemma \ref{lB7}, we obtain the following bound, thereby completing the proof of Theorem \ref{epg2}.
\begin{equation}
\begin{aligned}
\varepsilon(f^{*}) & \le 2 \left(\frac{4 }{\lambda_{d + 1}}  + 1\right) \gamma  \le \eta(d) \frac{\gamma}{\rho_{\left \lfloor d/2 \right \rfloor}^{2}} \\ &  \le \eta(d) \left(\gamma \left( \alpha \rho^{u}_{\left \lfloor d/2 \right \rfloor} + \frac{\beta}{c} (1 - \frac{2c}{d})\right)^{-2}\right)
\end{aligned}
\end{equation}
\end{proof}

\subsection{Proof of Theorem \ref{losserror}}
\label{pt37}
\begin{theorem}
\label{losserror2}
(Recap of Theorem \ref{losserror}) Given $\mathcal{F}$, assuming $\left\|\mathcal{F} \right\|_{\infty} = \kappa < +\infty$, for a random training dataset $D = D_{\mathcal{Q}} \cup D_{\mathcal{U}}$ with $n_{q}$ and $n_{u}$ samples respectively. Then with probability at least $1 - \delta$, we have:
\begin{equation}
\label{losserrorba}
\begin{aligned}
\mathcal{L}_{wsc}(\widehat{f}(D, \widehat{\boldsymbol{S})})  - \mathcal{L}_{wsc}(f^{*})  & \le \left(\alpha + (\alpha + \beta/c)^{2} \right)\left( \eta_{0}\widehat{\mathcal{R}}_{\frac{n_{q} + n_{u}}{2}}(\mathcal{F}) + \eta_{1} \left( \sqrt{\frac{\log 2 / \delta}{n_{q} + n_{u}}} +  \frac{\delta}{2} \right)\right) \\ & + \beta   \sup_{x \in \mathcal{X}} \left\|\widehat{\boldsymbol{S}}(x)^{T} \widehat{\boldsymbol{S}}(x) \right\|_{\infty}  \left( \eta_{2}\widehat{\mathcal{R}}_{\frac{n_{q} }{2}}(\mathcal{F})+ \eta_{3}\left(  \sqrt{\frac{\log 2/ \delta}{n_{q}}} + \frac{\delta}{2} \right) \right) + \beta \eta_{4} \Delta(\widehat{\boldsymbol{S}}), 
\end{aligned}
\end{equation}
where $\eta_{0} \preceq \kappa d + \kappa^{2} d^{2}; \eta_{1} \preceq \kappa^{2}d + \kappa^{4}d^{2}; \eta_{2} \preceq \kappa d; \eta_{3}, \eta_{4} \preceq \kappa^{2} d$ are constants related to $\|\mathcal{F}\|_{\infty}$ and feature dimension $d$. 
\end{theorem}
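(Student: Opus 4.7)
The plan is to use the standard empirical-risk decomposition exploiting the optimality of $\widehat{f}(D)$: write
\begin{equation*}
\mathcal{L}_{wsc}(\widehat{f}(D)) - \mathcal{L}_{wsc}(f^{*}) \le 2 \sup_{f \in \mathcal{F}} \left| \mathcal{L}_{wsc}(f) - \widehat{\mathcal{L}}_{wsc}(f; D, \widehat{\boldsymbol{S}}) \right|,
\end{equation*}
since the middle difference $\widehat{\mathcal{L}}_{wsc}(\widehat{f}(D)) - \widehat{\mathcal{L}}_{wsc}(f^{*})$ is non-positive. Then I would introduce an intermediate population loss $\mathcal{L}_{wsc}(f; \widehat{\boldsymbol{S}})$ in which the true $\boldsymbol{S}$ is replaced by $\widehat{\boldsymbol{S}}$, and split the uniform deviation by triangle inequality into a \textbf{bias} part $|\mathcal{L}_{wsc}(f) - \mathcal{L}_{wsc}(f; \widehat{\boldsymbol{S}})|$ and a \textbf{variance / finite-sample} part $|\mathcal{L}_{wsc}(f; \widehat{\boldsymbol{S}}) - \widehat{\mathcal{L}}_{wsc}(f; D, \widehat{\boldsymbol{S}})|$.

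For the bias part, I would exploit the simplification under $\alpha + \beta/c = 1$ established in Proposition \ref{loss-wsc}: only $\mathcal{L}_{2}$ depends on $\boldsymbol{S}$, while $\mathcal{L}_{1}$ and $\mathcal{L}_{3}$ are $\boldsymbol{S}$-free. Using the identity derived in Equation \ref{212}, write
\begin{equation*}
\mathcal{L}_{2}(f; \boldsymbol{S}) - \mathcal{L}_{2}(f; \widehat{\boldsymbol{S}}) = \E \bigl[ \bigl( (\boldsymbol{S}\P(\boldsymbol{q}\mid \tilde{x}))^{T}(\boldsymbol{S}\P(\boldsymbol{q}\mid \tilde{x}^{\prime})) - (\widehat{\boldsymbol{S}}\P(\boldsymbol{q}\mid \tilde{x}))^{T}(\widehat{\boldsymbol{S}}\P(\boldsymbol{q}\mid \tilde{x}^{\prime})) \bigr) f_{x} f_{x'}^{T}\bigr],
\end{equation*}
and apply the telescoping identity $ab - a'b' = a(b-b') + (a-a')b'$ together with $\|f_{x}f_{x'}^{T}\|_{\infty} \le d\kappa^{2}$ and $\|\P(\boldsymbol{y}\mid x)\|_{1}=1$ to obtain $|\mathcal{L}_{2}(f;\boldsymbol{S}) - \mathcal{L}_{2}(f;\widehat{\boldsymbol{S}})| \lesssim d\kappa^{2} \,\Delta(\widehat{\boldsymbol{S}})$. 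Multiplying by $2\beta$ yields the third term $\beta \eta_{2}\Delta(\widehat{\boldsymbol{S}})$ in the bound.

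For the finite-sample part, I would separately apply symmetrization and McDiarmid's inequality to each of $\mathcal{L}_{1}, \mathcal{L}_{2}, \mathcal{L}_{3}$. The key observation driving the two different sample sizes is that $\mathcal{L}_{1}$ and $\mathcal{L}_{3}$ do not need labels and can be estimated on all $n_{u}+n_{q}$ samples, whereas $\mathcal{L}_{2}$ is only estimated on the $n_{q}$ labelled samples. Each application produces one Rademacher term $\widehat{\mathcal{R}}_{n/2}(\mathcal{F})$ (via contraction, since the inner products and their squares are Lipschitz in $f$ on the bounded hypothesis class) and one concentration term of the form $\sqrt{\log(2/\delta)/n}+\delta/2$. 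For the $\mathcal{L}_{2}$ estimator, the weights $S((\tilde{x},\tilde{q}),(\tilde{x}^{\prime},\tilde{q}^{\prime})) = \widehat{\boldsymbol{S}}(\tilde{x})_{:,\tilde{q}}^{T}\widehat{\boldsymbol{S}}(\tilde{x}^{\prime})_{:,\tilde{q}^{\prime}}$ must be bounded uniformly; by Cauchy--Schwarz this is controlled by $\sup_{x}\|\widehat{\boldsymbol{S}}(x)^{T}\widehat{\boldsymbol{S}}(x)\|_{\infty}$, which is precisely the prefactor multiplying the last two terms in Equation \ref{losserrorb}. Grouping coefficients ($-2\alpha$ on $\mathcal{L}_{1}$ and $(\alpha+\beta/c)^{2}=1$ on $\mathcal{L}_{3}$ yield the $(\alpha+1)$ factor; $-2\beta$ on $\mathcal{L}_{2}$ yields the $\beta$ factor) and combining the three Rademacher/concentration estimates with a union bound at level $\delta/2$ each completes the proof.

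The main technical obstacle will be the Rademacher analysis of $\widehat{\mathcal{L}}_{2}$, which is a U-statistic over pairs with a data-dependent, $\widehat{\boldsymbol{S}}$-weighted kernel and an output $f_{x}f_{x'}^{T}$ that is quadratic in $f$. This requires (i) a decoupling or Hoeffding-style reduction from the pair average over $B_{Q}^{2}$ entries to i.i.d.~sums, (ii) contraction to pass from $S(\cdot,\cdot)f_{x}f_{x'}^{T}$ to $f_{x}f_{x'}^{T}$ at the cost of a Lipschitz constant bounded by $\sup_{x}\|\widehat{\boldsymbol{S}}(x)^{T}\widehat{\boldsymbol{S}}(x)\|_{\infty}$, and (iii) another contraction to linearize $f_{x}f_{x'}^{T}$ into coordinate sums, bringing in the dimension factor $d$. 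The analogous treatment of $\mathcal{L}_{3}$ is simpler because no weights appear, but produces the same $d\kappa^{2}$-scaling in $\eta_{0}$. All other steps are routine once this kernel-weighted uniform-convergence argument is set up.
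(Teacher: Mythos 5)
Your proposal follows essentially the same route as the paper's proof: the ERM decomposition through the intermediate population loss with the plug-in $\widehat{\boldsymbol{S}}$ (the paper's Lemma \ref{lB11}), the bias term bounded by a telescoping argument giving $\kappa^{2}d\,\Delta(\widehat{\boldsymbol{S}})$ (Lemma \ref{lB15}), and the finite-sample term handled by U-process sub-sampling plus Talagrand contraction with Lipschitz constant $\sup_{x}\|\widehat{\boldsymbol{S}}(x)^{T}\widehat{\boldsymbol{S}}(x)\|_{\infty}$ and dimension factor $d$, with the labeled/unlabeled sample-size split (Lemmas \ref{lB13}--\ref{lB14}). The only detail you gloss over is where the additive $\delta/2$ comes from — in the paper it arises from converting a high-probability bound over both the dataset and the augmentation sub-sampling into a bound on the augmentation-averaged empirical loss conditioned on $D$ — but this does not change the argument.
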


\begin{definition}
\label{dB10}
(Empirical weakly-supervised spectral contrastive loss under uniform class distribution) Given constructed $\widehat{\boldsymbol{S}}: \mathcal{X} \to \mathbb{R}^{c \times n}$ and considering training dataset with corresponding weakly-supervised information  $D_{\mathcal{Q}} = \{\left(x_{i}, q_{i}\right)_{i=1}^{n_{q}}\}$ and training dataset without any supervised information $D_{\mathcal{U}} = \{ \left(x_{i}\right)_{i = n_{q}+1}^{n_{q} + n_{u}}\}$, we define empirical weakly-supervised spectral contrastive under uniform class distribution as follows: 
\begin{equation}
\begin{aligned}
\widehat{\mathcal{L}}_{wsc} & (f; D, \widehat{\boldsymbol{S}}) = \E_{\boldsymbol{X}_{Q}^{1}, \boldsymbol{X}_{Q}^{2} \sim f(\mathcal{A}(\cdot \mid D_{\mathcal{Q}})^{2}),)
 \boldsymbol{X}_{U}^{1}, \boldsymbol{X}_{U}^{2} \sim f(\mathcal{A}(\cdot \mid D)^{2}}  \left[ \widehat{\mathcal{L}}_{wsc}(\boldsymbol{X}_{Q}^{1}, \boldsymbol{X}_{Q}^{2}, \boldsymbol{X}_{U}^{1}, \boldsymbol{X}_{U}^{2}, \widehat{\boldsymbol{S}}, Q)\right]
 \\ & = \E_{x_{i}^{1}, x_{i}^{2} \sim \mathcal{A}(\cdot \mid x_{i})^{2}} \left[-2 \alpha \frac{1}{n_{u} + n_{q}}\sum_{i=1}^{n_{u}+n_{q}} f(x_{i}^{1})f(x_{i}^{2})^{T} + (\alpha + \beta / c)^{2} \frac{1}{(n_{u} + n_{q})^{2}} \sum_{i = 1}^{n_{u} + n_{q}}\sum_{j = 1}^{n_{u}+n_{q}}  \left(f(x_{i}^{1})f(x_{i}^{2})^{T}\right)^{2}  \right]
\\ & + \E_{x_{i}^{1}, x_{i}^{2}} \left[-2 \beta \frac{1}{n_{q}^{2}} \sum_{i = 1}^{n_{q}}\sum_{j = 1}^{n_{q}}  \widehat{\boldsymbol{S}}(x_{i})_{:, q_{i}}^{T} \widehat{\boldsymbol{S}}(x_{i})_{:, q_{j}} f(x_{i}^{1})f(x_{i}^{2})^{T} \right]
\end{aligned}
\end{equation}
\end{definition}

Giving $\widehat{f}(D, \widehat{\boldsymbol{S}}) = \arg \min \widehat{\mathcal{L}}_{wsc}(f; D, \widehat{\boldsymbol{S}})$ in Definition \ref{dB10}. The follow lemma decompose the error bound of $\mathcal{L}_{wsc}(\widehat{f}(D, \widehat{\boldsymbol{S})})  - \mathcal{L}_{wsc}(f^{*})$ .

\begin{lemma}
\label{lB11}
We define $\widehat{\mathcal{L}}_{wsc}(f; \widehat{\boldsymbol{S}})$ as expected weakly-supervised spectral contrastive loss using constructed $\widehat{\boldsymbol{S}}$, then we have:
\begin{equation}
\label{eq87}
\mathcal{L}_{wsc}(\widehat{f}(D, \widehat{\boldsymbol{S}}))  - \mathcal{L}_{wsc}(f^{*}) \le 2 \sup_{f \in \mathcal{F}}\left( \widehat{\mathcal{L}}_{wsc}(f; D, \widehat{\boldsymbol{S}}) - \widehat{\mathcal{L}}_{wsc}(f; \widehat{\boldsymbol{S}}) \right) + 2 \sup_{f \in \mathcal{F}} \left|\widehat{\mathcal{L}}_{wsc}(f; \widehat{\boldsymbol{S}}) - \widehat{\mathcal{L}}_{wsc}(f)  \right|
\end{equation}
\end{lemma}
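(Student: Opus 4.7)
The plan is to use the standard excess-risk decomposition that is ubiquitous in statistical learning theory, inserting the ``intermediate'' quantity $\widehat{\mathcal{L}}_{wsc}(f;\widehat{\boldsymbol{S}})$ (the expected loss under the \emph{approximated} $\widehat{\boldsymbol{S}}$) between the fully empirical loss $\widehat{\mathcal{L}}_{wsc}(f;D,\widehat{\boldsymbol{S}})$ and the ideal expected loss $\mathcal{L}_{wsc}(f)$. Concretely, I would add and subtract these two intermediate pivots around both $\widehat{f}(D,\widehat{\boldsymbol{S}})$ and $f^{*}$, writing
\begin{equation*}
\begin{aligned}
\mathcal{L}_{wsc}(\widehat{f}) - \mathcal{L}_{wsc}(f^{*})
= &\; \bigl[\mathcal{L}_{wsc}(\widehat{f}) - \widehat{\mathcal{L}}_{wsc}(\widehat{f};\widehat{\boldsymbol{S}})\bigr]
+ \bigl[\widehat{\mathcal{L}}_{wsc}(\widehat{f};\widehat{\boldsymbol{S}}) - \widehat{\mathcal{L}}_{wsc}(\widehat{f};D,\widehat{\boldsymbol{S}})\bigr] \\
& + \bigl[\widehat{\mathcal{L}}_{wsc}(\widehat{f};D,\widehat{\boldsymbol{S}}) - \widehat{\mathcal{L}}_{wsc}(f^{*};D,\widehat{\boldsymbol{S}})\bigr] \\
& + \bigl[\widehat{\mathcal{L}}_{wsc}(f^{*};D,\widehat{\boldsymbol{S}}) - \widehat{\mathcal{L}}_{wsc}(f^{*};\widehat{\boldsymbol{S}})\bigr]
+ \bigl[\widehat{\mathcal{L}}_{wsc}(f^{*};\widehat{\boldsymbol{S}}) - \mathcal{L}_{wsc}(f^{*})\bigr].
\end{aligned}
\end{equation*}

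Next I would invoke the defining property of $\widehat{f}(D,\widehat{\boldsymbol{S}})$ as the empirical minimizer to drop the middle bracket, since $\widehat{\mathcal{L}}_{wsc}(\widehat{f};D,\widehat{\boldsymbol{S}}) - \widehat{\mathcal{L}}_{wsc}(f^{*};D,\widehat{\boldsymbol{S}}) \le 0$. The remaining four bracketed differences are then each bounded by their worst case over $\mathcal{F}$: the first and last brackets, which compare $\widehat{\mathcal{L}}_{wsc}(f;\widehat{\boldsymbol{S}})$ with $\mathcal{L}_{wsc}(f)$, are jointly controlled by $2\sup_{f}\bigl|\widehat{\mathcal{L}}_{wsc}(f;\widehat{\boldsymbol{S}}) - \mathcal{L}_{wsc}(f)\bigr|$, while the second and fourth, which compare the empirical and expected versions under the same $\widehat{\boldsymbol{S}}$, are jointly bounded by $2\sup_{f}\bigl(\widehat{\mathcal{L}}_{wsc}(f;D,\widehat{\boldsymbol{S}}) - \widehat{\mathcal{L}}_{wsc}(f;\widehat{\boldsymbol{S}})\bigr)$ after applying the elementary identity $\sup_{f}(A(f)-B(f)) \ge |A(\widehat{f})-B(\widehat{f})|$ together with symmetrizing one of the two terms (or equivalently absorbing the sign into the supremum). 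Summing yields exactly the inequality stated in Equation~\eqref{eq87}.

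The argument is entirely structural and does not rely on any concentration, Rademacher complexity, or bias of $\widehat{\boldsymbol{S}}$---those tools enter only when the two supremum quantities are themselves estimated in the subsequent steps of the proof of Theorem~\ref{losserror2}. Consequently, there is essentially no analytic obstacle here; the only potential pitfall is bookkeeping, namely keeping the signs consistent so that the one-sided supremum $\sup_{f}(\widehat{\mathcal{L}}_{wsc}(f;D,\widehat{\boldsymbol{S}}) - \widehat{\mathcal{L}}_{wsc}(f;\widehat{\boldsymbol{S}}))$ (without absolute value) suffices to bound both the $\widehat{f}$-term and the $f^{*}$-term in the decomposition, which uses the fact that for the $\widehat{f}$ difference we are interested in the opposite sign, yet the supremum of the signed difference still dominates both after being doubled.
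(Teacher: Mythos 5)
Your proposal is correct and matches the paper's own argument essentially verbatim: the paper likewise inserts $\widehat{\mathcal{L}}_{wsc}(\cdot;\widehat{\boldsymbol{S}})$ as the intermediate pivot, drops the middle bracket via empirical minimality of $\widehat{f}(D,\widehat{\boldsymbol{S}})$, pairs the two empirical-vs-expected brackets into $2\sup_{f}\bigl(\widehat{\mathcal{L}}_{wsc}(f;D,\widehat{\boldsymbol{S}})-\widehat{\mathcal{L}}_{wsc}(f;\widehat{\boldsymbol{S}})\bigr)$, and pairs the two $\widehat{\boldsymbol{S}}$-vs-true brackets into $2\sup_{f}\bigl|\widehat{\mathcal{L}}_{wsc}(f;\widehat{\boldsymbol{S}})-\mathcal{L}_{wsc}(f)\bigr|$. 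The sign-bookkeeping subtlety you flag (using a one-sided supremum to also cover the oppositely-signed bracket at $\widehat{f}$) is present in the paper's proof in exactly the same form, so your treatment is no weaker than the original.
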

\begin{proof}
On the one hand, we have:
\begin{equation}
\label{eq88}
\begin{aligned}
 \left(\mathcal{L}_{wsc}(\widehat{f}(D, \widehat{\boldsymbol{S}}))  - \mathcal{L}_{wsc}(f^{*}) \right) - &  \left(\widehat{\mathcal{L}}_{wsc}(\widehat{f}(D, \widehat{\boldsymbol{S}}); \widehat{\boldsymbol{S}})  -\widehat{\mathcal{L}}_{wsc}(f^{*}; \widehat{\boldsymbol{S}}) \right)   \\
& \le  \left|\mathcal{L}_{wsc}(\widehat{f}(D, \widehat{\boldsymbol{S}})) -  \widehat{\mathcal{L}}_{wsc}(\widehat{f}(D, \widehat{\boldsymbol{S}}); \widehat{\boldsymbol{S}})\right|  + \left| \mathcal{L}_{wsc}(f^{*}) - \widehat{\mathcal{L}}_{wsc}(f^{*}; \widehat{\boldsymbol{S}})  \right| \\ &
\le 2 \sup_{f \in \mathcal{F}} \left|\widehat{\mathcal{L}}_{wsc}(f; \widehat{\boldsymbol{S}}) - \mathcal{L}_{wsc}(f)  \right|.
\end{aligned}
\end{equation}
On the other hand,  we have: 
\begin{equation}
\label{eq89}
\begin{aligned}
\widehat{\mathcal{L}}_{wsc}(\widehat{f}(D, \widehat{\boldsymbol{S}}); \widehat{\boldsymbol{S}})  - & \widehat{\mathcal{L}}_{wsc}(f^{*}; \widehat{\boldsymbol{S}})  = \widehat{\mathcal{L}}_{wsc}(\widehat{f}(D, \widehat{\boldsymbol{S}}); \widehat{\boldsymbol{S}}) - \widehat{\mathcal{L}}_{wsc}(\widehat{f}(D, \widehat{\boldsymbol{S}});D, \widehat{\boldsymbol{S}}) \\& + \widehat{\mathcal{L}}_{wsc}(\widehat{f}(D, \widehat{\boldsymbol{S}});D, \widehat{\boldsymbol{S}}) -  \widehat{\mathcal{L}}_{wsc}(f^{*}; D, \widehat{\boldsymbol{S}}) + \widehat{\mathcal{L}}_{wsc}(f^{*}; D, \widehat{\boldsymbol{S}}) - \widehat{\mathcal{L}}_{wsc}(f^{*}; \widehat{\boldsymbol{S}})
\\& \le 2 \sup_{f \in \mathcal{F}}\left( \widehat{\mathcal{L}}_{wsc}(f; D, \widehat{\boldsymbol{S}}) - \widehat{\mathcal{L}}_{wsc}(f; \widehat{\boldsymbol{S}}) \right). 
\end{aligned}
\end{equation}
Combining Formulas \ref{eq88} and \ref{eq89} completes the proof.
\end{proof}

We first bounds the first term in Formula \ref{eq87}. It is direct to see that $\widehat{\mathcal{L}}_{wsc}(f; D, \widehat{\boldsymbol{S}})$ is an unbiased estimator of $\widehat{\mathcal{L}}_{wsc}(f; \widehat{\boldsymbol{S}})$, to make use of the Rademacher complexity theory to given a generalize bound of this term,  we convert the non sum of i.i.d pairwise function to a sum of i.i.d form by using perturbations in U-process \cite{uprocess}.

\begin{definition}
(Sub-sampling for empirical weakly-supervised spectral contrastive loss) We sample tuples to calculate the empirical weakly-supervised spectral contrastive loss as follows: 
\begin{equation}
x_{i}^{1}, x_{i}^{2} \sim \A (\cdot \mid x_{i})^{2}, \quad i=1, \dots, n_{q} + n_{u} 
\end{equation}
\begin{equation}
\begin{aligned}
\widehat{\mathcal{L}}_{wsc}(f; D, \widehat{\boldsymbol{S}}) =& -2 \alpha  \frac{2}{n}  \sum_{i = 1}^{n / 2}  f(x_{i}^{1})f(x_{i}^{2})^{T} + (\alpha + \beta / c)^{2} \frac{2}{n} \sum_{i = 1}^{n / 2}  \left(f(x_{i}^{1})f(x_{i+ (n/2)}^{2})^{T}\right)^{2}
\\& -2\beta \frac{2}{n_{q}} \sum_{i = 1}^{n_{q} / 2}  \widehat{\boldsymbol{S}}(x_{i})_{:, q_{i}}^{T} \widehat{\boldsymbol{S}}(x_{i+n_{q}/2})_{:, q_{i + n_{q}/2}} f(x_{i}^{1})f(x_{i+ n_{q} / 2}^{2})^{T}
\\& = \widehat{\mathcal{L}}_{wsc}^{u}(f; D, \widehat{\boldsymbol{S}}) + \widehat{\mathcal{L}}_{wsc}^{l}(f; D, \widehat{\boldsymbol{S}})
\end{aligned}
\end{equation}
\end{definition}
The following lemma reveals the relationship between the Rademacher complexity of feature
extractors and the Rademacher complexity of the loss defined on U-process sub-sampling.

\begin{lemma}
Let $\mathcal{F}$ be a hypothesis class of feature extractors from $\mathcal{X}$ to $\mathbb{R}$. Assume $\left\| \mathcal{F}\right\|_{\infty} = \kappa$ and let $\widehat{\mathcal{R}}_{n}(\mathcal{F})$ be maximal possible empirical Rademacher complexity for feature extractor in Definition \ref{mper}, then we can bounds  empirical Rademacher complexity on sub-sampling samples as follows: 
\begin{equation}
\label{eq92}
\E_{\sigma} \left[\sup_{f \in \mathcal{F}} \frac{1}{n/2}\sum_{i = 1}^{n / 2} \sigma_{i}  f(x_{i}^{1})f(x_{i}^{2})^{T} \right] \le  8 d \kappa \widehat{\mathcal{R}}_{n/2}(\mathcal{F})
\end{equation}
\begin{equation}
\label{eq93}
\E_{\sigma} \left[\sup_{f \in \mathcal{F}} \frac{1}{n/2}\sum_{i = 1}^{n / 2} \sigma_{i}  \left(f(x_{i}^{1})f(x_{i+ n / 2}^{2})^{T} \right)^{2} \right] \le  16 d^{2} \kappa^{2} \widehat{\mathcal{R}}_{n/2}(\mathcal{F})
\end{equation}
\begin{equation}
\label{eq94}
\E_{\sigma} \left[\sup_{f \in \mathcal{F}} \frac{1}{n_{q}/2}\sum_{i = 1}^{n_{q} / 2} \sigma_{i} \widehat{\boldsymbol{S}}(x_{i})_{:, q_{i}}^{T} \widehat{\boldsymbol{S}}(x_{i+n_{q}/2})_{:, q_{i + n_{q}/2}} f(x_{i}^{1})f(x_{i+ n_{q} / 2}^{2})^{T} \right] \le \sup_{x \in \mathcal{X}} \left\|\widehat{\boldsymbol{S}}(x)^{T} \widehat{\boldsymbol{S}}(x) \right\|_{\infty} 8 d \kappa \widehat{\mathcal{R}}_{n_{q}/2}(\mathcal{F})
\end{equation}
\end{lemma}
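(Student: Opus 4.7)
All three bounds share a common template: coordinate-wise expansion of the inner products, the standard product rule for empirical Rademacher complexity of two $\kappa$-bounded function classes, and, where needed, the Ledoux-Talagrand contraction inequality. I would first handle the base case \eqref{eq92}. Expand $f(x_i^1) f(x_i^2)^{T} = \sum_{k=1}^d f_k(x_i^1) f_k(x_i^2)$ and push the supremum through the sum over $k$ by subadditivity. For each fixed coordinate $k$, the resulting expression is the Rademacher average on $(\mathcal{X}^2)^{n/2}$ of the product class $\{(x^1, x^2) \mapsto f_k(x^1) f_k(x^2) : f \in \mathcal{F}\}$. Because both factors are parameterized by the same $f$, the textbook product lemma cannot be invoked directly; I would decouple by enlarging the supremum,
\begin{equation*}
\sup_{f \in \mathcal{F}} \sum_i \sigma_i f_k(x_i^1) f_k(x_i^2) \le \sup_{f, g \in \mathcal{F}} \sum_i \sigma_i f_k(x_i^1) g_k(x_i^2),
\end{equation*}
and then invoke the product rule for the two $\kappa$-bounded factor classes $\{x \mapsto f_k(x) : f \in \mathcal{F}\}$, each of whose Rademacher complexity over $n/2$ samples is dominated by $\widehat{\mathcal{R}}_{n/2}(\mathcal{F})$ by Definition \ref{mper}. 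Summing over $k \in [d]$ then produces the linear-in-$d\kappa$ factor.

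The second bound \eqref{eq93} reduces to the first via Ledoux-Talagrand contraction. Since $g_i(f) := f(x_i^1) f(x_{i+n/2}^2)^{T}$ is bounded in absolute value by $d\kappa^2$, the map $\phi(u) = u^2$ is $(2d\kappa^2)$-Lipschitz on the relevant range with $\phi(0) = 0$, so contraction yields
\begin{equation*}
\E_\sigma \sup_{f \in \mathcal{F}} \sum_i \sigma_i g_i(f)^2 \le 4 d \kappa^2 \, \E_\sigma \sup_{f \in \mathcal{F}} \sum_i \sigma_i g_i(f),
\end{equation*}
and the right-hand side is exactly the quantity bounded in \eqref{eq92}, up to the cosmetic index shift $i \leftrightarrow i + n/2$. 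The third bound \eqref{eq94} is handled analogously: the weight $w_i := \widehat{\boldsymbol{S}}(x_i)_{:, q_i}^{T} \widehat{\boldsymbol{S}}(x_{i+n_q/2})_{:, q_{i+n_q/2}}$ depends only on the fixed sample (not on $f$) and is bounded in absolute value by $\sup_{x \in \mathcal{X}} \|\widehat{\boldsymbol{S}}(x)^{T} \widehat{\boldsymbol{S}}(x)\|_\infty$ via Cauchy-Schwarz on the columns of $\widehat{\boldsymbol{S}}$. Treating $\phi_i(u) = w_i u$ as a $|w_i|$-Lipschitz map with $\phi_i(0) = 0$ and applying contraction once more pulls this supremum out as a prefactor, and the remaining Rademacher average is controlled by \eqref{eq92}.

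\textbf{Main obstacle.} The principal subtlety is the decoupling step for \eqref{eq92}: the two factors $f_k(x_i^1)$ and $f_k(x_i^2)$ carry the same $f$, so the standard product lemma for Rademacher complexities does not apply verbatim. The enlarge-the-supremum trick is the canonical lossless fix for upper bounds, and once performed the remainder is routine --- a single appeal to the product rule and to Definition \ref{mper}, followed in \eqref{eq93} and \eqref{eq94} by one further application of contraction. Pinning down the numerical constants $8$, $16$, and so on, is a mechanical bookkeeping exercise once the precise constants from the Ledoux-Talagrand and product lemmas are inserted.
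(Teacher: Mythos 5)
Your proposal is correct and, for the second and third bounds, follows the paper's own route: Talagrand contraction applied to the square (respectively, to multiplication by the sample-dependent weight $w_i$, which both you and the paper bound by $\sup_{x}\|\widehat{\boldsymbol{S}}(x)^{T}\widehat{\boldsymbol{S}}(x)\|_{\infty}$, your Cauchy--Schwarz justification being the step the paper leaves implicit), reducing everything to the first bound; the index shift is indeed immaterial, since nothing in the argument uses that the two views come from the same instance. The only divergence is in the first bound: you decouple the supremum over a single $f$ into a supremum over pairs $(f,g)$ and then invoke a product rule for Rademacher complexities of bounded classes, whereas the paper expands the inner product over coordinates, pulls out the factor $d$ by a max over coordinates, and handles the coupled product directly via the polarization identity $ab=\tfrac14\bigl[(a+b)^2-(a-b)^2\bigr]$ followed by Talagrand's lemma. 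Since the product rule you cite is itself proved by exactly this polarization-plus-contraction argument, your detour is harmless but unnecessary: the same-$f$ coupling is not an obstacle, because polarization only needs $f$ evaluated at the two points, and your enlarged supremum is in any case a valid upper bound. Two bookkeeping caveats if you want to land on the stated constants: you should use the one-sided contraction inequality (no absolute values, hence no Ledoux--Talagrand factor of $2$), and in the second bound your (more careful) Lipschitz constant $2d\kappa^{2}$ for $u\mapsto u^{2}$ on $[-d\kappa^{2},d\kappa^{2}]$ yields a $\kappa^{3}$ dependence, whereas the paper's displayed computation uses $2d\kappa$ at that step to reach $16d^{2}\kappa^{2}$; this affects only constants, not the structure of the argument.
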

\begin{proof}
We first proof Formula \ref{eq92} as follows:
\begin{equation}
\label{eq95}
\begin{aligned}
\E_{\sigma} \left[\sup_{f \in \mathcal{F}} \frac{1}{n/2}\sum_{i = 1}^{n / 2} \sigma_{i}  f(x_{i}^{1})f(x_{i}^{2})^{T} \right] & = \E_{\sigma} \left[\sup_{f \in \mathcal{F}} \frac{1}{n/2}\sum_{i = 1}^{n / 2} \sigma_{i} \sum_{j=1}^{d} f_{j}(x_{i}^{1})f_{j}(x_{i}^{2}) \right]
\\& \le  d \max_{x_{i}^{1}, x_{i}^{2}} \max_{j \in [d]}  \E_{\sigma} \left[\sup_{f_{j} \in \mathcal{F}_{j}} \frac{1}{n/2}\sum_{i = 1}^{n / 2} \sigma_{i} f_{j}(x_{i}^{1})f_{j}(x_{i}^{2}) \right].
\end{aligned}
\end{equation}
For any $x_{i}^{1}, x_{i}^{2}$ and $j$, we  use Talagrand’s lemma to bound the end term of Formula \ref{eq95}:
\begin{equation}
\label{eq96}
\begin{aligned}
& \E_{\sigma} \left[\sup_{f_{j} \in \mathcal{F}_{j}} \frac{1}{n/2}\sum_{i = 1}^{n / 2} \sigma_{i} f_{j}(x_{i}^{1})f_{j}(x_{i}^{2}) \right] 
\\& = \E_{\sigma} \left[\sup_{f_{j} \in \mathcal{F}_{j}} \frac{1}{n/2}\sum_{i = 1}^{n / 2} \sigma_{i}  \left(\frac{1}{2}  \left(f_{j}(x_{i}^{1})+f_{j}(x_{i}^{2})\right)^{2} - \frac{1}{2} \left(f_{j}(x_{i}^{1})-f_{j}(x_{i}^{2})\right)^{2} \right) \right] 
\\& \le \frac{1}{2}\E_{\sigma} \left[\sup_{f_{j} \in \mathcal{F}_{j}} \frac{1}{n/2}\sum_{i = 1}^{n / 2}  \sigma_{i} \left(f_{j}(x_{i}^{1})+f_{j}(x_{i}^{2})\right)^{2} \right] + \frac{1}{2}\E_{\sigma} \left[\sup_{f_{j} \in \mathcal{F}_{j}} \frac{1}{n/2}\sum_{i = 1}^{n / 2}  \sigma_{i} \left(f_{j}(x_{i}^{1})-f_{j}(x_{i}^{2})\right)^{2} \right]
\\ & \le 4  \kappa \left(\E_{\sigma} \left[\sup_{f_{j} \in \mathcal{F}_{j}} \frac{1}{n/2}\sum_{i = 1}^{n / 2}  \sigma_{i} f_{j}(x_{i}^{1}) \right] + \E_{\sigma} \left[\sup_{f_{j} \in \mathcal{F}_{j}} \frac{1}{n/2}\sum_{i = 1}^{n / 2} \sigma_{i} f_{j}(x_{i}^{2}) \right]  \right).
\end{aligned}
\end{equation}

Combining results of Formulas \ref{eq95} and \ref{eq96}, we can get Formula \ref{eq92}.

Next, we also use Talagrand’s lemma  to bound LHS of Formula \ref{eq93} as follows: 
\begin{equation}
\begin{aligned}
\E_{\sigma} \left[\sup_{f \in \mathcal{F}} \frac{1}{n/2}\sum_{i = 1}^{n / 2} \sigma_{i}  \left(f(x_{i}^{1})f(x_{i+ n / 2}^{2})^{T} \right)^{2} \right] & \le  2d \kappa \E_{\sigma} \left[\sup_{f \in \mathcal{F}} \frac{1}{n/2}\sum_{i = 1}^{n / 2} \sigma_{i}  f(x_{i}^{1})f(x_{i+ n / 2}^{2})^{T}  \right] 
\\ & \le 2d^{2} \kappa \max_{x_{i}^{1}, x_{i + n/2}^{2}} \max_{j \in [d]} \E_{\sigma}\left[\sigma \frac{1}{n/2} \sum_{i=1}^{n/2} \sigma_{i} f_{j}(x_{i}^{1}) f_{j}(x_{i + n/2}^{2})\right]
\\ & \le 16 d^{2} k^{2} \widehat{\mathcal{R}}_{n/2}(\mathcal{F}), 
\end{aligned}
\end{equation}
where the first inequality follows from Talagrand's lemma, and the second and third inequalities result from Formulas \ref{eq95} and \ref{eq96}, respectively.

In the end, we consider $g_{\widehat{\boldsymbol{S}}, q}(f(x_{i}^{1}) f(x_{i + n_{q}/2}^{2})) = \widehat{\boldsymbol{S}}(x_{i})_{:, q_{i}}^{T} \widehat{\boldsymbol{S}}(x_{i})_{:, q_{j}} f(x_{i}^{1})f(x_{i+ n_{q}/2}^{2})^{T}$ is $\sup_{x \in \mathcal{X}} \left\|\widehat{\boldsymbol{S}}(x)^{T} \widehat{\boldsymbol{S}}(x) \right\|_{\infty}$- Lipchitz continuous, hence we can use Talagrand's lemma to bound LHS of Formula \ref{eq94}:
\begin{equation}
\begin{aligned}
\E_{\sigma} & \left[\sup_{f \in \mathcal{F}} \frac{1}{n_{q}/2}\sum_{i = 1}^{n_{q} / 2} \sigma_{i} \widehat{\boldsymbol{S}}(x_{i})_{:, q_{i}}^{T} \widehat{\boldsymbol{S}}(x_{i+n_{q}/2})_{:, q_{i + n_{q}/2}} f(x_{i}^{1})f(x_{i+ n_{q} / 2}^{2})^{T} \right] \\ & \le \sup_{x \in \mathcal{X}} \left\|\widehat{\boldsymbol{S}}(x)^{T} \widehat{\boldsymbol{S}}(x) \right\|_{\infty}  \E_{\sigma} \left[\sup_{f \in \mathcal{F}} \frac{1}{n_{q}/2}\sum_{i = 1}^{n_{q} / 2} \sigma_{i} f(x_{i}^{1})f(x_{i+ n_{q} / 2}^{2})^{T} \right]
\\& \le 8 d \kappa \sup_{x \in \mathcal{X}} \left\|\widehat{\boldsymbol{S}}(x)^{T} \widehat{\boldsymbol{S}}(x) \right\|_{\infty} \widehat{\mathcal{R}}_{n_{q}/2}(\mathcal{F}),
\end{aligned}
\end{equation}
where the second inequality is directed results of Formula \ref{eq92} which  we have already proved in the previous text.

\end{proof}

\begin{lemma}
\label{lB14}
Let $\mathcal{F}$ be a hypothesis class of feature extractors from $\mathcal{X}$ to $\mathbb{R}$. Assuming $\left\| \mathcal{F}\right\|_{\infty} = \kappa$. Then with probability at least $1 - \delta$ over random training set $D_{q}$ and $D_{u}$, we have :
\begin{equation}
\begin{aligned}
\sup_{f \in \mathcal{F}}\left( \widehat{\mathcal{L}}_{wsc}(f; D, \widehat{\boldsymbol{S}}) - \widehat{\mathcal{L}}_{wsc}(f; \widehat{\boldsymbol{S}}) \right) &\le \left(\alpha + (\alpha + \beta/c)^{2} \right)\left( \eta_{0}\widehat{\mathcal{R}}_{\frac{n_{q} + n_{u}}{2}}(\mathcal{F}) + \eta_{1} \left( \sqrt{\frac{\log 2 / \delta}{n_{q} + n_{u}}} + \frac{\delta}{2} \right) \right) \\ & + \beta   \sup_{x \in \mathcal{X}} \left\|\widehat{\boldsymbol{S}}(x)^{T} \widehat{\boldsymbol{S}}(x) \right\|_{\infty}  \left( \eta_{2}\widehat{\mathcal{R}}_{\frac{n_{q} }{2}}(\mathcal{F})+ \eta_{3} \left( \sqrt{\frac{\log 2/ \delta}{n_{q}}} + \frac{\delta}{2} \right)\right),
\end{aligned}
\end{equation}
where $\eta$ is constant related to $\|\mathcal{F}\|_{\infty}$ and feature dimension $d$.
\end{lemma}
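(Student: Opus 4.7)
The plan is to treat the uniform deviation as a sum of three independent contributions --- the $-2\alpha$ alignment piece, the $(\alpha + \beta/c)^2$ uniformity piece, and the $-2\beta$ weakly-supervised piece --- and to bound each by the standard symmetrization-plus-concentration recipe. Because the U-process sub-sampling rewrites each pairwise statistic as an average of $n/2$ (or $n_q/2$) i.i.d.\ terms, the empirical loss $\widehat{\mathcal{L}}_{wsc}(f; D, \widehat{\boldsymbol{S}})$ is an unbiased estimator of its population counterpart, and the usual Rademacher-complexity machinery applies without the pairwise U-statistic complications. I would condition on the constructed $\widehat{\boldsymbol{S}}$ throughout, so that it enters only through its worst-case norm $\sup_{x}\|\widehat{\boldsymbol{S}}(x)^T\widehat{\boldsymbol{S}}(x)\|_\infty$.

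For each of the three pieces I would apply the Gin\'e--Zinn symmetrization inequality
\begin{equation}
\mathbb{E}_D\!\left[\sup_{f \in \mathcal{F}} \bigl(\widehat{\mathcal{L}}_i - \mathcal{L}_i\bigr)\right]
\le 2\, \mathbb{E}_{D,\sigma}\!\left[\sup_{f \in \mathcal{F}} \frac{1}{m}\sum_{j=1}^{m} \sigma_j \, h_i(x_j^1, x_j^2)\right],
\end{equation}
where $m \in \{n/2, n_q/2\}$ and $h_i$ is the corresponding per-pair summand. Bounds (92), (93), (94) of the preceding lemma then convert each symmetrized Rademacher average into a multiple of $\widehat{\mathcal{R}}_{n/2}(\mathcal{F})$ or $\widehat{\mathcal{R}}_{n_q/2}(\mathcal{F})$: the alignment piece contributes $O(\alpha \kappa d\,\widehat{\mathcal{R}}_{n/2})$, the quadratic uniformity piece contributes $O((\alpha+\beta/c)^2 \kappa^2 d^2\,\widehat{\mathcal{R}}_{n/2})$ --- these combine into the stated $\eta_0 \preceq \kappa d + \kappa^2 d^2$ --- and the weakly-supervised piece contributes $O(\beta \sup_x\|\widehat{\boldsymbol{S}}(x)^T\widehat{\boldsymbol{S}}(x)\|_\infty \kappa d\,\widehat{\mathcal{R}}_{n_q/2})$, matching the form of $\eta_2$.

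To pass from expected to high-probability deviation, I would apply McDiarmid's bounded-differences inequality separately to each piece. Swapping a single data point changes the alignment piece by at most $O(\kappa^2/n)$, the uniformity piece by at most $O(\kappa^4 d^2 /n)$ (since $|f_x f_{x'}^T|\le \kappa^2 d$ so its square is bounded by $\kappa^4 d^2$), and the weakly-supervised piece by at most $O(\sup_x\|\widehat{\boldsymbol{S}}(x)^T\widehat{\boldsymbol{S}}(x)\|_\infty \kappa^2/n_q)$. These yield the $\sqrt{\log(2/\delta)/n}$ tails with aggregate coefficients $\eta_1 \preceq \kappa^2 d + \kappa^4 d^2$ and $\eta_3 \preceq \kappa^2 d$. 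The additional $\delta/2$ summand is handled by a Markov-type argument on the non-negative residual between the sub-sampled estimator and its inner augmentation expectation: since that residual has controlled mean, it can be absorbed with failure probability $\delta/2$, and a union bound with the McDiarmid tail (also at confidence $1-\delta/2$) yields the claimed form.

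The main obstacle is careful book-keeping of the McDiarmid jumps --- in particular, verifying that the squared-inner-product term genuinely scales as $\kappa^4 d^2$ and that the weakly-supervised cross term can be contracted by pulling $\sup_x\|\widehat{\boldsymbol{S}}(x)^T\widehat{\boldsymbol{S}}(x)\|_\infty$ out of the Rademacher supremum (as done in (94)) without disturbing the constants. A secondary subtlety is the allocation of the $\delta$-budget across the three pieces and between McDiarmid and Markov, which must be done so that the bound appears cleanly as a single $\sqrt{\log(2/\delta)/n} + \delta/2$ per group of terms rather than three separate tails; this is a routine union-bound rescaling but must be tracked explicitly to match the stated constants.
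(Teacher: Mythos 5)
Your proposal follows essentially the same route as the paper's proof: U-process sub-sampling to reduce the pairwise sums to i.i.d.\ summands, symmetrization combined with the Rademacher bounds \eqref{eq92}--\eqref{eq94}, bounded-range concentration applied separately to the $\bigl(\alpha,(\alpha+\beta/c)^2\bigr)$ group over all $n_q+n_u$ samples and to the $\beta$ group over the $n_q$ weakly-labeled samples with a $\delta/2$ split of the confidence budget, and a Markov/conditioning step over the sub-sampling (augmentation) randomness that generates the extra $+\delta/2$ summand. The only cosmetic slip is the intermediate bounded-difference jump for the alignment piece, which should be of order $\kappa^{2}d/n$ rather than $\kappa^{2}/n$, but your final constants $\eta_{1} \preceq \kappa^{2}d + \kappa^{4}d^{2}$ already account for this and match the paper's.
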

\begin{proof}
For convenience, we will abbreviate $\alpha + (\alpha + \beta/c)^{2} $ as $\eta(\alpha, \beta)$. On one hand, note that fact $-2 \alpha f(x_{i}^{1} f(x_{i}^{2})^{T}) + (\alpha + \beta / c)^{2} \left(f(x_{i}^{1}) f(x_{i + (n/2)}^{2})^{T}\right)^{2}$  takes values in range $\left[-2 \alpha \kappa^{2} d, 2 \alpha \kappa^{2} d + (\alpha + \beta / c)^{2} \kappa^{4} d^{2} \right]$, we apply standard generalization analysis based on Rademacher complexity and get: with probability at least $1 - \frac{\delta^{2}}{4}$ over the randomness of $D$ and corresponding sub-sampling tuples, we have, for
any $f \in \mathcal{F}$:
\begin{equation}
\label{eq100}
 \widehat{\mathcal{L}}_{wsc}^{u}(f; D, \widehat{\boldsymbol{S}}) - \E_{D}\left[ \widehat{\mathcal{L}}_{wsc}^{u}(f; D, \widehat{\boldsymbol{S}})\right] \le \eta (\alpha, \beta) \left(32\left(\kappa d + \kappa^{2} d^{2}\right)  \widehat{\mathcal{R}}_{n/2}(\mathcal{F}) + \left(4 \kappa^{2} d +\kappa^{4}d^{2}\right) \sqrt{\frac{\log 2 / \delta}{n_{q} + n_{u}}}\right).
\end{equation}
This means with probability at least $1 - \delta$ over $D$, we have: with probability at least $1 - \delta/2$ over corresponding sub-sampling tuples condition on $D$, Formula \ref{eq100} holds. Hence, with probability at least  probability $1 - \delta/2$ over $D$, we have:
\begin{equation}
\label{eq101}
 \widehat{\mathcal{L}}_{wsc}^{u}(f; D, \widehat{\boldsymbol{S}}) - \E_{D}\left[ \widehat{\mathcal{L}}_{wsc}^{u}(f; D, \widehat{\boldsymbol{S}})\right] \le\eta (\alpha, \beta)  \left(32\left(\kappa d + \kappa^{2} d^{2}\right)  \widehat{\mathcal{R}}_{n/2}(\mathcal{F}) + \left(4 \kappa^{2} d +\kappa^{4}d^{2}\right) \left(\sqrt{\frac{\log 2 / \delta}{n_{q} + n_{u}}}+ \frac{\delta}{2}\right)\right).
\end{equation}
On the other hand, note that fact $-2 \beta  f(x_{i}^{1})f(x_{i+ n_{q}/2}^{2})^{T}$ takes values in range $\left[-2\beta \kappa^{2} d, 2 \beta \kappa^{2} d \right]$ and  using the exact same analysis process as above, we can obtain: with probability at least $1 - \delta / 2$ over the randomness of $D$, for any $f \in \mathcal{F}$:
\begin{equation}
\label{eq102}
 \widehat{\mathcal{L}}_{wsc}^{l}(f; D, \widehat{\boldsymbol{S}}) - \E_{D}\left[ \widehat{\mathcal{L}}_{wsc}^{l}(f; D, \widehat{\boldsymbol{S}})\right] \le \beta   \sup_{x \in \mathcal{X}} \left\|\widehat{\boldsymbol{S}}(x)^{T} \widehat{\boldsymbol{S}}(x) \right\|_{\infty}  \left( 32 \kappa d \widehat{\mathcal{R}}_{\frac{n_{q} }{2}}(\mathcal{F})+ 4 \kappa^{2}d  \left(\sqrt{\frac{\log 2/ \delta}{n_{q}}} + \frac{\delta}{2} \right)\right)
\end{equation}
By combining Formulas \ref{eq101} and \ref{eq102} with facts $\widehat{\mathcal{L}}_{wsc}(f; \widehat{\boldsymbol{S}})  = \E_{D}\left[\widehat{\mathcal{L}}_{wsc}^{u}(f; D, \widehat{\boldsymbol{S}}) + \widehat{\mathcal{L}}_{wsc}^{l}(f; D, \widehat{\boldsymbol{S}})\right]$, The proof can be concluded.
\end{proof}
We next bound the second term in Formula \ref{eq87}. The following lemma bounds $ \sup_{f \in \mathcal{F}} \left|\widehat{\mathcal{L}}_{wsc}(f; \widehat{\boldsymbol{S}}) - \mathcal{L}_{wsc}(f)  \right|$.
\begin{lemma}
\label{lB15}
Let $\mathcal{F}$ be a hypothesis class of feature extractors from $\mathcal{X}$ to $\mathbb{R}^{d}$ with $\left\|\mathcal{F}\right\|_{\infty} = \kappa$. Then we have:
\begin{equation}
\sup_{f \in \mathcal{F}} \left|\widehat{\mathcal{L}}_{wsc}(f; \widehat{\boldsymbol{S}}) - \mathcal{L}_{wsc}(f)  \right| \le 6 \kappa^{2} d \Delta(\widehat{\boldsymbol{S}})
\end{equation}
\end{lemma}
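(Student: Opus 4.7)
The plan is to isolate the $\widehat{\boldsymbol{S}}$-dependent portion of the loss and reduce the lemma to a single expectation bound. Under the uniform class distribution setup that governs $\widehat{\mathcal{L}}_{wsc}$, the terms $\mathcal{L}_1$ and $\mathcal{L}_3$ in $\mathcal{L}_{wsc}(f)$ do not involve $\boldsymbol{S}$ at all, so the entire gap collapses to
\[
\widehat{\mathcal{L}}_{wsc}(f;\widehat{\boldsymbol{S}}) - \mathcal{L}_{wsc}(f) = -2\beta \bigl(\mathcal{L}_{2}(f;\widehat{\boldsymbol{S}}) - \mathcal{L}_{2}(f;\boldsymbol{S})\bigr),
\]
where $\boldsymbol{S}$ denotes a matrix-valued map satisfying Proposition \ref{w-lapp} exactly. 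First I would marginalize the inner sums over $\tilde{q},\tilde{q}'$ exactly as in Equation \ref{212} of the proof of Proposition \ref{w-lapp}, writing each $\mathcal{L}_2$ as an expectation over $(\tilde{x},\tilde{x}')\sim\P(\mathcal{X})^{2}$ with integrand $\A(x\mid\tilde{x})\A(x'\mid\tilde{x}')\, f_x f_{x'}^{T}$ multiplied by an inner product of two label-space vectors.

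Introduce the abbreviations $\boldsymbol{u}(\tilde{x}) = \widehat{\boldsymbol{S}}(\tilde{x})\P(\boldsymbol{q}\mid\tilde{x})$ and $\boldsymbol{v}(\tilde{x}) = \boldsymbol{S}(\tilde{x})\P(\boldsymbol{q}\mid\tilde{x}) = \P(\boldsymbol{y}\mid\tilde{x})$. Then the integrand of the difference becomes
\[
\boldsymbol{u}(\tilde{x})^{T}\boldsymbol{u}(\tilde{x}') - \boldsymbol{v}(\tilde{x})^{T}\boldsymbol{v}(\tilde{x}') = \boldsymbol{u}(\tilde{x})^{T}\bigl(\boldsymbol{u}(\tilde{x}')-\boldsymbol{v}(\tilde{x}')\bigr) + \bigl(\boldsymbol{u}(\tilde{x})-\boldsymbol{v}(\tilde{x})\bigr)^{T}\boldsymbol{v}(\tilde{x}'),
\]
a standard additive splitting. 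For the bilinear factor in the integrand, use $|f_x f_{x'}^{T}| \le d\kappa^{2}$ since $f$ is $d$-dimensional with $\|\mathcal{F}\|_{\infty}=\kappa$. Taking absolute values and applying Hölder converts each summand into the $\ell_{1}$ norm of $\boldsymbol{u}-\boldsymbol{v}$ against the $\ell_{\infty}$ norm of the companion vector, after which the $\A(\cdot\mid\tilde{x})$ factors integrate to $1$.

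The remaining work is two $\ell_{\infty}$ bounds followed by an independence argument. Since $\boldsymbol{v}(\tilde{x}')=\P(\boldsymbol{y}\mid\tilde{x}')$ is a probability vector, $\|\boldsymbol{v}(\tilde{x}')\|_{\infty}\le 1$, so the second summand is bounded by $\Delta(\widehat{\boldsymbol{S}})$ after taking expectation. For the first summand, the triangle inequality gives $\|\boldsymbol{u}(\tilde{x})\|_{\infty}\le 1 + \|\boldsymbol{u}(\tilde{x})-\boldsymbol{v}(\tilde{x})\|_{1}$, and the independence of $\tilde{x}$ and $\tilde{x}'$ turns its expectation into $\Delta(\widehat{\boldsymbol{S}}) + \Delta(\widehat{\boldsymbol{S}})^{2}$. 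Combining these pieces yields
\[
\bigl|\mathcal{L}_{2}(f;\widehat{\boldsymbol{S}}) - \mathcal{L}_{2}(f;\boldsymbol{S})\bigr| \le d\kappa^{2}\bigl(2\Delta(\widehat{\boldsymbol{S}}) + \Delta(\widehat{\boldsymbol{S}})^{2}\bigr),
\]
and multiplying by $2\beta$ yields the claim once one uses $\Delta(\widehat{\boldsymbol{S}})\le 1$ (which holds because both $\P(\boldsymbol{y}\mid x)$ and the relevant images of $\widehat{\boldsymbol{S}}$ are constrained in $\ell_{1}$) to absorb the quadratic term, together with $\beta\le 1$ in the regime $\alpha+\beta/c=1$ of Theorem \ref{losserror2}.

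The main obstacle I anticipate is the $\|\boldsymbol{u}\|_{\infty}$ control: without an a priori bound on $\widehat{\boldsymbol{S}}$ itself, one cannot escape a mild assumption like $\Delta(\widehat{\boldsymbol{S}})\le 1$ or a boundedness hypothesis on $\widehat{\boldsymbol{S}}\P(\boldsymbol{q}\mid\cdot)$. Once that is in hand, the remaining steps are routine Hölder inequalities and an independence factorization, and the constant $6$ in the stated bound emerges naturally from the three contributions ($\Delta,\Delta^{2},\Delta$) scaled by $2\beta$.
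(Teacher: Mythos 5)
Your overall route is the paper's: reduce the gap to the $\mathcal{L}_2$ term, marginalize the weak labels so everything is expressed through $\boldsymbol{u}(\nx)=\widehat{\boldsymbol{S}}(\nx)\P(\boldsymbol{q}\mid\nx)$ and $\boldsymbol{v}(\nx)=\P(\boldsymbol{y}\mid\nx)$, and apply H\"older together with $|f_xf_{x'}^T|\le\kappa^2 d$. The genuine gap is in how you close the estimate. Your asymmetric splitting $\boldsymbol{u}^T\boldsymbol{u}'-\boldsymbol{v}^T\boldsymbol{v}'=\boldsymbol{u}^T(\boldsymbol{u}'-\boldsymbol{v}')+(\boldsymbol{u}-\boldsymbol{v})^T\boldsymbol{v}'$ (with $\boldsymbol{u}'=\boldsymbol{u}(\nxx)$, $\boldsymbol{v}'=\boldsymbol{v}(\nxx)$) forces you to control $\|\boldsymbol{u}\|_\infty$, which produces the quadratic bound $d\kappa^2(2\Delta(\widehat{\boldsymbol{S}})+\Delta(\widehat{\boldsymbol{S}})^2)$; to recover the stated $6\kappa^2 d\,\Delta(\widehat{\boldsymbol{S}})$ you then invoke $\Delta(\widehat{\boldsymbol{S}})\le 1$ and $\beta\le 1$, and neither is justified. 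Under $\alpha+\beta/c=1$ one has $\beta=c(1-\alpha)$, which can be as large as $c$ (the paper's experiments use $\beta=12$ and $\beta=300$), so "$\beta\le1$ in this regime" is simply false. And $\Delta(\widehat{\boldsymbol{S}})\le1$ does not follow from "$\ell_1$-constrained images": even in the most favorable instantiation where $\widehat{\boldsymbol{S}}(x)\P(\boldsymbol{q}\mid x)$ is a probability vector you only get $\Delta\le 2$, while for the transition-matrix construction $\widehat{\boldsymbol{S}}=\widehat{\boldsymbol{T}}^{-1}$ the vector $\widehat{\boldsymbol{S}}(x)\P(\boldsymbol{q}\mid x)$ is not $\ell_1$-normalized at all and $\Delta$ can be large. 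So your quadratic term cannot be absorbed, and your argument as written does not deliver the claimed constant.

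The paper sidesteps this with the symmetric three-term decomposition $\boldsymbol{u}^T\boldsymbol{u}'-\boldsymbol{v}^T\boldsymbol{v}'=(\boldsymbol{u}-\boldsymbol{v})^T\boldsymbol{v}'+\boldsymbol{v}^T(\boldsymbol{u}'-\boldsymbol{v}')+(\boldsymbol{u}-\boldsymbol{v})^T(\boldsymbol{u}'-\boldsymbol{v}')$: the first two terms are handled by $\|\boldsymbol{v}\|_\infty,\|\boldsymbol{v}'\|_\infty\le1$, and the cross term by $\|\boldsymbol{u}'-\boldsymbol{v}'\|_\infty\le1$, an entrywise condition much weaker than $\Delta\le1$ (it holds whenever $\widehat{\boldsymbol{S}}(x)\P(\boldsymbol{q}\mid x)$ has entries in $[0,1]$). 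Taking expectations over the i.i.d. pair $(\nx,\nxx)$ then gives the bound $2\|\boldsymbol{u}-\boldsymbol{v}\|_1+\|\boldsymbol{u}'-\boldsymbol{v}'\|_1\mapsto 3\Delta(\widehat{\boldsymbol{S}})$, linear in $\Delta$, whence the constant $6=2\times3$ with no quadratic term. (On the $\beta$ issue: the paper's own computation carries $2\kappa^2 d\beta$ throughout and the factor $\beta$ reappears as $\beta\eta_4\Delta(\widehat{\boldsymbol{S}})$ in Theorem \ref{losserror}, so the intended bound is linear in $\beta$; you should likewise keep $\beta$ rather than discard it.) If you swap in the symmetric splitting, the rest of your proposal goes through essentially unchanged.
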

\begin{proof}
We expand $\left|\widehat{\mathcal{L}}_{wsc}(f; \widehat{\boldsymbol{S}}) - \widehat{\mathcal{L}}_{wsc}(f)  \right|$ and obtain for any $f \in \mathcal{F}$:
\begin{equation}
\begin{aligned}
\left|\widehat{\mathcal{L}}_{wsc}(f; \widehat{\boldsymbol{S}}) - \widehat{\mathcal{L}}_{wsc}(f)  \right| & \le 2\beta \left|E_{(\tilde{x}, \tilde{q}), (\tilde{x}^{\prime}, \tilde{q}^{\prime}) \sim \pxq^{2},  x \sim \A(\cdot \mid \tilde{x}), x' \sim \A(\cdot \mid \tilde{x}^{\prime})}  \left[\left(\widehat{\boldsymbol{S}}(\nx)_{:, \nq}^{T} \widehat{\boldsymbol{S}}(\nxx)_{:, \nqq} -  \boldsymbol{S}(\nx)_{:, \nq}^{T} \boldsymbol{S}(\nxx)_{:, \nqq}\right)f_{x} f_{x'}^{T}\right]\right|
\\& \le 2 \kappa^{2} d \beta \left|E_{\tilde{x}, \tilde{x}^{\prime} \sim \P(\mathcal{X})^{2}}  \left[\widehat{\boldsymbol{S}}(\nx)_{:, \nq}^{T} \widehat{\boldsymbol{S}}(\nxx)_{:, \nqq} -  \boldsymbol{S}(\nx)_{:, \nq}^{T} \boldsymbol{S}(\nxx)_{:, \nqq}\right] \right|
\\ & =  2 \kappa^{2} d \beta \left|E_{\tilde{x}, \tilde{x}^{\prime} \sim \P(\mathcal{X})^{2}}  \left[ \P(\boldsymbol{q} \mid \nx)^{T} \left(\widehat{\boldsymbol{S}}(\nx)^{T} \widehat{\boldsymbol{S}}(\nxx)-  \boldsymbol{S}(\nx)^{T} \boldsymbol{S}(\nxx)\right) \P(\boldsymbol{q} \mid \nxx)\right] \right|
\\ & = 2 \kappa^{2} d \beta \left|E_{\tilde{x}, \tilde{x}^{\prime} \sim \P(\mathcal{X})^{2}}  \left[ \P(\boldsymbol{q} \mid \nx)^{T} \widehat{\boldsymbol{S}}(\nx)^{T} \widehat{\boldsymbol{S}}(\nxx) \P(\boldsymbol{q} \mid \nxx) - \P(\boldsymbol{y} \mid \nx)^{T} \P(\boldsymbol{y} \mid \nxx)\right] \right|
\\ & \le 6 \kappa^{2} d \beta E_{\tilde{x} \sim \P(\mathcal{X})}  \left[ \left\| \P (\boldsymbol{y} \mid \nx) -  \widehat{\boldsymbol{S}}(\nx) \P(\boldsymbol{q} \mid \nx)\right\|_{1}\right]  = 6 \kappa^{2} d \Delta(\widehat{\boldsymbol{S}}),
\end{aligned}
\end{equation}
here the $5$-th inequality holds because we have:
\begin{equation}
\begin{aligned}
|\P(\boldsymbol{q} \mid \nx)^{T} \widehat{\boldsymbol{S}}(\nx)^{T} \widehat{\boldsymbol{S}}(\nxx) \P(\boldsymbol{q} \mid \nxx) & - \P(\boldsymbol{y} \mid \nx)^{T} \P(\boldsymbol{y} \mid \nxx) |  \le | \P (\boldsymbol{y} \mid \nxx) ^{T} \left(\P (\boldsymbol{y} \mid \nx) -  \widehat{\boldsymbol{S}}(\nx) \P(\boldsymbol{q}\mid \nx)\right) |
\\& + | \P (\boldsymbol{y} \mid \nx) ^{T} \left(\P (\boldsymbol{y} \mid \nxx) -  \widehat{\boldsymbol{S}}(\nxx) \P(\boldsymbol{q} \mid \nxx)\right) | 
\\& + |\left(\P (\boldsymbol{y} \mid \nx) -  \widehat{\boldsymbol{S}}(\nx) \P(\boldsymbol{q} \mid \nx)\right)^{T} \left(\P (\boldsymbol{y} \mid \nxx) -  \widehat{\boldsymbol{S}}(\nxx) \P(\boldsymbol{q} \mid \nxx)\right)|
\\& \le 2 \left\| \P (\boldsymbol{y} \mid \nx) -  \widehat{\boldsymbol{S}}(\nx) \P(\boldsymbol{q} \mid \nx)\right\|_{1} + \left\| \P (\boldsymbol{y} \mid \nxx) -  \widehat{\boldsymbol{S}}(\nxx) \P(\boldsymbol{q} \mid \nxx)\right\|_{1}
\end{aligned}
\end{equation}
\end{proof}

Finally, we combine the results of Lemmas \ref{lB11}, \ref{lB14}, \ref{lB15} to complete the final proof of Theorem \ref{losserror2}.
\begin{proof} [Proof of Theorem \ref{losserror2}]
Take $\eta_{0} = 64 (\kappa d + \kappa^{2}d^{2}), \eta_{1} = 8 \kappa^{2} d + 2 \kappa^{4} d^{2}, \eta_{2} = 64 \kappa d, \eta_{3} = 8 \kappa^{2} d, \eta_{4} = 24 \kappa^{2}d$.By substituting the results of Lemmas \ref{lB14} and \ref{lB11} into Formula \ref{eq87}, and then the proof can be completed.
\end{proof}

\subsection{Detail of Corollary \ref{end}}
\label{dc38}
By substituting Theorems \ref{epg} and \ref{losserror} into Theorem D.7 in \citealt{HaoChen_Wei_Gaidon_Ma_2021}, the corollary can be directly obtained. We restate Theorem D.7 in \citealt{HaoChen_Wei_Gaidon_Ma_2021} as follows:
\begin{theorem}
(Theorem D.7 in \citealt{HaoChen_Wei_Gaidon_Ma_2021})  Assume representation dimension $d > 4r + 2$, Recall $\lambda_{i}$ be the $i$-th largest eigenvalue of the normalized adjacency matrix. Then, for any $\epsilon > 0$ and $\widehat{f} \in \mathcal{F}$  such that $\mathcal{L}_{wsc}(\widehat{f}) \le \mathcal{L}_{wsc}(f^{*}) + \epsilon $, we have: 
\begin{equation}
\varepsilon (\widehat{f}) \le \varepsilon (f^{*}) + \frac{d}{\Delta_{\lambda}^{2}} \epsilon,
\end{equation}
where $\Delta_{\lambda}  \triangleq \lambda_{\left \lfloor 3d/4  \right \rfloor } - \lambda_{d}$  is the eigenvalue gap between the $\left \lfloor 3d/4 \right \rfloor $-th and the $d$-th eigenvalue.
\end{theorem}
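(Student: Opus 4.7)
The plan is to follow the spectral-stability recipe for symmetric low-rank matrix factorisation. The starting point, established in Proposition \ref{loss-wsc}, is that $\mathcal{L}_{wsc}(f)$ equals $\|\widetilde{\boldsymbol{A}} - \boldsymbol{F}\boldsymbol{F}^{T}\|_{F}^{2}$ up to an additive constant, with $\boldsymbol{F}_{x,:} = \sqrt{\boldsymbol{A}_{x}}f_{x}$. Because $f^{*}$ realises the Eckart--Young--Mirsky minimum, the hypothesis $\mathcal{L}_{wsc}(\widehat{f}) \le \mathcal{L}_{wsc}(f^{*}) + \epsilon$ converts directly into an excess-approximation bound $\|\widetilde{\boldsymbol{A}} - \widehat{\boldsymbol{F}}\widehat{\boldsymbol{F}}^{T}\|_{F}^{2} \le \sum_{i > d}\lambda_{i}^{2} + \epsilon$. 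The entire proof then reduces to a purely spectral, label-free question: how close, in subspace terms, must such a near-optimal factor $\widehat{\boldsymbol{F}}$ lie to the top-$d$ eigenspace of $\widetilde{\boldsymbol{A}}$?

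First I would carry out a Davis--Kahan-style argument. Decompose $\widehat{\boldsymbol{F}}\widehat{\boldsymbol{F}}^{T}$ against the eigendecomposition of $\widetilde{\boldsymbol{A}}$; any mass that $\widehat{\boldsymbol{F}}$ places on eigendirections beyond the $\lfloor 3d/4 \rfloor$-th pays an excess cost of at least $\lambda_{\lfloor 3d/4 \rfloor} - \lambda_{d} = \Delta_{\lambda}$ per unit projection, because swapping such a direction for one in the top-$d$ strictly decreases the Frobenius loss. Near-optimality therefore forces a factorisation $\widehat{\boldsymbol{F}} = \boldsymbol{F}^{*}\boldsymbol{R} + \boldsymbol{E}$ with an alignment matrix $\boldsymbol{R} \in \mathbb{R}^{d \times d}$ and residual $\|\boldsymbol{E}\|_{F}^{2} \preceq \epsilon / \Delta_{\lambda}$. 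The dimensional condition $d > 4r + 2$ plays the role of guaranteeing that the cutoff index $\lfloor 3d/4 \rfloor$ sits above the rank-$r$ signal subspace, so that $\Delta_{\lambda}$ really is a noise-level gap and not contaminated by a cluster-structural jump.

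Second I would transfer this subspace closeness into linear-probe error. Given the optimal classifier $\boldsymbol{B}^{*}$ for $f^{*}$, propose $\widehat{\boldsymbol{B}} = \boldsymbol{R}^{-T}\boldsymbol{B}^{*}$ and write
\begin{equation}
\widehat{\boldsymbol{B}}^{T}\widehat{f}_{x} - (\boldsymbol{B}^{*})^{T} f^{*}_{x} = \boldsymbol{R}^{-T}(\boldsymbol{B}^{*})^{T}\boldsymbol{E}_{x,:}^{T}/\sqrt{\boldsymbol{A}_{x}}.
\end{equation}
Averaging the squared $\ell_{2}$ deviation against the stationary weights $\{\boldsymbol{A}_{x}\}$ gives a total mean-squared classifier discrepancy $\preceq \epsilon/\Delta_{\lambda}^{2}$, where one factor of $1/\Delta_{\lambda}$ comes from the residual bound and the other from controlling the operator norm of $\boldsymbol{R}^{-1}$ (the invertibility margin of $\boldsymbol{R}$ is again governed by the same gap). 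A final conversion from squared $\ell_{2}$ label-deviation to $\arg\max$ classification error, in the spirit of Lemma \ref{lB6}, contributes the factor of $d$ via a coordinate union bound, producing the claimed $\frac{d}{\Delta_{\lambda}^{2}}\epsilon$ excess.

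The main obstacle is the Davis--Kahan step: a symmetric low-rank factor is not itself a spectral projector, so one cannot invoke off-the-shelf eigenspace perturbation directly. One must argue from first principles that near-optimality of the Frobenius loss forces $\mathrm{col}(\widehat{\boldsymbol{F}})$ to be nearly the span of the top-$d$ eigenvectors, with the gap $\Delta_{\lambda}$ (rather than the smaller $\lambda_{d} - \lambda_{d+1}$) entering precisely because the $\lceil d/4 \rceil$ slack dimensions allow $\widehat{\boldsymbol{F}}$ to hedge across eigenvectors near the cutoff without catastrophic cost. Handling this buffer cleanly, and simultaneously certifying that $\boldsymbol{R}$ remains well-conditioned so that $\widehat{\boldsymbol{B}}$ is a legitimate linear probe, is the technical heart of the argument.
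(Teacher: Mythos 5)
You should note at the outset that the paper itself contains no proof of this statement: it is restated verbatim from Theorem D.7 of \citealt{HaoChen_Wei_Gaidon_Ma_2021} in Appendix~\ref{dc38} and is invoked as a black box in the proof of Corollary~\ref{end}. So the only meaningful comparison is against the original argument, and against that standard your sketch has the right architecture (near-optimality of $\mathcal{L}_{wsc}$ $\Rightarrow$ excess Frobenius error over the Eckart--Young optimum, via Proposition~\ref{loss-wsc} $\Rightarrow$ approximate capture of high eigendirections of $\widetilde{\boldsymbol{A}}$ $\Rightarrow$ transfer to the linear probe), but the two steps you yourself flag as the ``technical heart'' are precisely where the proposal breaks.

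The concrete gap is the transfer step built on the factorisation $\widehat{\boldsymbol{F}} = \boldsymbol{F}^{*}\boldsymbol{R} + \boldsymbol{E}$ with $\widehat{\boldsymbol{B}} = \boldsymbol{R}^{-T}\boldsymbol{B}^{*}$. The gap $\Delta_{\lambda} = \lambda_{\left\lfloor 3d/4 \right\rfloor} - \lambda_{d}$ only penalises $\widehat{\boldsymbol{F}}$ for failing to capture eigendirections with index at most $\left\lfloor 3d/4 \right\rfloor$; eigendirections with indices in $(\left\lfloor 3d/4 \right\rfloor, d]$ may have eigenvalues arbitrarily close to $\lambda_{d}$ (or to each other), so a near-optimal $\widehat{\boldsymbol{F}}$ can drop them entirely at $o(\epsilon)$ cost. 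Consequently the $d \times d$ alignment matrix $\boldsymbol{R}$ can be singular or arbitrarily ill-conditioned, $\boldsymbol{R}^{-T}\boldsymbol{B}^{*}$ need not exist, and your claim that ``the invertibility margin of $\boldsymbol{R}$ is governed by the same gap'' is false as stated. The argument in \citealt{HaoChen_Wei_Gaidon_Ma_2021} avoids any inversion: the excess-loss-to-subspace step only shows that the top-$\left\lfloor 3d/4 \right\rfloor$ eigenvector features are approximately expressible as \emph{linear functions of} $\widehat{f}$ (this is where $\Delta_{\lambda}^{2}$ enters, together with the factor $d$), and the probe for $\widehat{f}$ is then constructed by composing that linear map with a probe for those eigenvector features; the condition $d > 4r + 2$ guarantees that the truncated feature set still supports the downstream guarantee being compared against. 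Relatedly, your quantitative bookkeeping is asserted rather than derived: the intermediate claim $\left\|\boldsymbol{E}\right\|_{F}^{2} \preceq \epsilon/\Delta_{\lambda}$, and the split of $1/\Delta_{\lambda}^{2}$ into ``one factor from the residual, one from $\boldsymbol{R}^{-1}$,'' do not match the actual accounting, where both powers of the gap arise in the single subspace-capture inequality. To repair the proposal you would need to replace the global alignment-plus-inversion device with this truncated, regression-style transfer (or prove a conditioning bound for $\boldsymbol{R}$ restricted to the top-$\left\lfloor 3d/4 \right\rfloor$ block), and actually carry out the first-principles perturbation inequality you currently only gesture at.
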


\section{Related Work}
Many methods have been developed to solve various weakly-supervised learning problems. At the same time, research on representation learning has also made significant breakthroughs in recent years. In this section, we revisit these related work and divide them into two parts: weakly-supervised learning and representation learning.
\subsection {Weakly-Supervised Learning}
In real world, the widely existing supervised information is weakly-supervised information which is either inaccurate or ambiguous. In order to enable effective and accurate learning from this weakly-supervised information, previous work has proposed a series of methods for weakly-supervised learning. The two typical paradigms related to weakly-supervised learning are noisy label learning (NLL) and partial label learning (PLL) , which is what we will introduce in this section.

\textbf{Noisy Label Learning (NLL).} Noisy labels may be caused by annotation errors, and overfitting to these label errors will lead to poor model performance \cite{zhang2021understanding, wei2024vision, li2023stochastic}. 
Several strategies have been developed to mitigate the label noise. The mainstream of NLL methods cover the following aspects: using robust loss function \cite{zhang2018generalized, wang2019symmetric}, modeling the noise transfer matrix, performing sample selection and correcting incorrect labels.
In particular, label correction methods have shown promising results than other methods in noisy label learning. \citealt{Liu2020ELR} prove that the model can accurately predict a subset of mislabeled examples during the early learning phase. This observation implies a potential strategy for rectifying the corresponding labels, which can be achieved by generating novel labels tantamount to either soft or hard pseudo-labels estimated via the model \cite{tanaka2018joint}.  \citealt{han2018co} propose Co-teaching which trains two different networks for the label correction.  Inspired by the fact that clean samples have a smaller loss in the early learning stage, \citealt{arazo2019unsupervised} apply a mixture model to the losses of each sample to estimate the probability that a sample is mislabeled and correct the loss based on the prediction of the network. Similar to the previous two methods, DivideMix \cite{Li2020DivideMix:}, deploys two neural networks to conduct mutual sample selection and apply semi-supervised learning methodology, where the targets are computed based on the average predictions obtained from different data augmentations.

\textbf{Partial Label Learning (PLL).} Partial labels, while preventing the omission of the correct label, introduce greater ambiguity into the labeling process. The prior works can be divided into average-based strategies and identification-based strategies.
The average-based methods usually treat all candidate labels equally. \citealt{lv2023robustness} discussed and analyzed the robustness performance of different average-based loss. However, those methods may introduce the misleading false positive label into the training process. To overcome these limitations, identification-based strategies, which regard the correct label as a latent variable and aim to identify it from the candidate label set, have drawn intensive attention and achieved remarkable progress.
PRODEN \cite{pmlrlv2020} and CC \cite{partial2020lv} utilize the predictions generated by the predictive model as label information, thus assigning more weights to labels that are more likely to be correct. \citealt{wen2021leveraged} propose a family of loss functions for label disambiguation. \citealt{Wu2022RevisitingCR} perform supervised learning on non-candidate labels and employ consistency regularization on candidate labels.

In recent years, a more practical setting called instance-dependent partial label learning (IDPLL) has also received considerable attention. In IDPLL, the candidate set is generated from labels that have a certain semantic similarity with the true label, which also increases the difficulty of disambiguation. \citealt{qiao2023decompositional} propose to explicitly model the instance-dependent generation process by decomposing it into separate generation steps. \citealt{wu2024distilling} introduce a novel rectification process to ensure that candidate labels consistently receive higher confidence than non-candidates. \citealt{he2023candidate} propose a two-stage framework employing normalized entropy for selective disambiguation of well-disambiguated examples. CEL \cite{yang2025mixed} employs a class associative loss to enforce intra-candidate similarity and inter-set dissimilarity among the class-wise embeddings for each sample.


Some research has also turned to trying to use a more unified framework to solve both NLL and PLL problems. 
ULAREF \cite{qiao2024ularef} improves the reliability of label refinement by globally detecting the reliability of the prediction model and locally enhancing the supervision signal, thereby improving the performance of the prediction model.
GFWS \cite{chen2024imprecise} treat the ground-truth labels as latent variables and try to model the entire distribution of all possible labeling entailed by weakly-supervised information, thus allowing a unified solution to deal with NLL and PLL.
Although these methods try to uniformly exploit the common features of weakly-supervised information to solve the weakly-supervised learning challenges, they all ignore the potential of high-quality representations to achieve label disambiguation.

\subsection {Representation Learning and Weakly-Supervised Representation Learning}
In recent years, contrastive learning has become dominant in representation learning because they can learn more distinct representations. 
A plethora of works has explored the effectiveness of contrastive learning in unsupervised representation learning \cite{he2020momentum, oord2018representation, Chen2020ASF, caron2020unsupervised, zbontar2021barlow}. Meanwhile, there are also many works trying to introduce more information into contrastive learning or bring new insights from different views \cite{khosla2020supervised, HaoChen_Wei_Gaidon_Ma_2021, yiyou, cui2023rethinking, zhou2024CCL}.

\citealt{khosla2020supervised} attempt to introduce supervised information into contrastive learning. The approach regards samples in same classes as positive samples and achieves significant performance improvements on multiple supervised learning tasks.
Due to the success of contrastive learning, many researchers have made a lot of attempts to improve the performance on weakly-supervised learning incorporated with the advantages of contrastive learning.
On the line of NLL, the role of contrastive learning is different. MOIT \cite{2021_CVPR_MOIT} uses the agreement between the features learned by contrastive learning and the original labels to identify mislabeled samples. Sel-CL \cite{Li2022SelCL} utilizes k-nn nearest neighbors to select confident sample pairs and use these sample pairs for supervised contrastive learning. TCL \cite{huang2023twin} uses a Gaussian mixture model to disambiguate labels, and then uses self-supervised contrastive learning to further learn more robust representations.
For PLL, PiCO \cite{wang2022pico} integrates contrastive learning with prototype learning. The former facilitates the formation of well-structured clusters, which in turn enables prototype learning to acquire prototype representations. The latter assists in the selection of positive samples for contrastive learning. 

On the other hand, some researchers try to view and explain contrastive learning from different perspectives. 
Inspired by the widespread application of spectral graph theory in the field of machine learning, \citealt{HaoChen_Wei_Gaidon_Ma_2021} first regard contrastive learning as a problem of graph clustering on augmentation graph and introduce a spectral contrastive loss, which greatly promoted the progress of unsupervised contrastive learning.
Along this line, \citealt{yiyou} introduce supervised information to this spectral contrastive learning framework, transforming the augmentation graph in the original problem into a perturbation graph, thus achieving great performance improvement on open-world semi-supervise learning task.

Despite their success in the respective field, they all overlooked the importance of correctly utilizing weakly-supervised information.
Relevant theories and experiments have verified that simply introducing inaccurate noise labels for contrastive learning is ineffective and even harmful \cite{cui2023rethinking}. 
Therefore, it is necessary to design a special contrastive learning framework for weakly-supervised information.
Overall, our method is the first approach to utilize weakly-supervised information for contrastive learning from a graph 
spectral theory perspective.
Sufficient experiments and complete and rigorous theory guarantee the effectiveness of the proposed method.

\section{Implementation Details}
\label{app_imp}
In this section, we provide more details on the implementation of our approach. In general, our proposed framework contains three training strategies: supervised loss, strong-weak augmentation consistency regularization \cite{xie2020unsupervised}, and the proposed weakly-supervised contrastive learning (WSC) loss. Among them, the supervised loss varies with different settings, while the other two losses do not change with the setting.

Specifically, given any sample $x$, its weakly-supervised information can be denoted as $q\in \mathcal{Q}$. Let the extracted feature embedding be $f(x)$, the final probability prediction of the neural network be $g(x)$, and the strong augmentation function be $\mathcal{A}_{s}(\cdot)$. Our method can be formalized as:
\begin{equation}
    \label{eq:implement}
    \mathcal{L}(x, q) = \mathcal{L}_{sup}(g(x), q) + \mathcal{L}_{CE}(g(\mathcal{A}_{s}(x)), \tilde{g}(\mathcal{A}_{w}(x))) + \mathcal{L}_{wsc}(f),
\end{equation}

where $\tilde{g}(x)$ means the prediction is made by a fixed copy of current model, indicating that the gradient is not propagated. The first term in equation \ref{eq:implement} means the supervised loss which will be introduced next. The second term is the so called strong-weak augmentation consistency regularization, which consistently utilize the prediction of the weakly-augmented data to train the strongly-augmented data, and it is commonly used in weakly-supervised learning \cite{xie2020unsupervised, wang2022pico, Wu2022RevisitingCR, chen2024imprecise}. The third term is the proposed WSC loss, which needs at least three inputs: the embeddings of the features of two different views of the sample and the constructed matrix $\widehat{\boldsymbol{S}}$. In this paper, we compute this loss through Algorithm \ref{alg:bb} which assumes a uniform class distribution, pseudo code of computing WSC loss without this assumption can be found in Algorithm \ref{alg:bb2}. We left method for construct $\widehat{\boldsymbol{S}}$ in next section.

\subsection{Noisy Label Learning}
\label{app_E1}
\textbf{Setup.}
For CIFAR-10 and CIFAR-100, an 18-layer PreAct ResNet is employed and stochastic gradient descent (SGD) is utilized for training with a momentum value of 0.9, a weight decay factor of 0.001, and a batch size of 128 over a total of 250 epochs. The initial learning rate is set to 0.02 and adjusted by a cosine learning rate scheduler. In our framework, both the projection head and the classification head are configured as a two-layer Multilayer Perceptron (MLP) with a dimension of 256 and the number of classes. The parameters are set to $\alpha = 1$, $\beta = 12$ for CIFAR-10 and $\alpha = 2$, $\beta = 300$ for CIFAR-100. Additionally, for CIFAR-100, the last term of our loss function is scaled by a factor of 3.
In addition, we also conduct experiments on CIFAR-10N and CIFAR-100N, where we use 
a 34-layer ResNet as the backbone for feature extraction, others are same as the previous one. Besides, a 50-layer ResNet pretrained with ImageNet-1K was used to train for 15 epochs on Clothing1M. The batch size are set to be 64 and the initial learning rate is 0.002 and then multiply by a factor of 0.1 in the 7th epoch. We also set $\alpha=1$, $\beta=28$. The detailed hyper-parameters are presented in Table \ref{tab:nll-params}.

\begin{table}[htbp!]
\centering
\caption{Hyper-parameters for \textbf{noisy label learning} used in experiments.}
\label{tab:nll-params}
\resizebox{0.9 \textwidth}{!}{%
\begin{tabular}{@{}c|cccc@{}}
\toprule
\multicolumn{1}{c|}{Hyper-parameter}   & CIFAR-10 (CIFAR-10N) & CIFAR-100 (CIFAR-100N) & Clothing1M \\ \midrule
Image Size & 32 & 32 & 224 \\
Model      &  PreAct-ResNet-18 (ResNet-34) &   PreAct-ResNet-18 (ResNet-34)     &  \begin{tabular}{c} ResNet-50  \\(ImageNet-1K Pretrained) \end{tabular}   \\
Batch Size        &     128     &    128       &   64    \\
Learning Rate     &     0.02     &   0.02        &   0.002\\
Weight Decay      &     1e-3     &   1e-3       &   1e-3 \\
LR Scheduler          &     Cosine     &   Cosine   &   MultiStep     \\
Training Epochs   &     250    &   250        &  15   \\
Classes           &     10     &     100      &   14    \\ 
$\alpha$ & 1 & 2 & 1 \\
$\beta$ & 12(6) & 300 & 28 \\
\bottomrule
\end{tabular}%
}
\end{table}

\textbf{Baselines.}
The performance of proposed method for noisy label is compared against ten baselines:

\begin{itemize}
    \item CE, which utilizes the standard cross-entropy loss directly to train the model in a batch.
    \item DivideMix \cite{Li2020DivideMix:}, which regards noisy instances as unlabeled data and employs the strategy involving label co-refinement and co-guessing.
    \item ELR \cite{Liu2020ELR}, which focuses on early learning via regularization to preclude the memorization of incorrect labels.
    
    \item SOP \cite{liu2022SOP}, which models the noise with sparse over-parameterization and exploits implicit regularization.

    \item GFWS \cite{chen2024imprecise}, which is a unified framework that uses Expectation-Maximization to model noisy label as latent variables.
    
    \item ULAREF \cite{qiao2024ularef}, which trains the predictive model with refined labels through global detection and local enhancement.

    \item ProMix \cite{Xiao2023Promix}, which meticulously selects, dynamically extends, and optimally utilizes clean sample sets within the devised semi-supervised learning framework.

    \item MOIT \cite{2021_CVPR_MOIT}, which jointly exploits contrastive learning and classification to enhance performance against label noise, with contributions including an ICL loss, a novel label noise detection method, and a fine-tuning strategy.

    \item Sel-CL \cite{Li2022SelCL}, which leverages nearest neighbors to select confident pairs for supervised contrastive learning.

    \item TCL \cite{huang2023twin}, which disambiguate labels by a Gaussian mixture model and uses self-supervised contrastive learning to further learn more robust representations.
    
\end{itemize}

\textbf{Discussion.}
Recent works on NLL can be categorized into three types, with methods based on the noise transition matrix demonstrating empirical effectiveness. In our proposed framework, we use the cross-entropy loss to align the model output conditioned by the estimated noisy transition matrix and the noisy label. Besides, in order to better learn from the noisy label, we follow \citealt{lipro2021} and impose the following regularization constraints on the estimated noise transition matrix $\boldsymbol{T}$. The above two terms will construct the supervised loss as:
\begin{equation}
    \label{eq:volminet}
    \mathcal{L}_{sup}(g(x), q) = \mathcal{L}_{CE}(g(x)\boldsymbol{T}, q) + \lambda \log\det (\boldsymbol{T}),
\end{equation}
where $\lambda$ is a regularization coefficient.

\begin{table*}[t!]
\centering
\caption{Comparisons with each methods on CIFAR-10N, CIFAR-100N and Clothing1M. Each runs has been repeated 3 times with different randomly-generated noise and we report the best accuracy.}
\label{tab:main-NLL-2}

\begin{tabular}{@{}l|ccc|cc|c@{}}
\toprule
Dataset & \multicolumn{3}{c|}{CIFAR-10N}               & \multicolumn{2}{c|}{CIFAR-100N}  & Clothing1M  \\ \midrule
Noisy Type                         & Random1     & Aggregate     & Worst    & Clean  & Noisy    & Ins.        \\ \midrule
CE                      & 85.02\scriptsize{±0.65}    & 87.77\scriptsize{±0.38} & 77.69\scriptsize{±1.55}   
& 76.70\scriptsize{±0.74}  & 55.50\scriptsize{±0.66}      & 69.10    \\
Forward                 &  86.88\scriptsize{±0.50}    & 88.24\scriptsize{±0.22}  & 79.79\scriptsize{±0.46}  & 76.18\scriptsize{±0.37}   & 57.01\scriptsize{±1.03}   &  -        \\
Co-teaching           & 90.33\scriptsize{±0.13}    & 91.20\scriptsize{±0.13}  & 83.83\scriptsize{±0.13}  & 73.46\scriptsize{±0.09}  & 60.37\scriptsize{±0.27} &  -      \\
ELR &  94.43\scriptsize{±0.41}          & 94.83\scriptsize{±0.10}       & 91.09\scriptsize{±1.60}         & 78.57\scriptsize{±0.12}    & 66.72\scriptsize{±0.07}         & 72.90 \\
GFWS &  94.86\scriptsize{±0.07} & 95.30\scriptsize{±0.03} & 93.55\scriptsize{±0.14} & 78.53\scriptsize{±0.21} & 68.07\scriptsize{±0.33} &   74.02   \\
CORES & 94.45\scriptsize{±0.14} & 95.25\scriptsize{±0.09} & 91.66\scriptsize{±0.09} & 73.87\scriptsize{±0.16}  & 55.72\scriptsize{±0.42}  & 73.20  \\
SOP & 95.28\scriptsize{±0.13} & 95.61\scriptsize{±0.13} & 93.24\scriptsize{±0.21} & 78.91\scriptsize{±0.43} & 67.81\scriptsize{±0.23} & 73.50  \\
\midrule
\cellcolor{Gray}WSC (Ours) & \cellcolor{Gray}\textbf{96.13\scriptsize{±0.52}} & \cellcolor{Gray}\textbf{96.50\scriptsize{±0.72}} & \cellcolor{Gray}\textbf{93.60\scriptsize{±0.21}} & \cellcolor{Gray}\textbf{81.31\scriptsize{±0.16}} &  \cellcolor{Gray}\textbf{71.00\scriptsize{±0.16}}  & \cellcolor{Gray}\textbf{74.75\scriptsize{±0.18}}   \\  \bottomrule
\end{tabular}

\vspace{-0.1in}
\end{table*}

\textbf{Additional Experiments.}
In addition to the results we present in the main paper which mainly focuses on the the simulated datasets, we also verify the effectiveness of the proposed method on the CIFAR-N dataset \cite{wei2022learning}, which equips the training samples of CIFAR dataset with human-annotated noisy label.  On CIFAR-10N, we used three noise types: \textit{Aggregate}, \textit{Random1}, and \textit{Worst}, with corresponding noise rates of  9.03\%, 17.23\%, and 40.21\%. For CIFAR-100N, we compared the model performance under \textit{Clean} and \textit{Noisy} settings with a noise rate of 40.21\%. We also include a full comparison on Clothing1M which has more realistic and large-scale instance noise \cite{Xiao_2015_CVPR}.

\begin{figure*}[htbp]
    \centering
        \centering
        \subfigure[WSC on CIFAR-10] 
        {
            \begin{minipage}[b]{.23\linewidth}
                \centering
                \includegraphics[width=1.0\linewidth]{./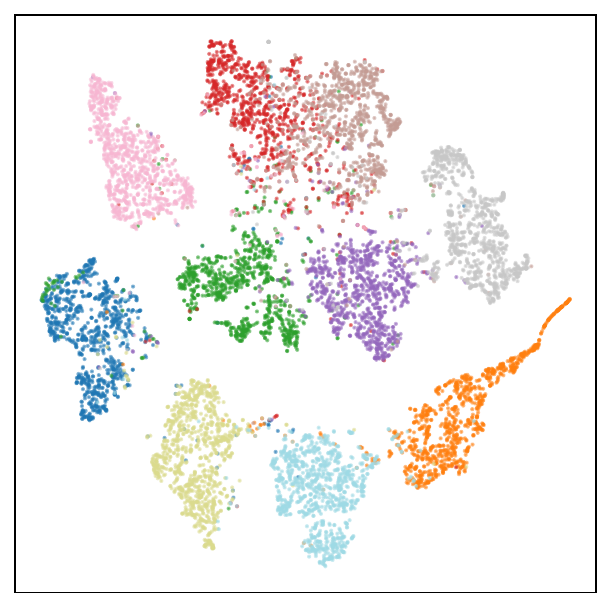}
                \label{subfig:wsc-10}
            \end{minipage}
        }
        \subfigure[ELR+ on CIFAR-10]
        {
            \begin{minipage}[b]{.23\linewidth}
                \centering
                \includegraphics[width=1.0\linewidth]{./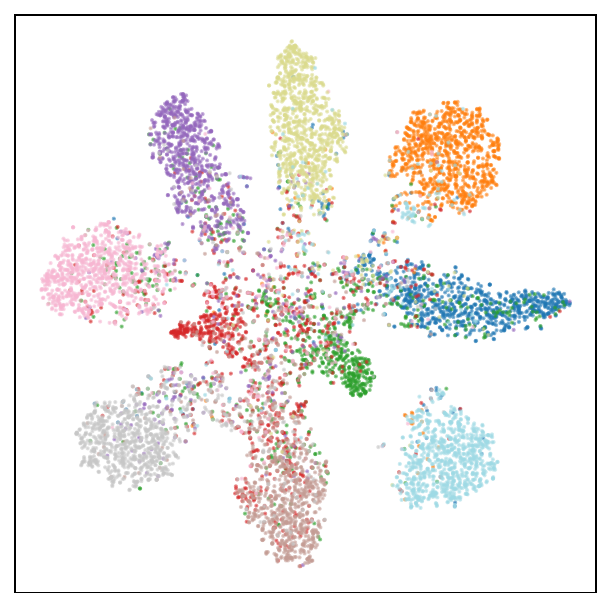}
                \label{subfig:elr-10}
            \end{minipage}
        }
        \subfigure[WSC on CIFAR-100]
        {
            \begin{minipage}[b]{.23\linewidth}
                \centering
                \includegraphics[width=1.0\linewidth]{./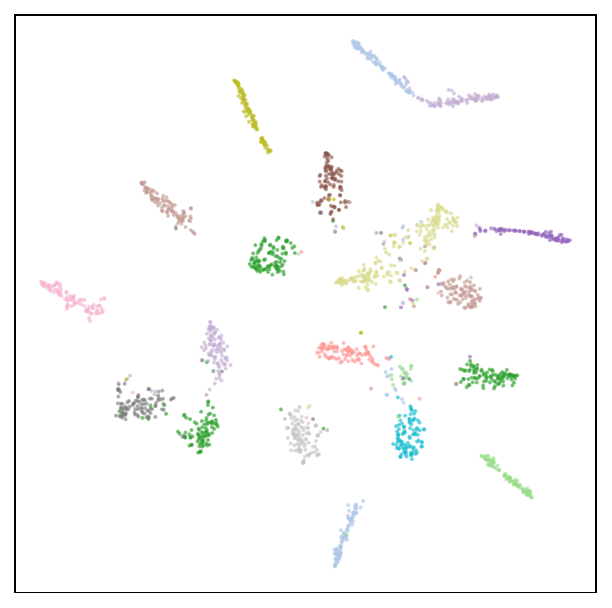}
                \label{subfig:wsc-100}
            \end{minipage}
        }
        \subfigure[ELR+ on CIFAR-100]
        {
            \begin{minipage}[b]{.23\linewidth}
                \centering
                \includegraphics[width=1.0\linewidth]{./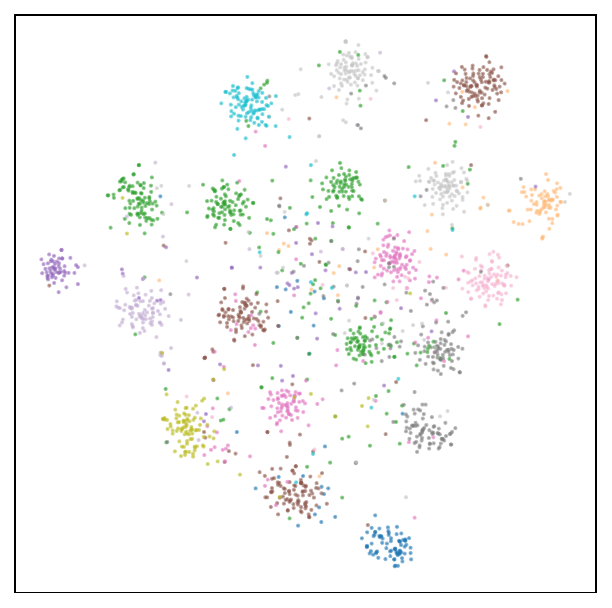}
                \label{subfig:elr-100}
            \end{minipage}
        }
    \caption{
We present t-SNE visualizations of the learned representations on the CIFAR dataset with 90\% symmetric noise. Figure \ref{subfig:wsc-10} and Figure \ref{subfig:elr-10} illustrate the results for WSC and ELR+ on CIFAR-10, while Figure \ref{subfig:wsc-100} and Figure \ref{subfig:elr-100} display the results for WSC and ELR+ on CIFAR-100 under the same noise condition, where only 20 categories with the highest accuracy for each model are shown.}  
    \label{fig:tsne-noisy}
    
\end{figure*}

The results shown in Table \ref{tab:main-NLL-2} show the superiority of our method on these datasets. It is particularly noteworthy that our method performs very well on CIFAR-100N, which should be due to the fact that our contrastive learning method specially designed for weakly supervised learning helps learn more and better feature representations.  It is worth noting that since the noise type of these datasets is not suitable for estimation using a noise transition matrix, we ignore the second term of Equation \ref{eq:volminet} during training.

\textbf{Qualitative Results.} Figure \ref{fig:tsne-noisy} visualizes the learned representations under a high noise ratio. Compared to ELR+, which employs two distinct backbones to obtain more meaningful representations, our method generates well-defined structures with semantically meaningful information, leading to superior accuracy in these settings.

\subsection{Partial Label Learning}
\label{app_E2}
\textbf{Setup.}
For simulated PLL settings, we use Wide-ResNet-28-2 trained from scratch and ResNet-18 pre-trained with ImageNet-1K as our feature extractors on the CIFAR dataset and CUB-200 dataset, respectively. Following most of experimental setups conducted on the contrastive network, we use a 2-layer MLP that outputs 128-dimensional embeddings as our projection head. The batch size is 256 and the weight decay is 0.0001 across all the PLL settings. We set the initial learning rate to be 0.1 and adjust it by MultiStep and Cosine scheduler on CIFAR datasets and CUB-200 respectively. On CIFAR-10 and CIFAR-100 (CIFAR-100-H),  we train our model for 200 epochs using parameters $\alpha = 1$, $\beta = 12$ for CIFAR-10 and $\alpha = 2$, $\beta = 300$ for CIFAR-100. For the CUB dataset, we train for 300 epochs with parameters $\alpha = 2$ and $\beta$ linearly increasing from 0 to 400. The detailed choice of hyper-parameters is provided in Table \ref{tab:pll-params}. Furthermore, for IDPLL settings, we use ResNet-34 pre-trained with ImageNet-1K as our feature extractors following \cite{yang2025mixed}. We train our model for 500 epochs  and set the initial learning rate be 0.1 and adjust it by Cosine scheduler on both four fine-grained datasets. The detailed choice of hyper-parameters is provided in Table \ref{tab:pll2-params}.
\begin{table}[htbp!]
\centering
\caption{Hyper-parameters for \textbf{simulated partial label learning} used in experiments.}
\label{tab:pll-params}

\begin{tabular}{@{}c|cccc@{}}
\toprule
\multicolumn{1}{c|}{Hyper-parameter}   & CIFAR-10 & CIFAR-100 (CIFAR-100H) & CUB-200 \\ \midrule
Image Size & 32 & 32 & 224 \\
Model      &  Wide-ResNet-28-2 &  Wide-ResNet-28-2     &  \begin{tabular}{c} ResNet-18  \\(ImageNet-1K Pretrained) \end{tabular}   \\
Batch Size        &     256    &    256      &   256    \\
Learning Rate     &     0.1     &   0.1        &   0.01\\
Weight Decay      &     1e-4     &   1e-4       &   1e-5 \\
LR Scheduler          &     MultiStep     &   MultiStep   &   Cosine     \\
Training Epochs   &     200    &   200        &  300   \\
Classes           &     10     &     100      &   200    \\ 
$\alpha$ & 1 & 2 & 2 \\
$\beta$ & 12 & 300 & 0-400 \\
\bottomrule
\end{tabular}%

\end{table}

\begin{table}[htbp!]
\centering
\caption{Hyper-parameters for \textbf{instance-dependent partial label learning} used in experiments.}
\label{tab:pll2-params}

\begin{tabular}{@{}c|cccc@{}}
\toprule
\multicolumn{1}{c|}{Hyper-parameter} & CUB200 & CARS196 & DOGS120 & FGVC100 \\ \midrule
Model & ResNet-34(Pretrained) & ResNet-34(Pretrained) &ResNet-34(Pretrained)  &  ResNet-34(Pretrained) \\
Image Size & 224 & 224 & 224 & 224 \\
Batch Size & 256 & 256 & 256 & 256 \\
Learning Rate & 0.01 & 0.01 & 0.01 & 0.01 \\
Weight Decay & 1e-5 & 1e-5 & 1e-5 & 1e-5 \\
LR Scheduler & Cosine & Cosine & Cosine & Cosine \\
Training Epochs & 500 & 500 & 500 & 500 \\
Classes & 200 & 196 & 120 & 100 \\
$\alpha$ & 2 & 2 & 2 & 2 \\
$\beta$ & 0-400 & 0-400 & 0-240 & 0-200 \\
\bottomrule
\end{tabular}%
\end{table}

\textbf{Discussion.}
We provide the detailed implementation of $\mathcal{L}_{sup}$ on PLL setting here. The average-based methods has been widely used in recent PLL works. Given the probability prediction $g(x)$, we can denote the classifier as $h(x)=\mathop{\arg\min}\limits_{i\in \mathcal{Y}}g_i(x)$. Besides, the candidate set $q$ can be derived from the weakly supervised information, thus, the family of average partial label losses can be formally writen as:
\begin{equation}
    \mathcal{L}_{avg}(h(x), q) = \frac{1}{|q|} \sum_{i \in q}\ell (h(x), i),
\end{equation}
where $|\cdot|$ represents the cardinality. There are different loss functions $\ell$ that can be chosen, and we use the commonly used cross-entropy loss to implement our framework. 
The supervised loss can then be obtained from the above average partial label loss.

\textbf{Baselines.} We compare the proposed framework with seven methods handling partial labeled data:
\begin{itemize}
    \item LWS \cite{wen2021leveraged}, which weights candidate labels and non-candidate labels through a leverage parameter.
    \item PRODEN \cite{pmlrlv2020}, which progressively identifies correct labels through the model output.
    \item CC \cite{partial2020lv}, which derives a classifier consistent risk estimator through a transition matrix.
    \item MSE \cite{feng2020learning}, which uses Mean Square Error loss to  learn with multiple complementary labels.
    \item RCR \cite{Wu2022RevisitingCR}, which performs supervised learning on non-candidate labels and employ
consistency regularization on candidate labels.
    \item PiCO \cite{wang2022pico}, which utilizes a contrastive loss term to enhance the model disambiguation ability.
\end{itemize}
We also include four IDPLL methods to compare:
\begin{itemize}
    \item IDGP \cite{qiao2023decompositional}, which proposes to explicitly model the instance-dependent generation process by decomposing it into separate generation steps.
    \item DIRK \cite{wu2024distilling}, which introduces a novel rectification process to ensure that candidate labels consistently receive higher confidence than non-candidates.
    \item NPELL \cite{he2023candidate}, which proposes a two-stage framework employing normalized entropy for selective disambiguation of well-disambiguated examples.
    \item CEL \cite{yang2025mixed}, which employs a class associative loss to enforce intra-candidate similarity and inter-set dissimilarity among the class-wise embeddings for each sample.
\end{itemize}

\textbf{Additional Experiments.}
To further investigate the performance of our proposed method on fine-grained image classification tasks, we conduct additional experiments using CIFAR-100 with hierarchical labels (CIFAR-100-H) \cite{wang2022pico}, where candidate labels are generated from the 20 superclasses in CIFAR-100. Furthermore, we extend our experiments on CUB-200, as presented in the main paper, to include a wider range of partial label ratios, specifically ${0.01, 0.05, 0.1}$. We also extend our experiments on IDPLL setting. Following previous work \cite{wu2024distilling,yang2025mixed}, we conduct experiments on four fine-grained datasets: CUB-200 \cite{WahCUB_200_2011}, CARS-196 \cite{cars}, DOGS-120 \cite{dogs}, and FGVC-100 \cite{fgvc}, and  we employed the IDPLL noisy label generation method proposed by VALEN \cite{xu2021} to generate instance-dependent noisy labels which is the standard experimental protocol in IDPLL settings.

\begin{table*}[htbp]
\centering
\caption{Comparisons with each methods on simulated PLL datasets. Each runs has been repeated 3 times with different randomly-
generated partial labels and we report the mean and std values of last 5 epochs.}
\label{tab:main-PLL-2}

\begin{tabular}{@{}l|ccc|ccc@{}}
\toprule
Dataset & \multicolumn{3}{c|}{CIFAR-100-H}               & \multicolumn{3}{c}{CUB-200}   \\ \midrule
Partial Ratio                         & 0.1     & 0.5     & 0.8    & 0.01   & 0.05    & 0.1        \\ \midrule
PiCO                       & 76.55\scriptsize{±0.68}    & 74.98\scriptsize{±0.42} & 66.38\scriptsize{±0.24}   
& 74.14\scriptsize{±0.28}  & 72.12\scriptsize{±0.74}      & \textbf{62.02\scriptsize{±1.16}}     \\
LWS                  & 63.88\scriptsize{±0.10}     &  59.37\scriptsize{±0.25}  &  40.33\scriptsize{±0.19}   & 73.74\scriptsize{±0.23}   & 39.74\scriptsize{±0.43}   &  12.30\scriptsize{±0.77}        \\
PRODEN            & 67.42\scriptsize{±0.91}    & 64.89\scriptsize{±0.75}  & 50.41\scriptsize{±0.38}  & 72.34\scriptsize{±0.04}  & 62.56\scriptsize{±0.10} &  35.89\scriptsize{±0.05}      \\
CC                    &  58.11\scriptsize{±0.46}    & 56.44\scriptsize{±0.37}  & 30.11\scriptsize{±0.48}  & 56.63\scriptsize{±0.01} & 55.61\scriptsize{±0.02} &  17.01\scriptsize{±1.44}      \\
MSE &  57.88\scriptsize{±0.85}          & 51.13\scriptsize{±0.32}       & 28.33\scriptsize{±0.44}         & 61.12\scriptsize{±0.51}    & 22.07\scriptsize{±2.36}         & 11.40\scriptsize{±2.42}   \\
GFWS &  77.10\scriptsize{±0.18} & 76.08\scriptsize{±0.25} & 61.35\scriptsize{±0.71} & 73.19\scriptsize{±0.42} & 70.77\scriptsize{±0.20} &   47.44\scriptsize{±0.15}    \\
RCR & 77.38\scriptsize{±0.33} & 75.78\scriptsize{±0.25} & 65.31\scriptsize{±0.19} & - & - & -  \\
\midrule
\cellcolor{Gray}WSC (Ours) & \cellcolor{Gray}\textbf{78.31\scriptsize{±0.17}} & \cellcolor{Gray}\textbf{76.67\scriptsize{±0.11}} & \cellcolor{Gray}\textbf{66.77\scriptsize{±0.05}} & \cellcolor{Gray}\textbf{76.88\scriptsize{±0.39}} &  \cellcolor{Gray}\textbf{ 74.55\scriptsize{±0.17}}  & \cellcolor{Gray} 56.13\scriptsize{±0.34}   \\  \bottomrule
\end{tabular}

\vspace{-0.1in}
\end{table*}

\begin{table*}
\centering
\caption{Comparisons with each methods on IDPLL settings with four fine-grained datasets. Each runs has been repeated 3 times with different randomly-generated partial labels and we report the mean and std values of last 5 epochs.}
\label{tab:main-PLL-3}
\begin{tabular}{@{}l|c|c|c|c|c|c|c@{}}
\toprule
Dataset         & \cellcolor{Gray} WSC (Ours) & CEL & DIRK & NEPLL & IDGP & PiCO & PRODEN \\ \midrule
CUB-200         & \cellcolor{Gray}\textbf{68.90\scriptsize{±0.27}} & 68.60\scriptsize{±0.10} & 66.60\scriptsize{±1.07} & 62.88\scriptsize{±1.66} & 58.16\scriptsize{±0.58} & 58.56\scriptsize{±0.90} & 65.05\scriptsize{±0.14} \\
CARS-196        & \cellcolor{Gray}\textbf{87.88\scriptsize{±1.18}} & 86.22\scriptsize{±0.08} & 85.31\scriptsize{±0.77} & 85.05\scriptsize{±0.17} & 79.56\scriptsize{±0.46} & 70.15\scriptsize{±1.63} & 83.35\scriptsize{±0.05} \\
DOGS-120        & \cellcolor{Gray}\textbf{79.75\scriptsize{±0.32}} & 78.18\scriptsize{±0.12} & 75.97\scriptsize{±0.29} & 74.84\scriptsize{±0.09} & 66.79\scriptsize{±0.38} & 67.80\scriptsize{±0.06} & 70.94\scriptsize{±0.43} \\
FGVC-100        & \cellcolor{Gray}77.80\scriptsize{±0.39} & \textbf{78.36\scriptsize{±0.19}} & 76.86\scriptsize{±3.54} & 75.36\scriptsize{±0.59} & 72.48\scriptsize{±0.86} & 63.52\scriptsize{±0.94} & 69.34\scriptsize{±0.39} \\ \bottomrule
\end{tabular}
\end{table*}

Our method outperforms other methods except the experiment on CUB-200 with 0.1 partial ratio as shown in Table \ref{tab:main-PLL-2}. 
The improvement can be attributed to PiCO's class prototype-based pseudo-labeling mechanism for disambiguation, a strategy that has proven effective in several fine-grained classification tasks. Moreover, our approach avoids the need for additional complex techniques. By incorporating our contrastive loss into the training objective, we achieve an \textbf{8.69\%} improvement over GFWS in this setting, highlighting the potential for even greater performance gains. Similarity, our WSC method demonstrates strong performance in the instance-dependent partial label learning (IDPLL) setting as shown in Table \ref{tab:main-PLL-3} without introducing any complex modules or additional technical components. The results further validate that WSC can effectively address the challenges of instance-dependent label ambiguity while maintaining a straightforward implementation.

\begin{algorithm}[tb]
   \caption{batch data estimation of $\mathcal{L}_{wsc}$ under general scenario}
   \label{alg:bb2}
\begin{algorithmic}
   \STATE {\bfseries Input:} features of two augmentation view of batch data with corresponding  weakly-supervised information $\boldsymbol{X}_{Q}^{1}, \boldsymbol{X}_{Q}^{2}  \in \mathbb{R}^{B_{Q} \times d}, Q \in \mathcal{Q}^{B_{Q}}$, features of two augmentation view of batch data without any supervised information $\boldsymbol{X}_{U}^{1}, \boldsymbol{X}_{U}^{2} \in \mathbb{R}^{B_{U} \times d}$, $\boldsymbol{S}$ in Proposition. \ref{w-lapp}, class prior $\P(\boldsymbol{y}) \in \mathbb{R}^{c \times 1}$,  proportional coefficients $\alpha, \beta$
   \STATE {\bfseries Output:} batch data estimated loss $\widehat{\mathcal{L}}_{wsc}$
   \STATE Compute $\boldsymbol{S}(\boldsymbol{X}) \in \mathbb{R}^{c \times B_{Q}}, \boldsymbol{S}(\boldsymbol{X})_{:, x} = \boldsymbol{S}(x)_{:, q}$
   \STATE Compute $\boldsymbol{S'}(\boldsymbol{X}) \in \mathbb{R}^{1\times B_{Q}}, \boldsymbol{S'}(\boldsymbol{X}) = \P(\boldsymbol{y})^{T} \boldsymbol{S}(\boldsymbol{X})$
   \STATE Compute $\widehat{\mathcal{L}}_{1} = \frac{\Tr\left[\boldsymbol{X}_{Q}^{1} (\boldsymbol{X}_{Q}^{2})^{T}\right] + Tr\left[\boldsymbol{X}_{U}^{1} (\boldsymbol{X}_{U}^{2})^{T}\right]}{B_{Q} + B_{U}}$
   \STATE Compute $\widehat{\mathcal{L}}_{2} = \frac{\left\|\left(\boldsymbol{S}(\boldsymbol{X})^{T} \boldsymbol{S}(\boldsymbol{X})\right) \otimes  \left( \boldsymbol{X}_{Q}^{1} (\boldsymbol{X}_{Q}^{2})^{T} \right) \right\|_{1}}{B_{Q}^{2}}$
   \STATE Compute $\widehat{\mathcal{L}}_{3} = \frac{\left \| \boldsymbol{X}_{Q}^{1} (\boldsymbol{X}_{Q}^{2})^{T} \right\|_{2}^{2} + \left\| \boldsymbol{X}_{U}^{1} (\boldsymbol{X}_{U}^{2})^{T} \right\|_{2}^{2}}{B_{Q}^{2} + B_{U}^{2}}$
   \STATE Compute $\widehat{\mathcal{L}}_{4} = \frac{\left\|\sqrt{\boldsymbol{S'}(\boldsymbol{X})^{T} \boldsymbol{S'} (\boldsymbol{X})} \otimes \left(\boldsymbol{X}_{Q}^{1} (\boldsymbol{X}_{Q}^{2})^{T} \right) \right\|_{2}^{2} }{B_{Q}^{2}}$
   \STATE Take $\boldsymbol{X}_{Q} = [ \boldsymbol{X}_{Q}^{1} ; \boldsymbol{X}_{Q}^{2}] , \boldsymbol{X}_{U} = [\boldsymbol{X}_{U}^{1} ; \boldsymbol{X}_{U}^{2}], \boldsymbol{S'}(\boldsymbol{X}) = [\boldsymbol{S'}(\boldsymbol{X}), \boldsymbol{S'}(\boldsymbol{X})]$
   \STATE Compute $\widehat{\mathcal{L}}_{5} = \frac{\left\|\sqrt{\textrm{Diag}\left(\boldsymbol{S'}(\boldsymbol{X})\right)}  \boldsymbol{X}_{Q} \boldsymbol{X}_{U}^{T} \right\|_{2}^{2}}{4B_{Q} \times B_{U}}$.
   \STATE \textbf{Return:} $\widehat{\mathcal{L}}_{wsc}(\boldsymbol{X}_{Q}^{1}, \boldsymbol{X}_{Q}^{2}, \boldsymbol{X}_{U}^{1}, \boldsymbol{X}_{U}^{2}, \boldsymbol{S}, Q) = -2\alpha \widehat{\mathcal{L}}_{1} -2\beta\widehat{\mathcal{L}}_{2} + \alpha^{2} \widehat{\mathcal{L}}_{3} + \beta^{2} \widehat{\mathcal{L}}_{4} + 2\alpha\beta \widehat{\mathcal{L}}_{5}$  
\end{algorithmic}
\end{algorithm}

\subsection{Ablation Study}
We conduct ablation experiments on CIFAR-100, as shown in Table \ref{ablation}. The results demonstrate that the proposed WSC loss consistently improves performance, especially under high noise rates, where it increases performance by \textbf{16.22\%} and \textbf{8.99\%}, respectively, further validating the effectiveness of the proposed method.
\begin{table*}[ht]
\caption{Ablation studies of our proposed algorithm on CIFAR-100 with different ratio of noisy label and partial label.}
\label{ablation}
\centering
\begin{tabular}{@{}lcc|ccc|ccc@{}}
\toprule
  \multicolumn{3}{c|}{Components}& \multicolumn{3}{c|}{Noisy Label}  & \multicolumn{3}{c}{Partial Label} \\
\midrule
 \multicolumn{1}{c}{Supervised Loss} & Consistency Regularizer & WSC Loss & 0.5 & 0.8 & 0.9 & 0.1 & 0.2 & 0.3 \\
\midrule
 \multicolumn{1}{c}{$\checkmark$}  &  &   & 53.89 & 41.35 & 20.11 & 61.50 & 55.22 & 35.44 \\
 \multicolumn{1}{c}{$\checkmark$}  & $\checkmark$ & & 75.43 & 66.40 & 45.10 & 75.80 & 73.12 & 60.16 \\
  \multicolumn{1}{c}{$\checkmark$}  & $\checkmark$ &$\checkmark$& 77.51 & 71.92 & 61.32 & 77.26 & 75.13 & 69.15 \\
\bottomrule
\end{tabular}

\vspace{-1em}
\end{table*}

\section{Details of Construction of Semantic Similarity under Different Scenarios}
\label{ACS}
In this section, we will introduce more details respectively regarding the construction of $\widehat{\boldsymbol{S}}$ in the scenarios of noisy label learning and partial label learning.

\subsection{Details of Construction of Semantic Similarity under Noisy Label Setting}
\textbf{Using Environment Information.} For instance-independent noisy label setting, there exists a $\boldsymbol{T}$ satisfies $\boldsymbol{T} = \boldsymbol{T}(x)$ holding almost everywhere in $\mathcal{X}$. Hence, inspired by our Proposition \ref{p33}, in this setting, we can use $\widehat{\boldsymbol{S}}(x) = (\widehat{\boldsymbol{T}})^{-1}$ for any $x \in \mathcal{X}$, where $\widehat{\boldsymbol{T}}$ is estimated noisy transition matrix. Quite a number of studies have already investigated how to estimate the noise transition matrix \cite{liu2016, patrini2017, xia2019t_revision, lipro2021, pmlr-v139-zhang21n, Lin2023ROBOT}. These methods are usually divided into two categories: anchor-based methods and anchor-free methods. The former \cite{liu2016, patrini2017, xia2019t_revision} usually fits the noise posterior directly, and then estimates the noise matrix by this estimated noise posterior probability on the data of reliable anchor points. In recent years, it is the latter \cite{lipro2021, pmlr-v139-zhang21n, Lin2023ROBOT} that has achieved better results. They obtain the estimated noise matrix by searching for the matrix that minimizes a specific metric within the matrix family that can linearly represent all noise posteriors. Specifically, \citealt{lipro2021} uses volume of estimated noisy matrix as this specified metric and \citealt{pmlr-v139-zhang21n} uses total variation as this specified metric. \citealt{Lin2023ROBOT} additionally adopts a bi-level optimization to increase the robustness against the error of the estimation of the noise posterior and  thus achieving better performance. All above methods can be used to construct $\widehat{\boldsymbol{S}}$.

\textbf{Using Both Environment and Samples Information.} However,  the aforementioned way of constructing semantic similarity only takes into account the weakly-supervised information while ignoring the information of the samples themselves. Therefore, even if we obtain a completely accurate noise matrix through estimation, such a construction method will still add a variance term to the learning error ($\sup_{x \in \mathcal{X}} \left\|\widehat{\boldsymbol{S}}(x)^{T} \widehat{\boldsymbol{S}}(x)  \right\|_{\infty}$ in Formula \ref{ferror}), thus resulting in sub-optimal performance in practice. Recalling Bayes theorem:
\begin{equation}
\P(y \mid x) = \sum_{q \in \mathcal{Q}} \P(y, q \mid x) = \sum_{q \in \mathcal{Q}} \P(y \mid x, q) \P(q \mid x).
\end{equation}
Thus we can take $\widehat{\boldsymbol{S}}(x)_{y, q} = \widehat{\P}(y \mid x, q)$ where $\widehat{\P}(y \mid x, q)$ is estimation posterior with corresponding weakly-supervised information. With a estimated noisy transition matrix $\widehat{\boldsymbol{T}}$, we can estimate $\widehat{\P}(y \mid x, q)$ as follows:
\begin{equation}
\widehat{\P}(y \mid x, q) \propto  \widehat{\P}(y, q \mid x) =  \widehat{\P}(y \mid x) \widehat{\boldsymbol{T}}_{q, y} = g(\mathcal{A}_{w}(x)) \widehat{\boldsymbol{T}}_{q, y},
\end{equation}
where $g(\mathcal{A}_{w}(x))$ denotes posterior probability predicted by the current neural network.

Such a construction method can be regarded as constructing semantic similarity by integrating the information of the current neural network, the sample information and the estimated environmental information. Therefore, better performance has been achieved in practice.

\textbf{Using Samples Information.} For instance-dependent setting which we do not assume that all samples share the same noise transition matrix, the methods mentioned above become invalid. In such setting, we are unable to utilize the environmental information and can only rely on the properties of the neural network itself as well as the sample information for estimation $\widehat{\P}(y \mid x, q)$.  This type of method has also been widely discussed in noisy label learning \cite{Liu2020ELR, Li2022SelCL, 2021_CVPR_MOIT, liu2022SOP, Xiao2023Promix, huang2023twin, qiao2024ularef} and has achieved higher performance in practice.  The main idea of this method can be summarized by the following equation:
\begin{equation}
\widehat{\P}(y \mid x, q) =  \widehat{\P}(y \mid x, \widehat{y}) =   \left\{\begin{matrix} \mathbb{I}[\widehat{y}= y ]  \qquad \quad \ x  \in D_{r}
 \\
g(\mathcal{A}_{w}(x))_{y} \qquad x \notin D_{r}
\end{matrix}\right.
,
\end{equation}
where the $\widehat{y}$ denotes corresponding noisy label, and $D_{r}$ is a  ``reliable'' set of noisy labels which are screened out using a specific method.

Specifically, such screening methods are usually carried out based on empirical observation that the neural networks tend to learn easy (correct) samples first, and then start to fit onto the hard (corrupt) samples in the later phase of training, hence the samples with small loss values are presumed to be reliable, while those with large loss values are not. The specific implementation of screening is diverse. Any screening method can be used in our framework to construct the corresponding $\widehat{\boldsymbol{S}}$. In this paper, we will not describe these methods in more details.

\textbf{Construct of Main Experiments.} To avoid introducing more details into the methods in the main paper, the $\boldsymbol{\widehat{S}}$ selected in the main experiments of this paper only adopts the simplest form and does not use any screening mechanisms that have been proven to be effective. We simply take $\boldsymbol{\widehat{S}}$ as $\boldsymbol{\widehat{S}}(x)_{y, q} =  \widehat{\P}(y \mid x, q) = g(\mathcal{A}_{w}(x))_{y} $ for any $x$. More experiment of using different $\widehat{\boldsymbol{S}}$ can be referred to in the Table \ref{tab:CS}. 
For the sake of simplicity in this experiment,
the first way to construct S is to use a fixed real noise matrix.
The second way to construct $\boldsymbol{S}$ is the same as the method used in our main paper.
In the third approach, we construct $\widehat{\P}(y \mid x, q)$ as a convex combination of the original noisy label and the model predictions, with the weights predicted by a two-component Gaussian mixture model. This method is a commonly used label bootstrap technique, as demonstrated by \cite{huang2023twin}.
The second method achieves the best results, as it leverages both the overall environmental information and the sample information more effectively. Additionally, due to the accuracy of clean sample screening in the early learning stage, the third method achieves performance similar to the second method. In contrast, using the inverse of the noise transition matrix alone, even with the true noise transition matrix, leads to suboptimal results because it neglects the sample information.

\subsection{Details of Construction of Semantic Similarity under Partial Label Setting}
\textbf{Using Environment Information.} For instance-independent  partial label setting,  unlike the situation of noisy label, it is generally quite difficult to directly estimate the transition matrix $\boldsymbol{T}$ due to the curse of dimensionality.  However, when using the commonly assumption that labels are independent adopted into the candidate set \cite{partial2011, pmlrlv2020, partial2020lv}, the problem can be simplified. We formally present this assumption as follows:

\begin{assumption}
(Selected candidate at uniform and independent (SCUI) assumption) The partial label problem satisfies the SCUI assumption if its generation process meets follow equation:
\begin{equation}
\begin{aligned}
\P(q \mid x, y^{i}) & = \mathbb{I}[y^{i} \in q] \prod _{y\in \mathcal{Y}, y\neq y^{i}}\P(y \notin q \mid x, y^{i}) ^{\mathbb{I} [y \notin q]} \P(y \in q \mid x, y^{i})^{\mathbb{I} [y \in q]}
\\ & = \mathbb{I}[y^{i} \in q] \prod _{y\in \mathcal{Y}, y\neq y^{i}}(1 - \sigma_{y}) ^{\mathbb{I} [y \notin q]}\sigma_{y}^{\mathbb{I} [y \in q]},
\end{aligned}
\end{equation}
where $\P(y \in q \mid x, y^{i}) = \mu_{y}$ for any $y \neq y^{i}$.
\end{assumption}

Giving this assumption, we can decompose the partial label problem into $c$ times independent binary classification noisy label problems. When each $\sigma_{y}$ in known, it is easy to see follow $\boldsymbol{S}$ will satisfy $\boldsymbol{S} \boldsymbol{T} = \boldsymbol{I}_{c \times c}$:
\begin{equation}
\label{eq116}
   \boldsymbol{S}_{y,q} = \left\{\begin{matrix} 
\ \ 1 \qquad  y \in q
 \\
  \frac{-\sigma_{y}}{1 - \sigma_{y}} \quad y \notin q
\end{matrix}\right.
,
\end{equation}
Hence, if we can get estimated  $\widehat{\sigma}_{y}$ for any $y \in \mathcal{Y}$ , we can take $\widehat{\boldsymbol{S}}_{y, q} $ through Equation \ref{eq116}.

Now we consider how to estimate this $\sigma_{y}$. We assume a uniform class distribution and encode $q = \boldsymbol{l} \in [0, 1]^{c}$, where $\boldsymbol{l}_{i} = 1$ indicates $y^{i} \in q$. The following equation holds:
\begin{equation}
\label{eq117}
\begin{aligned}
\P(x \mid l_{i} = 1) &= \frac{\P(x, Y = y^{i}, l^{i} = 1 ) + \P(x, Y \neq y^{i}, l^{i}=1)}{\P(l_{i}=1)}
\\& = \frac{(1/c) \P(x \mid Y=y^{i}) + (1- 1/c)\sigma_{i} \P(x \mid Y \neq y^{i})}{(1/c) + (1 -1/c)\sigma_{i}}
\\ & = \frac{(1/c) \P(x \mid Y=y^{i}) + (1- 1/c)\sigma_{i} \P(x \mid l^{i}=0)}{(1/c) + (1 -1/c)\sigma_{i}}
\\ & = (1 - \theta_{i}) \P(x \mid Y=y^{i}) + \theta_{i}  \P(x \mid l^{i}=0),
\end{aligned}
\end{equation}
where $\theta_{i} = \frac{(c-1)\sigma_{y}}{1 + (c-1)\sigma_{y}}$.

Using Equation \ref{eq117}, we transform the problem of estimating $\sigma_{i}$ into the problem of estimating $\theta_{i}$. The later is a mixture proportion estimation problem to estimate $\theta_{i}$ given samples  sampled from the  $\P(x \mid l_{i} = 1)$ and $\P(x \mid l_{i} = 0)$. Under many assumptions to ensure identifiability, including the irreducibility assumption \cite{Scott2013ClassificationWA, garg2021PUlearning},  the anchor point assumption \cite{pmlr-v38-scott15, liu2016}, the separability assumption \cite{pmlr-v48-ramaswamy16},  $\theta_{i}$ can been estimated through many off-the-shelf methods \cite{Scott2013ClassificationWA, pmlr-v38-scott15, pmlr-v48-ramaswamy16, garg2021PUlearning}. 

\textbf{Using Both Environment and Samples Information.} 
Similar to the noisy label setting, the above method for constructing semantic similarity overlooks the information contained in the samples themselves. To address this, we can incorporate sample information by applying Bayes' theorem. The estimation of $\widehat{\P}(y \mid x, q)$ can then be expressed as follows:
\begin{equation}
\label{eq118}
\widehat{\P} (y \mid x, q) \propto \widehat{\P} (y \mid x) \P (q \mid x, y) = \widehat{\P} (y \mid x) \frac{\prod_{y' \in q} \sigma_{y'} \prod_{y' \notin q}(1- \sigma_{y'})}{\sigma_{y}} \mathbb{I}[y \in q]\propto \frac{g(\mathcal{A}_{w}(x))_{y}}{\sigma_{y}}\mathbb{I} [y \in q].
\end{equation}

Such a construction method can be regarded as constructing semantic similarity by integrating the information of the current neural network, the sample information and the estimated environmental information. Therefore, better performance has been achieved in practice. When $\sigma_{y}$ is unknown, we can simply assume that $\sigma_{y} = \sigma$ for any $y \in \mathcal{Y}$, Equation \ref{eq118} can still be used to construct semantic similarity.

\textbf{Using Samples Information.} In the instance-dependent partial label setting proposed in recent years \cite{xu2021, Qiao2022}, the SCUI assumption does not hold, the methods mentioned above become invalid. To estimate $\widehat{\P}(y \mid x, q)$ in this situation, \citealt{xu2021} proposes a variational label enhancement method which relies solely on current neural network. Additionally, \citealt{Qiao2022} parameterizes $\widehat{\P}(q \mid x,y)$ and uses maximum likelihood estimation to learn these parameters. Given this estimation $\widehat{\P}(q \mid x, q)$, we can also estimate $\widehat{\P}(y \mid x, q)$ using the Bayes theorem: $ \widehat{\P} (y \mid x, q) \propto \widehat{\P} (y \mid x) \widehat{\P} (q \mid x, y) $.

\begin{table*}[b]
\centering
\caption{Comparisons with each methods for constructing $\widehat{\boldsymbol{S}}$ on CIFAR-100 with different ratio of noisy label and partial label. We report the mean and std values of last 5 epochs. \textit{Env.}, \textit{Sap.} denote the environment information and sample information, respectively.}
\label{tab:CS}

\begin{tabular}{@{}l|ccc|ccc@{}}
\toprule
Type & \multicolumn{3}{c|}{Noisy Label}               & \multicolumn{3}{c}{Partial Label}   \\ \midrule
Ratio                         & 0.5     & 0.8     & 0.9    & 0.05   & 0.1    & 0.2        \\ \midrule
WSC w/ \textit{Env.}                       & 75.29\scriptsize{±0.15}    & 67.34\scriptsize{±0.34} & 55.01\scriptsize{±0.14}   
& 76.31\scriptsize{±0.25}  & 75.85\scriptsize{±0.54}      & 71.33\scriptsize{±1.33}    \\
WSC w/ \textit{Env.}  \&   \textit{Sap.}              & 77.51\scriptsize{±0.07}     &  71.92\scriptsize{±0.17}  &  61.32\scriptsize{±0.15}   & 77.88\scriptsize{±0.19}   & 77.26\scriptsize{±0.30}   &  75.13\scriptsize{±0.24}        \\
WSC w/o \textit{Env.}     & 76.83\scriptsize{±0.91}    & 71.18\scriptsize{±0.75}  & 59.58\scriptsize{±0.38}  & 78.01\scriptsize{±0.45}  & 76.88\scriptsize{±0.25} &  74.50\scriptsize{±0.25}      \\
\bottomrule
\end{tabular}
\vspace{-0.1in}
\end{table*}

\textbf{Construct of Main Experiments.} To avoid introducing more details into the methods in the main paper, the $\widehat{\boldsymbol{S}}$ selected in the main experiments of this paper only adopts the simplest form. We simply assume that $\sigma_{y} = \sigma$ for any $y \in \mathcal{Y}$ and apply Equation \ref{eq118}. Specifically, we take $\boldsymbol{\widehat{S}}(x)_{y, q} =  \widehat{\P}(y \mid x, q) = \mathbb{I} [y \in q] \frac{g(\mathcal{A}_{w}(x))_{y}}{\sum_{y' \in q} g(\mathcal{A}_{w}(x))_{y'}} $ for any $x$. More experiments using different $\widehat{\boldsymbol{S}}$ can be found in Table \ref{tab:CS}. For the sake of simplicity in this experiment, the first way to construct $\boldsymbol{S}$ is to use a fixed real partial ratio. The second way to construct $\boldsymbol{S}$ is the same as the method used in our main paper. In the third approach, we construct $\widehat{\P} (y \mid x, q)$ by using variational label enhancement method proposed in \cite{xu2021}. The second method achieves the best results, as it leverages both the overall environmental information and the sample information.  Additionally, because \citealt{xu2021} also uses the sample information, hence achieves performance similar to the second method. In contrast, using the environment information alone, even with the true $\sigma$, leading to suboptimal results because it neglects the sample information.


\end{document}